\renewcommand*{\backrefalt}[4]{\ifcase #1 \footnotesize{(Not cited.)}\or        \footnotesize{(Cited on page~#2.)}\else      \footnotesize{(Cited on pages~#2.)}\fi}
 \newtheorem{theorem}{Theorem}
 \newtheorem{definition}{Definition}
 \newtheorem{lemma}{Lemma}
 \newtheorem{remark}{Remark}
\newtheorem{proposition}{Proposition}
\title{Finite Sample Identification of \\ Partially Observed Bilinear Dynamical Systems}
\author{Yahya Sattar$^\star$ \\ Cornell \\ ysattar@cornell.edu \and  Yassir Jedra$^\star$ \\ MIT  \\ jedra@mit.edu \and  Maryam Fazel \\ U Washington \\ mfazel@uw.edu \and Sarah Dean \\ Cornell  \\ sdean@cornell.edu}
\date{}
\renewcommand{\cite}{\citep}
\newtheorem{assumption}{Assumption}
\newenvironment{assumptionp}[1]{
  
  \assumptionalt
}{\endassumptionalt}
\newcommand{\poly}{\textup{poly}}
\newcommand{\bv}[1]{{\boldsymbol{#1}}}		\newcommand{\bvgrk}[1]{{\boldsymbol{#1}}}
\newcommand{\util}{\tilde{u}}
\newcommand{\utiltil}{\tilde{\util}}
\newcommand{\vutiltil}{\tilde{\vutil}}
\newcommand{\vb}{\bv{b}}
\newcommand{\vf}{\bv{f}}
\newcommand{\vq}{\bv{q}}
\newcommand{\vr}{\bv{r}}
\newcommand{\vu}{\bv{u}}
\newcommand{\vv}{\bv{v}}
\newcommand{\vw}{\bv{w}}
\newcommand{\vx}{\bv{x}}
\newcommand{\vy}{\bv{y}}
\newcommand{\vz}{\bv{z}}
\newcommand{\ub}{\bv{u}}
\newcommand{\wb}{\bv{w}}
\newcommand{\xb}{\bv{x}}
\newcommand{\yb}{\bv{y}}
\newcommand{\zb}{\bv{z}}
\newcommand{\vutil}{\tilde{\vu}}
\newcommand{\vwtil}{\tilde{\vw}}
\newcommand{\Ecal}{\mathcal{E}}
\newcommand{\Ncal}{\mathcal{N}}
\newcommand{\Ocal}{\mathcal{O}}
\newcommand{\Scal}{\mathcal{S}}
\newcommand{\tn}[1]{\|{#1}\|_{\ell_2}}
\newcommand{\vA}{\bv{A}}
\newcommand{\vB}{\bv{B}}
\newcommand{\vC}{\bv{C}}
\newcommand{\vD}{\bv{D}}
\newcommand{\vF}{\bv{F}}
\newcommand{\vG}{\bv{G}}
\newcommand{\vI}{\bv{I}}
\newcommand{\vM}{\bv{M}}
\newcommand{\vN}{\bv{N}}
\newcommand{\vQ}{\bv{Q}}
\newcommand{\vR}{\bv{R}}
\newcommand{\vV}{\bv{V}}
\newcommand{\vW}{\bv{W}}
\newcommand{\vX}{\bv{X}}
\newcommand{\Ab}{\bv{A}}
\newcommand{\Bb}{\bv{B}}
\newcommand{\Cb}{\bv{C}}
\newcommand{\Db}{\bv{D}}
\newcommand{\Fb}{\bv{F}}
\newcommand{\Gb}{\bv{G}}
\newcommand{\Mb}{\bv{M}}
\newcommand{\vGhat}{\hat{\bv{G}}}
\newcommand{\vUtil}{\tilde{\bv{U}}}
\newcommand{\veta}{\bvgrk{\eta}}
\newcommand{\vSigma}{\bvgrk{\Sigma}}
\newcommand{\nn}{\nonumber}
\newcommand{\mtx}[1]{\bm{#1}}
\newcommand{\E}{\operatorname{\mathbb{E}}}
\newcommand{\PP}{\operatorname{\mathbb{P}}}
\newcommand{\tf}[1]{\|{#1}\|_{F}}
\newcommand{\Iden}{{\mtx{I}}}
\renewcommand{\P}{\operatorname{\mathbb{P}}}
\newcommand{\leqsym}[1]{\stackrel{\text{(#1)}}{\leq}}
\newcommand{\eqsym}[1]{\stackrel{\text{(#1)}}{=}}
\newcommand{\norm}[1]{\|{#1} \|}
\newcommand{\R}{\mathbb{R}}				
\newcommand{\dm}[2]
{
	\IfStrEq{#2}{1}{\R^{#1}}{\R^{#1 \x #2}}
}
 			\newcommand{\tr}{\textup{\textbf{tr}}} 			 		 					\newcommand{\mat}{\textup{\textbf{mtx}}}			 	 	  				 				 											\newcommand{\distas}{\overset{\text{i.i.d.}}{\sim}}
\newcommand{\T}{\top}
\newcommand*{\x}{\mathsf{x}\mskip1mu}
\newcommand{\splitatcommas}[1]{\begingroup
	\begingroup\lccode`~=`, \lowercase{\endgroup
		\edef~{\mathchar\the\mathcode`, \penalty0 \noexpand\hspace{0pt plus .1em}}}\mathcode`,="8000 #1\endgroup
}
\newcommand{\Item}[1]{\ifx\relax#1\relax  \item \else \item[#1] \fi
	\abovedisplayskip=0pt\abovedisplayshortskip=0pt~\vspace*{-\baselineskip}
}
\newcommand{\raisemath}[1]{\mathpalette{\raisem@th{#1}}}\newcommand{\raisem@th}[3]{\raisebox{#1}{$#2#3$}}
\newcommand{\newcustomtheorem}[2]{\newenvironment{#1}[1]
	{\renewcommand\customgenericname{#2}\renewcommand\theinnercustomgeneric{##1}\innercustomgeneric
	}
	{\endinnercustomgeneric}
}
\newcommand{\vertiii}[1]{{\vert\kern-0.25ex\vert\kern-0.25ex\vert #1 \vert\kern-0.25ex\vert\kern-0.25ex\vert}}
\newcommand{\beq}{\begin{equation}}
	\newcommand{\eeq}{\end{equation}}
\newcommand{\Sc}{\mathcal{S}}
\newcommand{\ubb}{\bar{\vu}}
\newcommand{\bgl}{{~\big |~}}
\definecolor{emmanuel}{RGB}{255,127,0}
\renewcommand{\P}{\operatorname{\mathbb{P}}}
\numberwithin{equation}{section}
\newcommand{\EE}{\mathbb{E}}
\newcommand{\op}{\textup{op}}
\newcommand{\cF}{\mathcal{F}}
\newcommand{\cE}{\mathcal{E}}
\DeclareMathOperator*{\argmin}{argmin}
\newcommand{\RR}{\mathbb{R}}
\newcommand{\cD}{\mathcal{D}}
\newcommand{\cS}{\mathcal{S}}
\newcommand{\cM}{\mathcal{M}}
\newcommand{\cN}{\mathcal{N}}
\newcommand{\cI}{\mathcal{I}}
\newcommand{\cT}{\mathcal{T}}
\newcommand{\cO}{\mathcal{O}}
\newcommand{\cU}{\mathcal{U}}
\newcommand{\vepsilon}{\boldsymbol{\epsilon}}
\begin{document}

\maketitle

\def\thefootnote{$\star$}\footnotetext{equal contribution}\def\thefootnote{\arabic{footnote}}
\setlength{\parindent}{0cm}

\begin{abstract}We consider the problem of learning a realization of a partially observed bilinear dynamical system~(BLDS) from noisy input-output data.
Given a single trajectory of input-output samples, we provide a finite time analysis for learning the system's Markov-like parameters, from which a balanced realization of the bilinear system can be obtained.
Our bilinear system identification algorithm learns the system's Markov-like parameters by regressing the outputs to highly correlated, nonlinear, and heavy-tailed covariates.
Moreover, the stability of BLDS depends on the sequence of inputs used to excite the system.
These properties, unique to partially observed bilinear dynamical systems, pose significant challenges to the analysis of our algorithm for learning the unknown dynamics.
We address these challenges and provide 
high probability error bounds on our identification algorithm
under a uniform stability assumption.
Our analysis provides insights into system theoretic quantities that affect learning accuracy and sample complexity.
Lastly, we perform numerical experiments with synthetic data to reinforce these insights.
\end{abstract}

\section{Introduction}\label{sec:intro} 
Learning the dynamical behavior of nonlinear systems is an important and challenging problem with applications ranging from engineering, physics, biology \cite{brunton2016discovering,strogatz2018nonlinear,brunton2022data}, to language modeling, and sequence predictions \cite{kombrink2011recurrent,bahdanau2014neural}.
Bilinear dynamical systems (BLDS) constitute a simple yet powerful class of nonlinear systems naturally arising in a variety of domains from engineering, biology \cite{bilinearbook}, quantum mechanical processes \cite{pardalos2010optimization} to recommendation systems \cite{koren2021advances}.
Moreover, bilinear systems approximate a much broader class of nonlinear systems via Carleman linearization \cite{kowalski1991nonlinear} or Koopman canonical transform \cite{goswami2017global, bruder2021advantages} of control-affine nonlinear systems \cite{svoronos1980bilinear,Lo1975bilinear}. 
Therefore, learning the dynamics of BLDS from input-output data is an important and useful problem which has attracted significant interest, both in the case of continuous-time \cite{juang2005continuous, sontag2009input} and discrete-time \cite{berk2012identification}. 
However, theoretical guarantees of learning BLDS from a single trajectory of noisy input-output data is lacking, with current guarantees existing only for bilinear systems with complete state observations \cite{sattar2022finite,chatzikiriakos2024end}. 
Our goal in this paper is to provide an algorithm and theoretical guarantees for learning partially observed BLDS from noisy input-output data sampled from a single trajectory. 
We achieve this by learning the system's \emph{Markov-like parameters}, which uniquely identify the end-to-end behavior of the system, and can be used to recover the state-space matrices up to a similarity transform using existing algorithms \cite{ho1966effective,sarkar2019nonparametric,oymak2021revisiting}.

Our goal relates to the problem of learning linear dynamical system~(LDS) from partial state-observations. 
In this setting, 
a line of recent work focuses on finite sample error bounds. \cite{tu2017non,oymak2021revisiting,simchowitz2019learning,sun2022finite,djehiche2022efficient,tsiamis2019finite,sarkar2019finite,bakshi2023new} study methods which use least squares regression to learn the system's Markov parameters or Hankel matrix, which can then be used to recover the state-space matrices (up to a similarity transform) using classic Ho-Kalman Algorithm \cite{ho1966effective}. 
\cite{sun2020finite,sun2022finite, fazel2013hankel} study system identification with Hankel nuclear norm regularization. 
Other works have focused on learning to predict the behavior of partially observed LDS via gradient descent \cite {hardt2018gradient} and spectral filtering \cite{hazan2017learning}.
The linear setting has been extended to (decode-able) nonlinear \cite{mhammedi2020learning} observations and bilinear \cite{sattar2024learning} partial observations.
However, to the best of our knowledge, sample complexity and non-asymptotic analysis for partially observed nonlinear dynamical systems (including BLDS) have not been considered before.

Non-asymptotic learning of (non)linear dynamical systems with complete state observations has also attracted significant attention recently. 
Most of the advancements in this direction are focused on linear systems \cite{faradonbeh2018finite,dean2018regret,simchowitz2018learning, dean2019sample,fattahi2019learning,sarkar2019finite,sarkar2019near,lale2020logarithmic,jedra2020finite,wagenmaker2020active}, where an optimal error rate is achieved by using either mixing-time \cite{yu1994rates} or martingale-based arguments \cite{abbasi2011improved}.
These results have been extended to switched linear dynamical systems \cite{sarkar2019data,sattar2021identification,du2022data,sayedana2024strong}, as well as  certain classes of nonlinear dynamical systems, including state transition models with nonlinear activation functions \cite{oymak2019stochastic,bahmani2019convex,mhammedi2020learning,sattar2020non,jain2021near}, nonlinear features \cite{mania2022active,musavi2024identification}, or from a nonparametric perspective \cite{taylor2021towards,ziemann2022single,kazemian2024random}. 
However, the problem of learning nonlinear dynamical systems from partial observations of a single trajectory is still an open problem.
In this paper, we take a step towards addressing this problem by answering the following question:
\begin{center}
    \emph{Can we learn a partially observed bilinear dynamical system from a single trajectory?}
\end{center}

\noindent The main difficulty arises from the fact that the hidden states evolve according to a bilinear state equation, for which the analysis tools developed for learning partially observed LDS  do not work.
Moreover, the stability of a BLDS explicitly depends on the input sequence, which is in stark contrast to the deterministic notion of stability in the case of LDS. 

\medskip

\noindent {\bf Contributions:} We overcome the aforementioned challenges and provide theoretical guarantees for learning partially observed bilinear dynamical systems. 
We make the following contributions towards bilinear system identification:
\begin{itemize}[leftmargin=*,noitemsep,topsep=0pt]
\item  {\bf Sample complexity and error bounds:}
We provide the first sample complexity analysis and finite-sample error bounds for learning a realization of a partially observed BLDS from a single trajectory of input-output data. 
Unlike LDS, the output of a bilinear system maps to the history of inputs via nonlinear features~(obtained by the Kronecker products of past inputs) and a sequence of Markov-like parameters with exponentially increasing length. 
Our main result (Theorem~\ref{thm:main}) provides $\tilde{\Ocal}(1/\sqrt{T})$ error rate for learning these parameters from a single trajectory of length $T$. 
Our sample complexity bound 
$\tilde{\Omega}\left((p+1)^{L+1}\right)$ grows exponentially with the history length $L$~(where $p$ is the input dimension), which correctly captures the dependence on the number of unknown Markov-like parameters~(growing exponentially with $L$). 
For stable bilinear systems~(defined in \textsection\ref{sec:stability}), this can be mitigated by choosing a smaller history of inputs~(see \textsection~\ref{sec:experiments}).

\item  {\bf Input choice and stability:}
Stability of BLDS is typically input dependent. We work with a novel notion of stability~(Definition~\ref{def:stability}) that generalizes the classic notion of stability for LDS to the BLDS.
We also define a notion of stability radius which governs our choice of inputs.

\item  {\bf Persistence of excitation:}
Of independent interest, we establish the persistence of excitation (Theorem~\ref{thm:persistence}) for a broader class of inputs~(possibly heavy-tailed) satisfying a hyper-contractivity condition.
Our persistence of excitation result holds for nonlinear, correlated, and heavy-tailed covariates~(i.e., the input features).

\item {{\bf Numerical experiments:}}
Lastly, we perform experiments with synthetic data to verify our theoretical findings. Interestingly, our experiments show that exciting the system with inputs sampled uniformly at random from a sphere leads to better estimation of Markov-like parameters as compared to Gaussian inputs, empirically reinforcing our theory on choice of inputs.
\end{itemize}
The rest of the paper is organized as follows: \textsection\ref{sec:problem} sets up the problem and introduces the notion of stability. \textsection\ref{sec:markov_intro} provides our main result on learning Markov-like parameters of partially observed BLDS. 
\textsection\ref{sec:analysis} discusses our proof idea, and provides persistence of excitation result for a broader class of inputs.
Lastly, we perform numerical experiments in \textsection\ref{sec:experiments}, and conclude with a discussion of future directions in \textsection\ref{sec:conclusion}.

\noindent {\bf Notations:} We use boldface lowercase (uppercase) letters to denote vectors (matrices). $\rho(\vX)$, $\|\vX\|_\op$ and $\tf{\vX}$ denote the spectral radius, spectral norm and Frobenius norm of a matrix $\vX$, respectively. 
$\tn{\vv}$ denotes the Euclidean norm of a vector 
$\vv$, and $(\vv)_i$ denotes its $i$-th element. 
For a positive definite matrix $\vM \in \R^{d \times d}$, the Mahalanobis norm of a vector $\vv \in \R^d$ is given by $\norm{\vv}_{\vM} = \sqrt{\vv^\T \vM \vv}$.
For a sequence of $d \times d$ matrices $\Mb_{1}, \dots, \Mb_k$, we use the convention that $\prod_{i=1}^k \Mb_{i} = \Mb_1 \times \Mb_2 \times \cdots 
 \times \Mb_k$. 
We denote by $\Sc^{p-1}$, the centered unit sphere in $\R^p$. 
We denote by $a \vee b$, the maximum of two scalars $a$ and $b$. 
We use $\gtrsim$ and $\lesssim$ for inequalities that hold up to an absolute constant factor. $\tilde{\Ocal}(\cdot)$ and $\tilde{\Omega}(\cdot)$  are used to show the dependence on a specific quantity of interest~(up to constants and logarithmic factors). 
Finally, $\otimes$ denotes the Kronecker product. 

 \section{Preliminaries}\label{sec:problem}
\subsection{Problem Formulation}

Consider a partially observed bilinear dynamical system with the following state-space representation: for all $t \ge 0$, 
\vspace{-6pt}
\begin{equation}
\begin{aligned}\label{eqn:bilinear sys}
	\xb_{t+1} &= \Ab_0 \xb_t + \sum_{k=1}^{p} (\ub_t)_k \Ab_k \xb_t + \Bb\ub_t + \wb_{t},\\
	\vy_t &= \vC \xb_t + \vD \vu_t + \vz_t,
\end{aligned}
\end{equation}
where $\xb_t \in \R^n$, $\ub_t \in \R^p$, and $\vy_t \in \R^m$, $\vw_t \in \R^n$, and $\vz_t \in \R^m$ represent the hidden state, input, output, process noise, and measurement noise, respectively, at time $t$. Without loss of generality, we consider that $x_0=0$. The noise processes $\lbrace \vw_t \rbrace_{t\ge0}$ and $\lbrace \vz_t \rbrace_{t\ge 0}$ are assumed to be sequences of independent, zero-mean, $\sigma^2$-subgaussian random vectors taking values in $\R^n$ and $\R^{m}$, respectively, for some variance proxy parameter $\sigma > 0$. The matrices $\vA_0, \vA_1, \dots, \vA_p \in \RR^{n \times n}$, $\vB \in \RR^{n \times p}$, $\vC \in \RR^{m \times n}$, and $\vD \in \RR^{m \times p}$ represent the parameters that govern the evolution of the dynamical system and are a priori unknown. 

In this work, we wish to identify the unknown parameters of the system from a single trajectory of input-output samples $\lbrace (\vu_t, \vy_t) \rbrace_{t=1}^T$. To that end, we focus on the task of learning the so-called \emph{Markov-like parameters} (detailed in \textsection\ref{sec:markov_intro}) of the system. Once learned, these {Markov-like parameters} can be exploited using the classic Ho-Kalman algorithm to recover the unknown matrices of the system up to some similarity transform as will be described in \textsection\ref{sec:markov_intro}. Next, we  clarify our choice of inputs and discuss the required stability assumption.

\subsection{Input Choice \& Stability of Bilinear Dynamical Systems}\label{sec:stability}

Stability of bilinear dynamical systems is typically input dependent. To see that, we can unroll the state dynamics in \eqref{eqn:bilinear sys} to write: for all $t \ge 0$, 
\begin{align}\label{eqn:bilinear sys state}
        \vx_{t+1} & = \sum_{\ell = 0}^t \left(\prod_{k = 0}^{\ell-1} (\vu_{t-k} \circ \vA)\right) \vB \vu_{t-\ell} + \sum_{\ell = 0}^t \left(\prod_{k = 0}^{\ell-1} (\vu_{t-k} \circ \vA)\right)  \vw_{t-\ell}, 
\end{align}
where we define  $\vu_{t} \circ \vA := \Ab_0 + \sum_{k=1}^{p} (\ub_t)_k \Ab_k$ for the ease of notation. Then, observe that the products of matrices $\prod_{k = 0}^{\ell-1} (\vu_{t-k} \circ \vA)$ may grow exponentially in norm if we consistently choose large inputs. This is precisely why stability in bilinear dynamical systems is more challenging than other classes of systems such as linear dynamical systems or switched systems.

\medskip

Traditionally, notions like Mean Square Stability (MSS) have been considered to reason about the stability behavior of bilinear systems \citep{kubrusly1985mean, pardalos2010optimization, sattar2022finite}. Typically, these notions are asymptotic in nature, require distributional assumptions on the inputs,
permit diverging trajectories with nonzero probability, and may not allow us to obtain tight guarantees. We introduce an alternative notion of stability that naturally generalizes the classical notion of stability in standard LTI systems.

\paragraph{Uniform stability in bilinear dynamical systems:} First, let us recall that the \emph{joint spectral radius} of a set of matrices $\cM \subseteq \RR^{n \times n}$ can be defined as follows:  \begin{align}
    \rho(\cM)  := \lim_{k \to \infty}\sup_{\vM_{1}, \dots, \vM_k \in \cM}  \Vert \vM_{1} \vM_{2} \cdots \vM_{k}\Vert_\op^{1/k}.
\end{align}
For $\rho > 0$, we define the following quantity:
\begin{align}
    \phi(\cM, \rho) := \sup_{k \ge 1, \vM_1, \dots, \vM_k \in \cM} \frac{\Vert \vM_{1} \vM_{2} \cdots \vM_{k}\Vert_\op}{ \rho^k}.
\end{align}
The quantity $\phi(\cM, \rho)$ is defined in similar vein to that by \cite{mania2019certainty} for LDS, and it captures the transient behavior of a system with state transition matrices varying in $\cM$. Note that if $\rho(\cM) < 1$, then for any $\rho > \rho(\cM)$, the quantity $\phi(\cM, \rho)$ is finite. Now, given a set $\cU \subseteq \RR^{p}$, we denote  $\cU \circ \vA := \lbrace  \Ab_0 + \sum_{i=1}^p (\ub)_i \Ab_i: \ub \in \cU \rbrace$ and introduce the following definition of stability.
\begin{definition}[$( \cU, \kappa, \rho)$-uniform-stability]\label{def:stability} Let $\cU \subseteq \RR^p$, $\kappa \ge 1$, and $ 0< \rho < 1$. We say that a partially observed bilinear dynamical system (as defined in \eqref{eqn:bilinear sys}) with state-transition matrices $\vA := \lbrace \vA_0, \dots, \vA_p \rbrace$ is $( \cU, \kappa, \rho)$-uniformly-stable, if the joint spectral radius of the set $\cU \circ \vA$ satisfies: (i) $\rho(\cU \circ \vA) < \rho< 1$; and (ii) $\phi(\cU\circ \vA, \rho)  \le \kappa $.
\end{definition}
Observe that if there exists a nonempty and bounded set $\cU \subseteq \RR^p$ such that $\rho(\cU \circ \vA)< 1$, then for any $\rho(\cU \circ \vA) < \rho < 1$, the system is $(\cU, \kappa, \rho)$-uniformly-stable with $\kappa = \phi(\cU \circ \vA, \rho) \vee 1$. We provide detailed discussion on this claim in Appendix \ref{app:stability}. Furthermore, we note that Definition \ref{def:stability} naturally generalizes that introduced by \cite{monfared2023stabilization}. Indeed, there the authors assume that there exists $\vu^\star$ such that $\rho(\vu^\star \circ \vA) < 1$. This is equivalent to assuming that their system is $(\lbrace \vu^\star\rbrace, \kappa, \rho)$-uniformly-stable for some $\kappa \ge 1$ and $\rho(\vu^\star \circ \vA) <\rho < 1$. As we shall make precise shortly, we need stronger requirements on the stability of the system in comparison with \cite{monfared2023stabilization} because we are concerned with the task of identification. This requirement stems from the need to have persistence of excitation so that estimation is possible. \paragraph{Input choice:} We consider that the inputs $\lbrace \vu_t \rbrace_{t\ge 0}$ are sampled in an i.i.d. manner from some distribution $\cD_{\vu}$ on $\RR^p$. For ease of exposition, we will focus on the case where inputs are sampled uniformly at random from a sphere of radius $\sqrt{p}$, that is, $\vu_t \sim \mathrm{Unif}(\sqrt{p} \cdot\cS^{p-1})$. More generally, as long as the inputs are isotropic and are bounded with high probability, our results will still hold at the expanse of longer proofs. 
\medskip

Putting together this input choice with the stability definition, we are now ready to present the assumption we make on the stability of the bilinear system \eqref{eqn:bilinear sys}.
\begin{assumption}[Stability]\label{ass:stability}
    There exists $\kappa \ge 1$ and $\rho \in (0,1)$ such that the partially observed bilinear dynamical system \eqref{eqn:bilinear sys} is $(\sqrt{p}\cdot\cS^{p-1}, \kappa, \rho)$-uniformly-stable. 
\end{assumption} 
In view of Assumption \ref{ass:stability}, choosing inputs uniformly at random from $\sqrt{p}\cdot \cS^{p-1}$ guarantees stability almost surely. 
More generally, we can choose to sample inputs from any set $\cU$, so long as the system is stable under such set in the sense of Definition \ref{def:stability}. However, the quality of estimation depends on whether inputs sampled from $\cU$ are persistently exciting or not (see \textsection\ref{sec:persistence of excitation}).

\section{Learning the Markov-like Parameters} \label{sec:markov_intro}

The Markov-like parameters are defined in a similar vein (except nonlinear input-output map) to the classical Markov parameters for partially observed LTI systems.  By unrolling the dynamics \eqref{eqn:bilinear sys}, we can represent the output $\vy_t$ in terms of the past $L$ inputs $\vu_{t-L}, \dots, \vu_t$, for any $t \ge L$, as follows:
\begin{equation}\label{eqn:unrolled dynamics}
        \yb_{t}  = \Cb \! \left( \prod_{\ell = 1}^{L} (\ub_{t-\ell}\circ \Ab ) \! \right) \!\xb_{t-L} + \sum_{\ell = 1}^{L} \Cb \!\left( \prod_{i=1}^{\ell-1} (\ub_{t-i} \circ \Ab) \!\right)\! (\Bb \ub_{t-\ell} + \wb_{t-\ell} ) + \Db \ub_t  + \zb_t. 
\end{equation}
We can simplify the form of \eqref{eqn:unrolled dynamics} by adopting a more convenient notation and expanding further some of the products that involve the inputs. First, we introduce $\boldsymbol\epsilon_t := \Cb \! \left( \prod_{\ell = 1}^{L} (\ub_{t-\ell}\circ \Ab ) \! \right) \!\xb_{t-L}$, and define:
\begin{equation}
\begin{aligned}
	 \bar{\vu}_t^\top &:= \begin{bmatrix}
   1  &  \vu_t^\top  
\end{bmatrix}, \qquad
\vutil_t := \begin{bmatrix} \vu_{t} \\
 \vu_{t-1} \\ 
 \ubb_{t-1} \otimes \vu_{t-2} \\ 
 \ubb_{t-1} \otimes \ubb_{t-2} \otimes \vu_{t-3} \\ \vdots \\ 
 \ubb_{t-1} \otimes \ubb_{t-2} \otimes \cdots \otimes \vu_{t-L} \end{bmatrix}, \qquad
	\vwtil_t := \begin{bmatrix} \vz_t \\ \vw_{t-1} \\  \vw_{t-2} \\ 
  \vw_{t-3} \\ \vdots \\ 
  \vw_{t-L} \end{bmatrix}, \\
\text{and} \quad \vF  &:= \begin{bmatrix}
     \Iden_{m} &
    \Cb   &
    \Cb   (\ub_{t-1}\circ \Ab )  &
    \Cb \prod_{\ell = 1}^{2} (\ub_{t-\ell}\circ \Ab )  &
    \hdots &
    \Cb \prod_{\ell = 1}^{L-1} (\ub_{t-\ell}\circ \Ab ) 
 \end{bmatrix}.
 \label{eqn:util_wtil_vec}
\end{aligned}
\end{equation}
Moreover, let us define the matrix $\Gb$ as follows:
\begin{equation}
    \begin{aligned}
    \Gb & := \left[\begin{array}{c c c c c}
         \Db &  \Gb_1 & \Gb_2 & \cdots & \Gb_L
    \end{array} \right]  \in \R^{m \times d_{\util}}, \quad \text{with} \quad  d_{\util} = (p+1)^{L} + p-1,
\end{aligned}
\end{equation}
where $\Gb_1 {=} \Cb\Bb $, $\Gb_{\ell} {=} \lbrace \Cb \Ab_{i_1} \times \cdots \times \Ab_{i_{\ell-1}} \Bb  \rbrace_{i_1, \dots, i_{\ell -1}  \in \lbrace 0, \dots, p\rbrace} \in \RR^{m \times p(p+1)^{\ell - 1}}$, for $\ell \in \lbrace 2, \dots, L \rbrace$. The parameters $\lbrace \Cb \Bb, \lbrace\Cb \Ab_{i_1} \vB\rbrace_{i_1 \in \lbrace 0, \dots, p\rbrace}, \dots, \lbrace\Cb \Ab_{i_1} \times \cdots \times \Ab_{i_{L-1}} \Bb \rbrace_{i_1, \dots, i_{L -1}  \in \lbrace 0, \dots, p\rbrace} \rbrace $ are what we refer to as the \emph{Markov-like parameters} of the system. We are finally ready to rewrite~\eqref{eqn:unrolled dynamics} in terms of these parameters as follows: for all $t\ge L$, we have  
\begin{align}\label{eq:input-output form}
    \yb_t =  \Gb \vutil_t + \Fb \vwtil_{t} + \boldsymbol\epsilon_t.  
\end{align}
With the dynamics written in the form of \eqref{eq:input-output form}, it is natural to use the least squares estimation method to learn the Markov-like parameters from the observations $\lbrace \vy_t, \vu_t \rbrace_{t=1}^{T}$ . More specifically, the (minimum norm) Least Squares Estimator (LSE) $\vGhat$ of $\Gb$ admits a closed form and can be defined as:
\begin{align} \label{eqn:ERM_Ghat}
	\vGhat := \left( \sum_{t=L}^T   \vy_t \vutil_{t}^\top \right) \left(\sum_{t=L}^{T} \vutil_{t} \vutil_{t}^\top \right)^{\dagger}   & \in \argmin_{\vG \in \R^{m \times d_{\util}}} \sum_{t=L}^{T} \tn{\vy_t - \vG \vutil_{t}}^2. 
\end{align}
Moreover, when the matrix $\sum_{t=L}^{T} \vutil_{t} \vutil_{t}^\top \succ 0$, then estimation error can be expressed as:
 \begin{align}\label{eq:estimation error}
     \vGhat  - \vG = \left( \sum_{t=L}^T (\vF\vwtil_t  +  \boldsymbol\epsilon_t) \vutil_t ^\top \right) \left(\sum_{t=L}^{T} \vutil_{t} \vutil_{t}^\top \right)^{-1} .
 \end{align}
We now present Theorem \ref{thm:main}, our main result on the recovery of the \emph{Markov-like parameters}:
\begin{theorem}[Learning Markov-like parameters]\label{thm:main}
    Let $\delta \in (0,1)$, $T \ge 0$. Suppose Assumption \ref{ass:stability} holds, and the inputs are sampled uniformly at random from a sphere of radius $\sqrt{p}$, that is, $\{\vu_t\}_{t \geq 0} \distas \mathrm{Unif}(\sqrt{p} \cdot\cS^{p-1})$. Then the event: \begin{align}\label{eq:error bound}
         \Vert \vGhat  - \vG \Vert_\op  \le \frac{C (\kappa^2\rho^{L} + \kappa)}{1- \rho}\sqrt{\frac{L p^2(p+1)^{(L+1)} \left( \log\left(\frac{e L}{\delta}\right)  +m + n L + (p+1)^{L+1}\right) }{T-L}} 
    \end{align}
    holds with probability at least $1-\delta$, provided that 
\begin{align}
    T - L &\gtrsim  L(L+1) \left(\frac{3p}{p + 2} \right)^{L+1}\left (\log\left(\frac{(L+1)}{\delta}\right) + (p+1)^{L+1} \log\left(\frac{(p+1)^{L+1}}{\delta}\right) \right), \label{eqn:trajectory_size_main_thm}
\end{align}
with positive constant $C = \poly(\sigma, \Vert \vB \Vert_\op, \Vert \vC\Vert_\op)$. 
\end{theorem}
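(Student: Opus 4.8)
The plan is to bound the two factors in the error expression \eqref{eq:estimation error} separately: an upper bound on the "noise times covariate" cross term $\sum_{t=L}^T(\vF\vwtil_t + \boldsymbol\epsilon_t)\vutil_t^\top$ in operator norm, and a lower bound on the smallest eigenvalue of the empirical covariance $\sum_{t=L}^T \vutil_t\vutil_t^\top$. The latter is exactly a persistence-of-excitation statement, which we obtain from Theorem~\ref{thm:persistence}: since the inputs are isotropic (after scaling, $\vu_t \sim \mathrm{Unif}(\sqrt p\cdot\cS^{p-1})$ has identity covariance) and bounded, the Kronecker-structured lifted covariate $\vutil_t \in \R^{d_{\util}}$ satisfies a hyper-contractivity condition, so with probability $1-\delta/2$ we get $\lambda_{\min}\!\big(\sum_{t=L}^T \vutil_t\vutil_t^\top\big) \gtrsim (T-L)\,\lambda_{\min}(\E[\vutil_t\vutil_t^\top])$ once $T-L$ exceeds the stated threshold \eqref{eqn:trajectory_size_main_thm}; the $(p+1)^{L+1}$ and $(3p/(p+2))^{L+1}$ factors there track the dimension $d_{\util}$ and the smallest eigenvalue of $\E[\vutil_t\vutil_t^\top]$ respectively, which one computes from the Kronecker structure.

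\medskip

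\textbf{For the numerator}, I would split $\vF\vwtil_t + \boldsymbol\epsilon_t$ into the martingale-type noise part $\vF\vwtil_t$ and the truncation bias $\boldsymbol\epsilon_t = \Cb\big(\prod_{\ell=1}^L (\ub_{t-\ell}\circ\Ab)\big)\xb_{t-L}$. The key structural fact is that under Assumption~\ref{ass:stability}, $\|\prod_{\ell=1}^L(\ub_{t-\ell}\circ\Ab)\|_\op \le \kappa\rho^L$ almost surely, and likewise $\|\xb_{t-L}\|$ concentrates at scale $\kappa/(1-\rho)$ from the unrolled formula \eqref{eqn:bilinear sys state}; this is where the $\kappa^2\rho^L + \kappa$ and $1/(1-\rho)$ dependence in \eqref{eq:error bound} comes from — $\kappa^2\rho^L$ from the bias term (two factors of $\kappa$: one from the product, one from the state norm) and $\kappa$ from the martingale term. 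For $\|\sum_t \vF\vwtil_t\vutil_t^\top\|_\op$ I would use a self-normalized / martingale concentration argument (in the spirit of the vector Azuma or the matrix Freedman inequality), covering the unit sphere in $\R^{d_{\util}}$ with an $\varepsilon$-net of size $e^{O(d_{\util})}$, which produces the additive $(p+1)^{L+1}$ inside the logarithm-free bracket of \eqref{eq:error bound}, while $m + nL$ counts the ambient dimension of $\vwtil_t$ and the output, and $\log(eL/\delta)$ is the failure-probability cost. The $\vF$ blocks are bounded by $\kappa$ via the same joint-spectral-radius estimate, and $\|\vutil_t\|$ is bounded by $\poly(p)^{?}$ — here $Lp^2(p+1)^{L+1}$ collects $\|\vutil_t\|^2$-type bounds times the dimension count.

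\medskip

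\textbf{Putting it together}: on the intersection of the two good events (probability $\ge 1-\delta$ after a union bound with $\delta/2$ each), $\|\vGhat - \vG\|_\op \le \|\text{numerator}\|_\op \cdot \lambda_{\min}^{-1}(\text{covariance})$, and substituting the bounds yields \eqref{eq:error bound}. The sample-complexity condition \eqref{eqn:trajectory_size_main_thm} is inherited verbatim from the persistence-of-excitation threshold (it is what makes the covariance lower bound non-vacuous), and the absolute constant $C$ absorbs the polynomial dependence on $\sigma, \|\vB\|_\op, \|\vC\|_\op$ arising from bounding $\vwtil_t$, $\boldsymbol\epsilon_t$, and $\vF$.

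\medskip

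\textbf{The main obstacle} will be the noise cross-term $\sum_{t=L}^T \vF\vwtil_t\vutil_t^\top$: unlike the LDS case, the covariates $\vutil_t$ are nonlinear in the inputs (Kronecker products), highly correlated across the block structure, and heavy-tailed, while the coefficient matrix $\vF$ is itself random and time-correlated with the $\vutil_t$'s through the shared inputs $\ub_{t-1},\dots,\ub_{t-L+1}$. Decoupling this dependence — showing that, conditionally on the input sequence, $\vF$ and the covariates are "frozen" while $\vwtil_t$ (which only involves the independent process/measurement noise) supplies the martingale structure — is the delicate part; I would condition on the $\sigma$-algebra generated by all inputs, handle the noise concentration conditionally, and then take a high-probability bound over the input-dependent quantities $\|\vF\|_\op, \|\vutil_t\|$ using the joint-spectral-radius stability bound and a concentration bound for norms of Kronecker products of bounded isotropic vectors.
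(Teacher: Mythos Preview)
Your high-level decomposition matches the paper exactly: bound the excitation factor via Theorem~\ref{thm:persistence}, split the numerator into the multiplier process $\sum_t \vutil_t(\vF\vwtil_t)^\top$ and the truncation bias $\sum_t \vutil_t\vepsilon_t^\top$, and union-bound over the three events. One small correction on the persistence side: the factor $(3p/(p+2))^{L+1}$ is not $\lambda_{\min}(\E[\vutil_t\vutil_t^\top])$ --- the lifted covariate is exactly isotropic, $\E[\vutil_t\vutil_t^\top]=\Iden$ --- but rather the hypercontractivity constant $\gamma^{L+1}$ with $\gamma=3p/(p+2)$ for $\mathrm{Unif}(\sqrt p\cdot\cS^{p-1})$, which enters through the fourth-moment bound in the one-sided Bernstein argument.

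The genuine gap is in the truncation bias, and you have the relative difficulty of the two numerator terms reversed. For the multiplier process the paper simply bounds $\|\vF\|_\op \le 1+\kappa\|\vC\|_\op/(1-\rho)$ deterministically and then handles $\sum_t\vutil_t\vwtil_t^\top$ by a blocking trick (splitting into $L$ sub-sums so that the $\vwtil_s$ within each block are independent); your conditioning-on-inputs idea would also work here and is a legitimate alternative. But for the bias you only offer the pointwise estimate $\|\vepsilon_t\|\lesssim\|\vC\|_\op\kappa\rho^L\|\vx_{t-L}\|$. That yields at best $\|\sum_t\vutil_t\vepsilon_t^\top\|_\op=O(T-L)$, and after dividing by $\lambda_{\min}\asymp T-L$ you are left with a bias contribution that is $O(1)$ in $T$, whereas \eqref{eq:error bound} has the $\kappa^2\rho^L$ piece also decaying like $1/\sqrt{T-L}$.

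The missing idea (Proposition~\ref{prop:truncation bias} in the paper) is to unroll $\vx_{t-L}$ via \eqref{eqn:bilinear sys state}, swap the order of summation, and recognize that for fixed $\theta,\lambda$ one has
\[
\theta^\top\Big(\sum_{t=L}^T \vutil_t\vepsilon_t^\top\Big)\lambda \;=\; \sum_{s=0}^{T-L-1} \vf_s(\vu_{s+1},\dots,\vu_T)\,(\vB\vu_s+\vw_s),
\]
a reverse-time martingale in which the increment $\vB\vu_s+\vw_s$ is independent of the future-measurable coefficient $\vf_s$, while $\|\vf_s\|_{\ell_2}$ is bounded \emph{uniformly in $T$} by the geometric decay from stability. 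Freedman's inequality then delivers the $\sqrt{T-L}$ scaling. Note also that your conditioning-on-inputs strategy cannot be recycled here: once the inputs are frozen, the $\vB\vu_s$ contributions inside $\vx_{t-L}$ become deterministic and their aggregate over $t$ is genuinely of order $T-L$; the cancellation you need comes precisely from keeping $\vu_s$ random as part of the martingale increment. This input-coupling between $\vutil_t$ and $\vepsilon_t$ is exactly what the paper singles out as the harder of the two numerator terms.
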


We discuss the analysis of the estimation error leading to Theorem \ref{thm:main} in \textsection\ref{sec:analysis}. There, we also highlight the key challenges and steps in establishing this result. From the bound in \eqref{eq:error bound}, we see the recovery error $\Vert \vGhat - \vG \Vert_\op $ scales as, ignoring all other dependencies,  $\tilde{\cO} (\sqrt{(p+1)^{2(L+1)}/(T-L)})$. This contrasts with partially observed linear systems where typically we only have a polynomial dependence in $L$, and also reflects the difficulty in learning bilinear systems from partial observations. To recover the unknown matrices $\vC, \vA_0, \dots, \vA_p, \vB$, we require $L$ large enough, typically larger than $2n$ (see Remark \ref{rem:param recov}).

\begin{remark}[The BLDS parameter recovery.]\label{rem:param recov}
We remark  that, for every  $k \in \lbrace 0, \dots, p\rbrace$, we can directly extract from $\vGhat$, estimates of the matrices 
$\lbrace \vC\vB, \vC \vA_k \vB, \dots,\vC \vA_k^{L-1} \vB \rbrace$. To see that, observe that letting $\cI_{k,\ell} \subset \lbrace 1, \dots, d_{\vutil} \rbrace$ be the $p$ indices corresponding to the $p$-dimensional sub-vector of $\vutil_t$, $(\prod_{i=1}^{\ell-1} (\vu_{t-i})_k) \vu_{t-\ell}$, then $\vG_{:, \cI_{k,\ell}} = \vC \vA_{k}^{\ell-1} \vB$. In other words, we can take $\vGhat_{:, \cI_{\ell, k}}$ to be an estimate $\vC \vA_{k}^{\ell-1} \vB$. We then construct a Hankel matrix from $\vGhat_{:, \cI_{\ell, k}}$. Under the condition that our estimation error is sufficiently small, each pair $(\vA_k, \vB)$ is controllable, each pair $(\vA_k,\vC)$ is observable, and $L \geq 2n$, we can use classic Ho-Kalman algorithm~\citep{ho1966effective} to estimate $\vA_0, \vA_1, \dots, \vA_p, \vB, \vC$ up to a similarity transform, with robustness guarantees~\citep{oymak2021revisiting}. Lastly, note that the estimate of $\vD$ is obtained as the first $p$ columns of $\vGhat$.
\end{remark}

 \section{Sample Complexity Analysis} \label{sec:analysis}

To prove Theorem \ref{thm:main}, we start our analysis by decomposing the estimation error as follows:
\begin{align}\label{eq:error decomposition}
    \left\Vert \hat{\vG} - \vG  \right\Vert_\op  \le \underbrace{\left\Vert \left(\sum_{t=L}^{T} \vutil_t \vutil_t^\T \right)^{\dagger} \right\Vert_\op}_{\textrm{Excitation}}  \Bigg(\underbrace{\left\Vert \sum_{t=L}^T \vutil_t (\vF\tilde{\vw}_t)^\T \right\Vert_\op}_{\textrm{Mutiplier Process}} +   \underbrace{\left\Vert \sum_{t=L}^T \vutil_t \vepsilon_t^\T \right\Vert_\op}_{\textrm{Truncation Bias} } \Bigg),
\end{align}
where we use the submultiplicativity of $\Vert \cdot\Vert_\op$ and the triangular inequality. Next, we will analyze each of three terms appearing in the decomposition above separately and obtain corresponding bounds in high probability. Once these bounds have been established, the proof concludes immediately (see details in Appendix~\ref{app:proof main}). In what follows, we focus on presenting the results regarding the analysis of the three terms. 
We note that the challenge in analyzing this terms lies in the presence of non-trivial dependencies and nonlinearities. As such,  recent analysis tools from the non-asymptotic system identification literature \citep{ziemann2023tutorial} do not apply, and this is precisely what we manage to tackle.

\subsection{Persistence of Excitation}\label{sec:persistence of excitation}
We show persistence of excitation which is necessary to ensure that the LSE is a consistent estimator. More precisely, we will establish that smallest singular value of the design matrix $\vUtil$ whose rows correspond to $\{\vutil_t^\T\}_{t=L}^T$ is bounded from below  by $\tilde{\Omega}(\sqrt{T-L+1})$. One of the major sources of difficulty in establishing this persistence of excitation result challenging is the nonlinear dependence of $\vutil_{t}$  on $\vu_{t-L}, \dots, \vu_{t}$ for all $t \ge L$. We need to understand how distributional properties of the input impact the lower spectrum of $\vUtil^\T \vUtil$. To that end, we start by introducing the property of hypercontractivity.

\begin{definition}[Hypercontractivity]\label{def:hypercont}
A $p$-dimensional random vector $\vu$ is  $(4,2, \gamma)$-hypercontractive, if  $\EE[(\vu^\top \vx)^4] \le \gamma \EE[(\vu^\top \vx)^2]^2$, for all $\vx \in \RR^p$. 
\end{definition}
The $(4,2,\gamma)$-hypercontractivity property is satisfied by many classical distributions. Notably, a $p$-dimensional standard gaussian random vectors satisfies it with $\gamma = 3$, while $p$-dimensional random vectors sampled uniformly from $\sqrt{p} \cdot \cS^{p-1}$ satisfies $\gamma {=} 3/(1+2/p)$. We refer the reader to Appendix \ref{app:persistence excitation} for a proof to these claims.
\begin{assumption}[Distributional properties of the input] \label{ass:input distribution}
    $\lbrace \vu_t \rbrace_{t\ge 0}$ is a sequence of independent zero-mean, isotropic\footnote{A $p$-dimensional random vector $\vu$ is isotropic if for all $x \in \RR^p$, $\EE[(\vu^\top x)^2] = \Vert x \Vert_{\ell_2}^2$.}, and $(4, 2, \gamma)$-hypercontractive for some $\gamma > 1$, $p$-dimensional random vectors with zero third moment marginals\footnote{A $p$-dimensional random vector $\vu$ has zero-third-moment-marginals if for all $\vx \in \RR^p$, $\EE[(\vu_t^\top \vx)^3] = 0$.}.
\end{assumption}
Again, it can be verified that Assumption \ref{ass:input distribution} is satisfied by inputs sampled from $\cN(0, \Iden_p)$ or $\mathrm{Unif}(\sqrt{p} \cdot \cS^{p-1})$. More importantly, Assumption \ref{ass:input distribution} covers a wide range of input distributions that may even be heavy-tailed, as it only requires conditions on the first four moments of the distribution. This contrast with classical assumptions that require the input distribution to have sub-Gaussian tails, and is also consistent with the intuition that bounding the smallest singular value of random matrix requires weaker moment conditions  \citep{koltchinskii2015bounding}. We are now ready to present our main result on the  persistence of excitation:

\begin{theorem}[Persistence of Excitation]\label{thm:persistence}
Suppose the sequence of inputs $\lbrace \vu_{t}\rbrace_{t \ge 0}$ are selected as per Assumption~\ref{ass:input distribution}, then for all $\delta \in (0,1)$, the event: 
\begin{align}
    \lambda_{\min}\left(\vUtil^\T \vUtil\right) =  \lambda_{\min}\left( \sum_{t=L}^{T} \vutil_t \vutil_t^\T \right) \geq (T-L+1)/4.
\end{align}
holds with probability at least $1- \delta$, provided that 
\begin{align}
    T &\gtrsim L +  L(L+1) (3 \vee \gamma)^{L+1}\left (\log\left(\frac{(L+1)}{\delta}\right) + (p+1)^{L+1} \log\left(\frac{(p+1)^{L+1}}{\delta}\right) \right). \label{eqn:trajectory_size_main}
\end{align}
\end{theorem}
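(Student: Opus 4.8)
The plan is to prove the lower bound on $\lambda_{\min}(\vUtil^\T \vUtil)$ via a PAC-style covering/concentration argument over the unit sphere, exploiting the block structure of $\vutil_t$. First I would observe that $\vutil_t$ decomposes into $L$ blocks, the $\ell$-th block being $\bigl(\prod_{i=1}^{\ell-1}(\ubb_{t-i})_{?}\bigr)$-type Kronecker products $\ubb_{t-1}\otimes\cdots\otimes\ubb_{t-\ell+1}\otimes\vu_{t-\ell}$ (with the convention for the first two blocks). The key structural fact is that, under Assumption~\ref{ass:input distribution} (independence across time, isotropy, zero third moments), we have $\EE[\vutil_t \vutil_t^\T] \succeq c\,\Iden_{d_{\vutil}}$ for an absolute constant $c$ (in fact $c=1$ after accounting for the $\ubb$ having a leading $1$): the zero-third-moment condition kills the cross terms between the ``$1$'' and the ``$\vu$'' coordinates inside each $\ubb$ factor, and isotropy makes each block's second moment proportional to the identity. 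So in expectation the design is well-conditioned; the work is in concentration.

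Next I would set up a one-shot chaining/covering argument: fix $\vx \in \cS^{d_{\vutil}-1}$, write $\vx = (\vx^{(1)}, \dots, \vx^{(L)})$ according to the block decomposition, and study $\sum_{t=L}^T (\vutil_t^\T \vx)^2$. The natural move is to condition sequentially: for the $\ell$-th block, condition on $\vu_{t-1},\dots,\vu_{t-\ell+1}$ so that the Kronecker prefix $\ubb_{t-1}\otimes\cdots\otimes\ubb_{t-\ell+1}$ becomes a fixed vector $\vg$, and then $(\ubb_{t-1}\otimes\cdots\otimes\vu_{t-\ell})^\T(\text{reshaped }\vx^{(\ell)}) = \vu_{t-\ell}^\T M \vg$ for an appropriate matrix $M$; by isotropy of $\vu_{t-\ell}$ this has conditional second moment $\|M^\T\vg\|^2$, and by $(4,2,\gamma)$-hypercontractivity a fourth moment controlled by $\gamma$. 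Iterating this over the $\ell$ nested Kronecker factors gives: the $\ell$-block contribution has second moment $\|\vx^{(\ell)}\|^2$ and fourth moment bounded by $\gamma^{\ell}\|\vx^{(\ell)}\|^4$ (roughly; constants like $3$ appear because $\ubb$ carries the deterministic $1$-coordinate, explaining the $3\vee\gamma$ in \eqref{eqn:trajectory_size_main}). Summing the cross-block terms requires care but they vanish in expectation by the zero-third-moment / independence structure, so the total $\EE[(\vutil_t^\T\vx)^2] = \|\vx\|^2 = 1$ and $\EE[(\vutil_t^\T\vx)^4] \le (3\vee\gamma)^{L+1}$. With a bounded fourth moment in hand, apply a Paley–Zygmund / small-ball argument (or Markov on the sum of i.i.d.-like terms — they are not independent across $t$ because inputs overlap in windows of length $L$, so split into $L$ subsequences each with independent blocks and union over them) to get $\PP\bigl(\sum_t (\vutil_t^\T\vx)^2 < (T-L+1)/2\bigr) \le \exp(-c(T-L+1)/(3\vee\gamma)^{L+1})$ for each fixed $\vx$.

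Then I would do a union bound over an $\varepsilon$-net of $\cS^{d_{\vutil}-1}$ with $\varepsilon$ polynomially small; the net has size $(C/\varepsilon)^{d_{\vutil}} = \exp(O((p+1)^{L+1}\log(\cdot)))$, which is where the $(p+1)^{L+1}\log((p+1)^{L+1}/\delta)$ term in \eqref{eqn:trajectory_size_main} comes from, and the extra $\log((L+1)/\delta)$ absorbs the union over the $L+1$ subsequences and the failure probability. Discretization error is handled by an operator-norm bound on $\vUtil^\T\vUtil$ (itself a high-probability statement, or a crude deterministic bound using that the inputs lie on a sphere of radius $\sqrt p$, so $\|\vutil_t\|$ is bounded by $\poly(p)^{L}$), converting the net bound into a bound for all $\vx$ and yielding $\lambda_{\min}\ge (T-L+1)/4$. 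The main obstacle, and the technically heaviest step, is the moment computation under the nested Kronecker structure: correctly tracking how hypercontractivity constants compound through $L$ layers of conditioning while showing the cross-block and within-block odd-moment terms cancel, so that one genuinely gets $\EE[(\vutil_t^\T\vx)^2]=1$ rather than something that degrades with $L$. A secondary obstacle is the temporal dependence across the $T$ summands, handled by the subsequence splitting, but that splitting interacts with the block conditioning and must be done in the right order.
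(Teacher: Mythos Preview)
Your proposal is correct and follows essentially the same route as the paper: establish $\EE[(\vutil_t^\top\vx)^2]=1$ and $\EE[(\vutil_t^\top\vx)^4]\lesssim L(3\vee\gamma)^{L+1}$ via the nested-Kronecker conditioning argument you describe (this is the paper's Lemma~\ref{lemma:hyper_to_moment_cov}), split $\{L,\dots,T\}$ into $L{+}1$ subsequences with independent summands, apply one-sided concentration per subsequence, and finish with an $\varepsilon$-net combined with a crude high-probability bound on $\Vert\vUtil^\top\vUtil\Vert_\op$. One small correction: the exponential tail $\exp(-c(T-L+1)/(3\vee\gamma)^{L+1})$ you write down does not follow from Paley--Zygmund or Markov alone; the paper obtains it via the one-sided Bernstein inequality for nonnegative random variables (which needs exactly the fourth-moment bound you computed), so you should name that tool instead.
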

The proof of Theorem \ref{thm:persistence} is deferred to Appendix \ref{app:persistence excitation}. Interestingly, despite the presence of nonlinearities and dependencies in the covariates of $\vUtil$, persistence of excitation is still guaranteed. Part of the reason is because the distributional properties presented in Assumption \ref{ass:input distribution} ensure that isotropy of the vectors $\vutil_t$ is still preserved, and their third and fourth moments are well bounded. The dependence in $(p+1)^{L+1}$ in \eqref{eqn:trajectory_size_main} is unavoidable because of the dimension of the vectors $\vutil_t$.

\subsection{Analysis of the Multiplier Process}

The analysis of the the multiplier process term $\Vert \sum_{t = L}^{T} \vutil_t(\vF\vwtil_t)^\T\Vert_\op$ is somewhat simpler than that of truncation bias term. The reason is because the sequences $\lbrace \vutil_{t}\rbrace_{t\ge L}$, and $\lbrace \vwtil_t \rbrace_{t\ge L}$ are independent with zero-mean vectors. However, each of these two sequences contains dependent vectors. Below, we present a high probability bound on the multiplier process term showing that we can still control this term despite the presence of these dependencies: 
\begin{proposition}\label{prop:multiplier process}
    Let $\delta \in (0,1)$ and $T \ge L$. The event: 
    \begin{align*}
        \left\Vert \sum_{t = L}^{T} \vutil_t(\vF\vwtil_t)^\T\right\Vert_\op  \le C_2 \frac{\kappa}{1-\rho}  \sqrt{ L (T-L)(p+1)^{L+1} \left(\log\left(\frac{eL}{\delta}\right)  + m + nL + (p+1)^{L+1} \right) }
    \end{align*}
    holds with probability $1-\delta$,  
    with a positive constant $C_2 = \poly(\sigma, \Vert \vC\Vert_\op)$.
\end{proposition}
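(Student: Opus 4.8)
The plan is to bound the multiplier process term by a chaining/covering argument combined with a martingale concentration inequality, exploiting the independence of $\{\vutil_t\}$ and $\{\vwtil_t\}$. First I would condition on the entire input sequence $\{\vu_t\}_{t\ge 0}$, so that the matrices $\vF$ (which depends on inputs) and the vectors $\vutil_t$ become deterministic, while $\vwtil_t$ remains a sequence of independent, zero-mean, $\sigma$-subgaussian vectors (its blocks are the $\vz_t,\vw_{t-1},\dots,\vw_{t-L}$, which are genuinely independent across time up to the fixed overlap pattern, so $\sum_{t=L}^T \vutil_t(\vF\vwtil_t)^\T$ is a sum of martingale-difference-like terms once we order by time and track the bounded memory $L$). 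The operator norm of this $d_{\vutil}\times m$ random matrix can be written as $\sup_{\vv\in\cS^{m-1},\,\vq\in\cS^{d_{\vutil}-1}} \sum_{t=L}^T \langle \vq,\vutil_t\rangle \langle \vF\vwtil_t,\vv\rangle$, and I would control it by a standard $\varepsilon$-net over the two spheres (contributing the $m + (p+1)^{L+1}$ terms inside the logarithm, since $d_{\vutil}\le (p+1)^{L+1}$) together with a tail bound for each fixed pair.

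For a fixed unit $\vq,\vv$, the scalar sum $\sum_{t=L}^T a_t \xi_t$ with $a_t := \langle \vq,\vutil_t\rangle$ deterministic and $\xi_t := \langle \vF\vwtil_t,\vv\rangle$ is, conditionally on the inputs, a sum of martingale differences with respect to the filtration generated by the noise: although $\vwtil_t$ and $\vwtil_{t+1}$ share $L-1$ noise blocks, grouping terms into $L+1$ interleaved subsequences (or using an Azuma-type bound with bounded memory $L$) makes each $\xi_t$ subgaussian with variance proxy $\lesssim \sigma^2\|\vF\|_\op^2$, and $\|\vF\|_\op \lesssim \sqrt{L}\,\kappa/(1-\rho)$ by bounding each block $\vC\prod_{i=1}^{\ell-1}(\vu_{t-i}\circ\vA)$ via $\|\vC\|_\op\,\kappa\rho^{\ell-1}$ (uniform stability, Assumption~\ref{ass:stability}) and summing the geometric series — here is where the $\kappa/(1-\rho)$ factor enters. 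Azuma–Hoeffding then gives, for each fixed $(\vq,\vv)$, a subgaussian tail with proxy of order $\sigma^2\,\|\vF\|_\op^2\sum_t a_t^2$. The remaining ingredient is a uniform bound on $\sum_{t=L}^T \langle\vq,\vutil_t\rangle^2 = \|\vUtil\vq\|_{\ell_2}^2 \le \|\vUtil^\T\vUtil\|_\op$; I would show $\|\vUtil^\T\vUtil\|_\op \lesssim (T-L)(p+1)^{L+1}$ with high probability by a matching upper-tail version of the persistence-of-excitation argument (Theorem~\ref{thm:persistence}), using that $\EE\|\vutil_t\|_{\ell_2}^2 = d_{\vutil}\le (p+1)^{L+1}$ together with the hypercontractivity/boundedness of the inputs to get concentration of $\sum_t\|\vutil_t\|_{\ell_2}^2$.

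Combining: union-bounding the per-pair subgaussian tail over nets of sizes $e^{O(m)}$ and $e^{O((p+1)^{L+1})}$, and over the $O(L)$-fold subsequence grouping (the $\log(eL/\delta)$ term), yields a bound of order
\[
\|\vF\|_\op \cdot \sigma \sqrt{(T-L)(p+1)^{L+1}}\cdot\sqrt{\log(eL/\delta)+m+nL+(p+1)^{L+1}},
\]
and substituting $\|\vF\|_\op \lesssim \sqrt{L}\,\kappa/(1-\rho)$ gives exactly the claimed bound with $C_2=\poly(\sigma,\|\vC\|_\op)$. I expect the main obstacle to be handling the internal dependence within each sequence $\{\vutil_t\}_t$ and $\{\vwtil_t\}_t$ cleanly: the overlapping Kronecker structure means consecutive covariate vectors are strongly correlated, so the martingale decomposition must be set up carefully (conditioning on the inputs first is what rescues it, reducing the noise part to a bona fide martingale with memory $L$), and getting the high-probability upper bound on $\|\vUtil\|_\op^2$ requires re-running the hypercontractivity machinery of Theorem~\ref{thm:persistence} in the opposite direction rather than quoting it as a black box.
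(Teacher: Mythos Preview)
Your approach is essentially the paper's: condition on (or use independence from) the inputs, reduce the operator norm to scalar sums via an $\varepsilon$-net, split into $L$ interleaved subsequences so that $\{\vwtil_s\}$ becomes independent within each block, and apply Azuma/Freedman. That said, a few of your bookkeeping attributions are off and should be fixed.

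First, the geometric series you invoke actually gives $\|\vF\|_\op \le 1 + \|\vC\|_\op\kappa/(1-\rho)$, with \emph{no} $\sqrt{L}$ (this is the paper's Lemma~\ref{lem:upper bound stability}); you have inserted a spurious $\sqrt{L}$ here. The $\sqrt{L}$ in the final bound comes instead from the blocking step: summing the $L$ subsequence bounds (each of size $\approx (T-L)/L$) contributes $L\cdot\sqrt{(T-L)/L}=\sqrt{L(T-L)}$. Your Azuma line ``proxy $\sigma^2\|\vF\|_\op^2\sum_t a_t^2$'' drops this factor, and then you recover it from the wrong place. The two errors cancel, but the logic should be straightened out.

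Second, the bound $\sum_t a_t^2 \le (T-L+1)(p+1)^{L+1}$ is \emph{deterministic} because inputs are on the sphere $\sqrt{p}\cdot\cS^{p-1}$: one has $\|\vutil_t\|_{\ell_2}^2 = d_{\vutil}$ exactly, so $|\theta^\top\vutil_t|\le\sqrt{d_{\vutil}}$ for every unit $\theta$. No hypercontractivity machinery or high-probability argument is needed, and the paper simply uses this pointwise bound as the $K$ in Lemma~\ref{lem:freedman}.

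Third, your covering of $\cS^{m-1}$ would not produce the $nL$ term; in the paper that term appears because $\|\vF\|_\op$ is factored out \emph{at the matrix level} first, leaving $\|\sum_t\vutil_t\vwtil_t^\top\|_\op$ with $\vwtil_t\in\RR^{m+nL}$, so the second net lives on $\cS^{m+nL-1}$. Your route (keeping $\vF_t$ inside the inner product and covering $\cS^{m-1}$) is arguably cleaner, since $\vF$ is time-varying, and it would in fact yield a bound without the $nL$ summand; but then you should not claim to match the $nL$ in the stated inequality---you would be proving something slightly sharper.
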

The key idea behind the proof of Proposition \ref{prop:multiplier process} is observing that each of the subsequences, $\ell \in \lbrace 1, \dots, L\rbrace$,  $\lbrace \vwtil_{kL + \ell} \rbrace_{k \ge 0}$, has independent vectors. Thus, we can use a blocking trick to rewrite the multiplier process as a sum of $L$ martingales which can bound using classical concentration tools. This argument is made precise in the proof and is deferred to Appendix \ref{app:multiplier process}.

\subsection{Analysis of the Truncation Bias}

The analysis of the truncation bias term $\Vert \sum_{t=L}^T \vutil_{t} \vepsilon_t^\top  \Vert_\op$ is challenging for multiple reasons. Indeed, first, the sequences $\lbrace \vutil_{t}\rbrace_{t \ge L}$ and $\lbrace \vepsilon_{t} \rbrace_{t \ge L}$ are non-trivially dependent, and second, $\vepsilon_t$ is only zero-mean conditioned on future inputs $\vu_{t - L}, \dots, \vu_T$ and is still dependent on $\vu_{0}, \dots, \vu_{t-L-1}$. It is worth noting that this type of dependence does not arise when learning partially observed linear dynamical systems. Indeed, the analysis of the truncation bias term in linear systems involves $\vepsilon_t$ that are only dependent on the covariates $\vx_{t-L}$ and are independent of $\vu_{t-L},  \dots,  \vu_{t}$ (e.g., see \cite{oymak2019stochastic}). In addition to that, when learning linear systems $L$ can be made large enough to trivially bound the truncation bias, whereas in bilinear systems our bounds pay exponential dependence on $L$ so choosing this parameter more carefully is more important. 
Nonetheless, we establish a high probability bound on the truncation bias as presented below: 

\begin{proposition}\label{prop:truncation bias}
    Let $\delta \in (0,1)$ and $T \ge L$. The event:
        \begin{align*}
        \left\Vert \sum_{t=L}^T \vutil_t \vepsilon_t^\top \right\Vert_\op \le  \frac{C_1 \kappa^2 \rho^L }{1-\rho}\sqrt{(T-L)p^2(p+1)^{L+1}\left( \log\left(\frac{e}{\delta}\right) + n + (p+1)^{L+1} \right)}      
    \end{align*}
    holds with probability at least $1-\delta$, 
    with a positive constant $C_1 = \poly(\sigma, \Vert \vB \Vert_\op, \Vert \vC \Vert_\op)$.
\end{proposition}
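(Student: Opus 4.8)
The plan is to recast the truncation bias as a matrix martingale run \emph{backwards} in time, indexed by the earliest noise driving the hidden state. Unrolling $\xb_{t-L}$ through \eqref{eqn:bilinear sys state} and recalling $\vepsilon_t=\Cb\bigl(\prod_{\ell=1}^{L}(\vu_{t-\ell}\circ\vA)\bigr)\xb_{t-L}$, we write, for $t\ge L$,
\begin{equation*}
  \vepsilon_t=\Cb\,\Psi_t\sum_{s=0}^{t-L-1}\Phi_{t-L,s}\,\boldsymbol\xi_s,
  \qquad
  \boldsymbol\xi_s:=\vB\vu_s+\vw_s,
  \qquad
  \Psi_t:=\prod_{\ell=1}^{L}(\vu_{t-\ell}\circ\vA),
  \quad
  \Phi_{j,s}:=\prod_{k=1}^{\,j-1-s}(\vu_{j-k}\circ\vA),
\end{equation*}
so that Assumption~\ref{ass:stability} yields $\Vert\Psi_t\Vert_\op\le\kappa\rho^{L}$ and $\Vert\Phi_{t-L,s}\Vert_\op\le\kappa\rho^{\,t-L-1-s}$; the factor $\rho^{L}$, which reflects that $\vepsilon_t$ propagates a state $L$ steps old, is precisely what makes the truncation bias small. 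Substituting, swapping the order of summation, and pulling the trailing $\Cb^\top$ out,
\begin{equation*}
  \sum_{t=L}^{T}\vutil_t\vepsilon_t^\top=\Bigl(\sum_{s=0}^{T-L-1}Z_s\Bigr)\Cb^\top,
  \qquad
  Z_s:=\sum_{t=s+L+1}^{T}\vutil_t\,\boldsymbol\xi_s^\top\,\Phi_{t-L,s}^\top\,\Psi_t^\top\ \in\ \R^{d_{\util}\times n}.
\end{equation*}

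The structural point is that, although $\vutil_t$ and $\vepsilon_t$ are badly entangled (both depend on $\vu_{t-L},\dots,\vu_t$), the vector $\boldsymbol\xi_s$ is zero-mean and independent of $\vutil_t$, $\Psi_t$, and $\Phi_{t-L,s}$ for every $t>s+L$, since these only involve $\vu_j,\vw_j$ with $j\ge s+1$. Hence, with the reverse-time filtration $\Fcal_s:=\sigma(\{\vu_j,\vw_j:j\ge s\})$, each $Z_s$ is $\Fcal_s$-measurable and $\E[Z_s\mid\Fcal_{s+1}]=0$, so $\bigl(\sum_{j\ge s}Z_j\bigr)_{s}$ is a matrix-valued martingale with increments $Z_s$. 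I would then invoke a matrix Freedman inequality, or equivalently scalar Freedman/Bernstein combined with $\varepsilon$-nets over $\cS^{d_{\util}-1}\times\cS^{n-1}$; the latter contributes $\lesssim d_{\util}+n\lesssim (p+1)^{L+1}+n$ to the logarithmic term, which is the source of that additive piece inside the square root. The deterministic part of the increment bound uses $\Vert\vutil_t\Vert_{\ell_2}^2=(p+1)^{L}+p-1$ (an exact identity for inputs on $\sqrt p\,\cS^{p-1}$), the product estimates above, and $\Vert\boldsymbol\xi_s\Vert_{\ell_2}\le\Vert\vB\Vert_\op\sqrt p+\Vert\vw_s\Vert_{\ell_2}$; on the event $\max_s\Vert\vw_s\Vert_{\ell_2}\lesssim\sigma\sqrt{n+\log(T/\delta)}$ this gives $\Vert Z_s\Vert_\op\lesssim\poly(\sigma,\Vert\vB\Vert_\op)\,\tfrac{\kappa^{2}\rho^{L}}{1-\rho}\sqrt{p\,(p+1)^{L+1}(n+\log(T/\delta))}$. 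For the predictable quadratic variations $\sum_s\E[Z_sZ_s^\top\mid\Fcal_{s+1}]$ and $\sum_s\E[Z_s^\top Z_s\mid\Fcal_{s+1}]$, isotropy of $\vu_s$ gives $\E[\boldsymbol\xi_s\boldsymbol\xi_s^\top]\preceq(\Vert\vB\Vert_\op^2+\sigma^2)\Iden_n$, and a weighted Cauchy--Schwarz in $t$ followed by summing the geometric weights $\rho^{\,t-L-1-s}$ telescopes over $s$ against $\sum_t\Vert\vutil_t\Vert_{\ell_2}^2=(T-L+1)\bigl((p+1)^{L}+p-1\bigr)$ and leaves a $\kappa^{2}\rho^{2L}(1-\rho)^{-2}$ factor; altogether both variations are $\lesssim\poly(\sigma,\Vert\vB\Vert_\op)\,\kappa^{4}\rho^{2L}(1-\rho)^{-2}\,p^{2}(p+1)^{L+1}(T-L)$. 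Feeding the variations, the increment bound, the net size, and $\log(e/\delta)$ into Freedman — using $\Vert\sum_s Z_s\Vert_\op\cdot\Vert\Cb\Vert_\op$ to bound the target, observing that the $Z_s$-linear term in Freedman is lower order once one uses $\kappa\ge 1$, $p\ge 1$ (and, when $T-L$ is small, the trivial bound $\sum_{t}\Vert\vutil_t\Vert_{\ell_2}\Vert\vepsilon_t\Vert_{\ell_2}$, recalling $\vepsilon_L=0$) — yields exactly the stated inequality with $C_1=\poly(\sigma,\Vert\vB\Vert_\op,\Vert\Cb\Vert_\op)$.

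The main obstacle is conceptual rather than computational. In partially observed \emph{linear} systems, $\vepsilon_t$ is independent of $\vu_{t-L},\dots,\vu_t$, so $\vutil_t\vepsilon_t^\top$ is already a forward martingale difference; here it is not, because $\vutil_t$ and $\vepsilon_t$ share the recent inputs. Recognizing that re-indexing the double sum by the \emph{earliest} driving term $\boldsymbol\xi_s$ and reading the clock backwards restores a martingale — and at the same time isolates the length-$L$ product $\Psi_t$ that carries the $\rho^{L}$ gain — is the crux. The remaining technical care goes into making the predictable quadratic variation telescope despite the within-sequence dependence of $\{\vutil_t\}_{t\ge L}$; here the exact identity $\Vert\vutil_t\Vert_{\ell_2}^2=(p+1)^{L}+p-1$ for sphere-distributed inputs is convenient and obviates any auxiliary high-probability upper bound on $\Vert\vUtil\Vert_F$.
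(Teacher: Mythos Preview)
Your proposal is correct and shares the paper's central insight: re-indexing the double sum by the earliest driving term $\boldsymbol\xi_s=\vB\vu_s+\vw_s$ and reading the clock backwards yields a martingale whose deterministic coefficient carries the $\kappa^{2}\rho^{L}/(1-\rho)$ factor via stability. The paper's execution is slightly leaner: it nets over $\cS^{d_{\vutil}-1}\times\cS^{m-1}$ first, notes that the scalar coefficient $f_s(\vu_{s+1},\dots,\vu_T)$ is deterministically bounded (inputs lie on $\sqrt{p}\,\cS^{p-1}$), and invokes a one-line Azuma/Freedman inequality for subgaussian increments (Lemma~\ref{lem:freedman}), thereby avoiding both the quadratic-variation computation and the auxiliary high-probability event on $\max_s\Vert\vw_s\Vert_{\ell_2}$.
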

The proof of Proposition \ref{prop:truncation bias} relies on the critical observation that the truncation bias term can be rewritten as martingale, namely as follows: for all $\theta \in \cS^{d_{\vutil} - 1}, \lambda \in \cS^{m - 1}$, 
\begin{align}
    \theta^\top\left(\sum_{t=L}^T \vutil_t \vepsilon_t^\top\right)\lambda =  \sum_{s=0}^{T-L-1} (\vB \vu_s + \vw_s)^\top \vf_s(\theta, \lambda, ,\vu_{s+1}, \dots, \vu_{T}), 
\end{align}
where the functions $\lbrace \vf_{s} \rbrace_{s \ge 0}$ are nonlinear in their arguments. Moreover, thanks to our choice of inputs and the stability Assumption \ref{ass:stability}, the terms $\vf_s(\theta, \lambda, ,\vu_{s+1}, \dots, \vu_{T})$ are bounded and do not scale with $T$. Thus, we use classical concentration tools to deduce our final bounds. The details of the proofs including the precise definition of $\lbrace \vf_{s} \rbrace_{s \ge 0}$ are delayed until Appendix \ref{app:truncation bias}.

\section{Numerical Experiments} \label{sec:experiments}

For our experiments, we consider a partially observed bilinear dynamical system~\eqref{eqn:bilinear sys} with $n {=} 5$ hidden states, input dimension $p {=} 2$, and output dimension $m{=}2$. The dynamics matrices $\vA_0, \vA_1, \vA_2$ are constructed with i.i.d. $\Ncal(0,1)$ entries, and are scaled to have spectral radius $\rho(\vA_0) {=} \rho_0$ and $\rho(\vA_1) {=} \rho(\vA_2) {=} \rho_k$, where $\rho_0,\rho_k$ are hyper-parameters in our experiments. Similarly, $\vB,\vC$ and $\vD$ are generated with i.i.d. $\Ncal(0, 1/n)$ and $\Ncal(0,1/m)$ entries, respectively. The noise processes $\{\vw_t\}_{t=0}^T$, and $\{\vz_t\}_{t=0}^T$ are chosen according to i.i.d. Gaussian distribution with zero mean and variances $\sigma^2 = 0.0001$. Lastly, the control inputs are either sampled uniformly at random from the sphere $\sqrt{p}\cdot\cS^{p-1}$ or sampled i.i.d. from a Gaussian distribution $\Ncal(0, \vI_p)$.

In Figure~\ref{figure1}, we plot the estimation error $\norm{\vG - \vGhat}_{\op}^2$ over different values of $\rho_0, \rho_k, L$ and $T$. Each experiment is repeated $10$ times and we plot the mean and one standard deviation. Figure~\ref{fig1a} and Figure~\ref{fig1c} 
correspond to estimation with Gaussian inputs $\{\vu_t\}_{t\geq 0} \distas \Ncal(0, \Iden_p)$, whereas, Figure~\ref{fig1b} and Figure~\ref{fig1d} correspond to estimation with uniformly distributed inputs $ \{\vu_t\}_{t\geq 0} \distas \mathrm{Unif}(\sqrt{p}\cdot \cS^{d-1})$.

Figure~\ref{figure1} shows that the estimation error increases with $L$, because the number of Markov-like parameters increases exponentially in $L$. More interestingly, Figure~\ref{figure1} suggests that choosing the inputs $\{\vu_t\}_{t\geq 0} \distas \mathrm{Unif}(\sqrt{p}\cdot \cS^{d-1})$ leads to more accurate estimation than $\{\vu_t\}_{t\geq 0} \distas \Ncal(0, \Iden_p)$ at any given $T$. This seems to highlight the role of hypercontractivity parameter $\gamma$ in our estimation problem. 
Recall that $\gamma = 3$ for Gaussian inputs, whereas, $\gamma = \frac{3}{1+2/p}$ for the uniformly distributed input.

We observe double descent curves~\citep{nakkiran2020optimal} for $L{=}7$. This is because our regression problem is unregularized and has $(p+1)^L + p - 1 = 2188$ unknown parameters, and the number of input-features~($\vutil_t$) is $T-7$. Hence, for $L=7$, we see the peak at $T=2195$, and the error decays smoothly after this point. Note that the peak occurs at $T-L = (p+1)^L + p - 1$~(where the number of unknown parameters become equal to the number of input-features). For $L = \{5,6\}$, we do not see the double descent because  we start at $T=1000$ which is greater than $(p+1)^L + p - 1$.

\begin{figure}[t!]
    \begin{centering}
        \begin{subfigure}[t]{1.5in}
            \includegraphics[width=\linewidth]{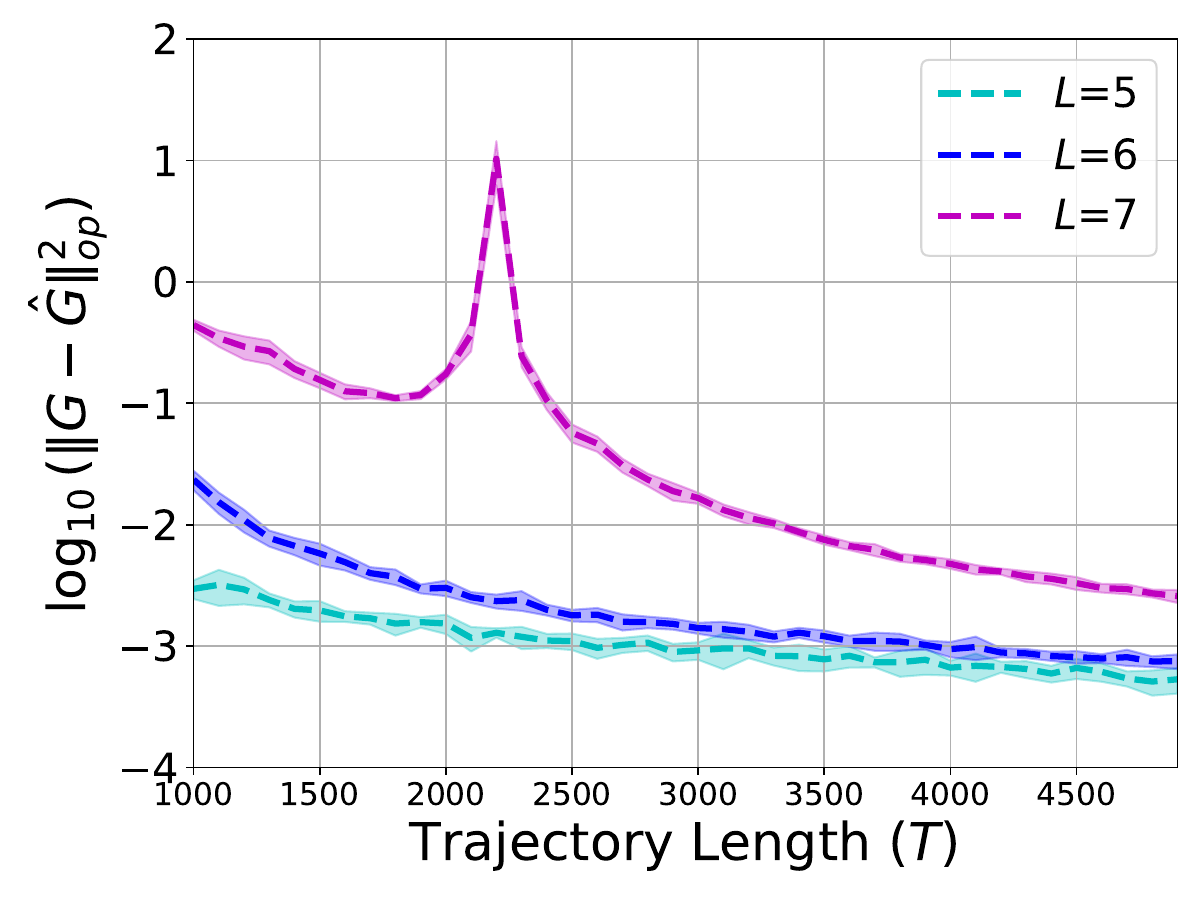}
\caption{$\{\vu_t\}_{t\geq 0} {\distas} \Ncal(0, \Iden_p)$,  $\rho_0 {=} 0.4,\rho_k {=} 0.2$}\label{fig1a}
        \end{subfigure}
    \end{centering}
    ~
    \begin{centering}
        \begin{subfigure}[t]{1.5in}
            \includegraphics[width=\linewidth]{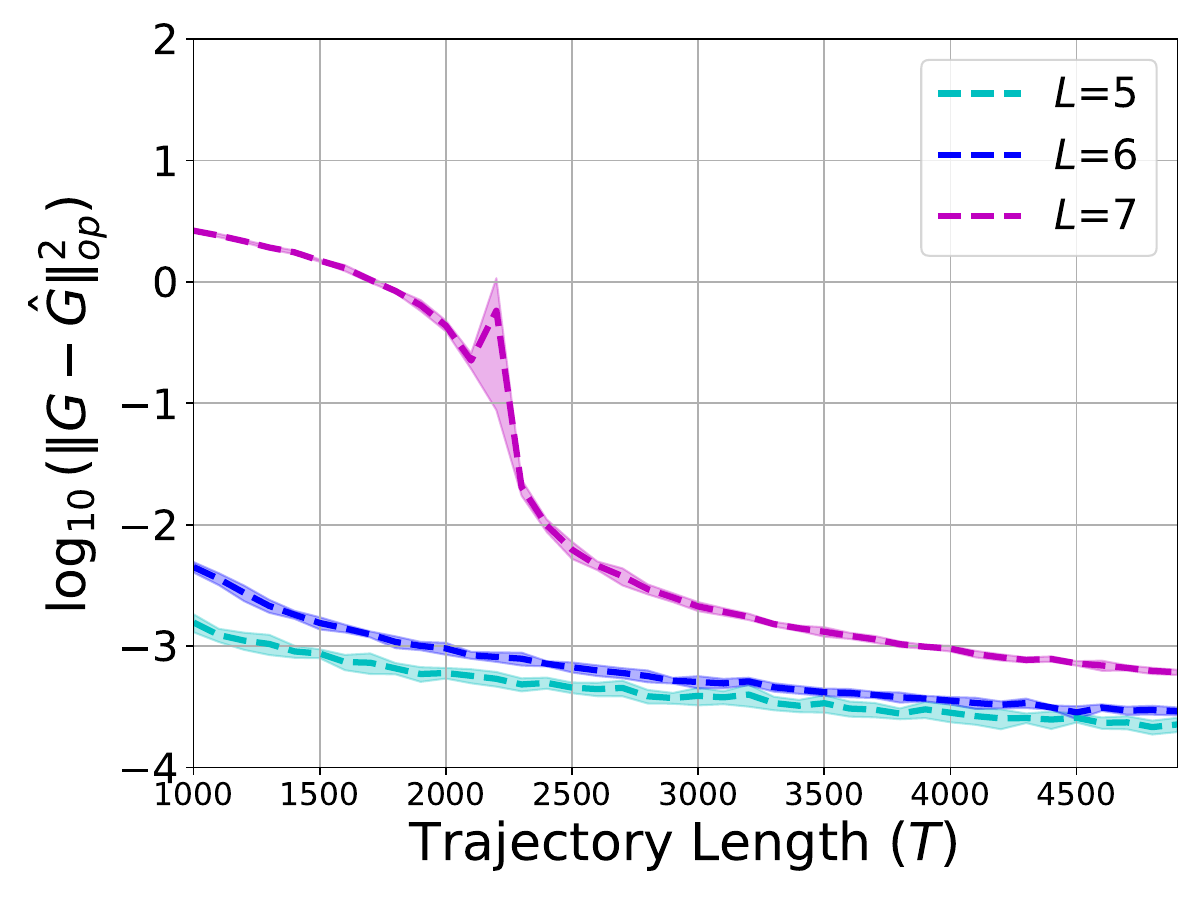}
\caption{$\{\vu_t\}_{t\geq 0} {\distas} \mathrm{Unif}(\sqrt{p}\cdot \cS^{d-1})$,  $\rho_0 {=} 0.4,\rho_k {=} 0.2$}\label{fig1b}
        \end{subfigure}
     \end{centering}
     ~
    \begin{centering}
        \begin{subfigure}[t]{1.5in}
            \includegraphics[width=\linewidth]{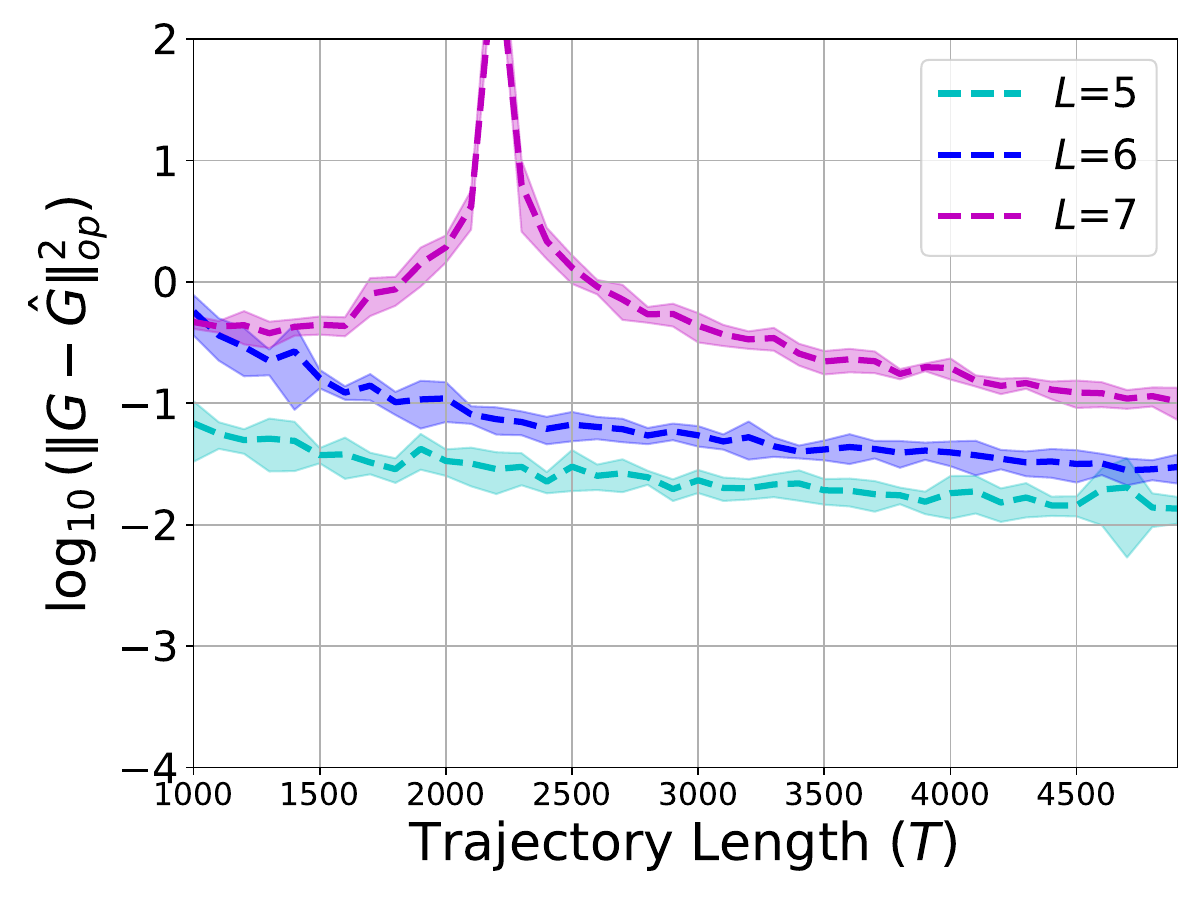}
\caption{$\{\vu_t\}_{t\geq 0} {\distas} \Ncal(0, \Iden_p)$,  $\rho_0 {=} 0.6,\rho_k {=} 0.4$}\label{fig1c}
        \end{subfigure}
    \end{centering}
    ~
    \begin{centering}
        \begin{subfigure}[t]{1.5in}
            \includegraphics[width=\linewidth]{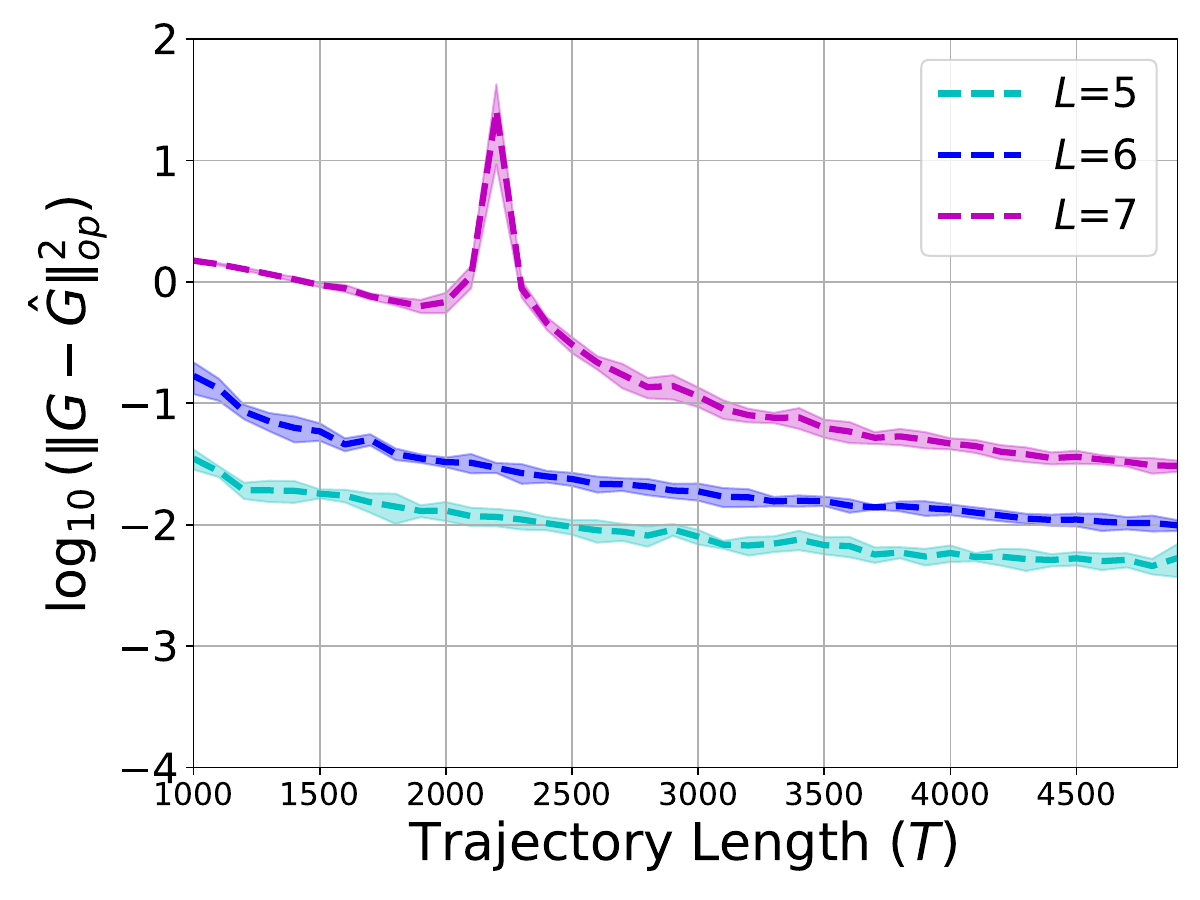}
\caption{$\{\vu_t\}_{t\geq 0} {\distas} \mathrm{Unif}(\sqrt{p}\cdot \cS^{d-1})$,  $\rho_0 {=} 0.6,\rho_k {=} 0.4$}\label{fig1d}
        \end{subfigure}
    \end{centering}
    \caption{ We plot the estimation error $\norm{\vG - \vGhat}_{\op}^2$ over different values of $T$, $L$, $\rho(\vA_0)$, $\rho(\vA_1)$, $\rho(\vA_2) $ while fixing $n{=}5$, $p{=}2$ and $m{=}2$. Figure~\ref{fig1a} and Figure~\ref{fig1c} correspond to estimation with Gaussian inputs $\{\vu_t\}_{t\geq 0} \distas \Ncal(0, \Iden_p)$, whereas, Figure~\ref{fig1b} and Figure~\ref{fig1d} correspond to estimation with uniformly distributed inputs $ \{\vu_t\}_{t\geq 0} \distas \mathrm{Unif}(\sqrt{p}\cdot \cS^{d-1})$. Our plots show that the later input choice gives better estimation of $\vG$ as compared to the first one. Moreover, the estimation error increases as $L$ and $\rho$ increases, whereas, it decreases as $T$ increases. }
    \label{figure1}
\end{figure}

\section{Conclusion and Future Direction}\label{sec:conclusion}
We provide the first non-asymptotic learning bounds for partially observed BLDS. 
Given finite input-output data sampled from a single trajectory of BLDS, we learn its Markov-like parameters, provide an upper bound on the estimation error with $\tilde{\Ocal}(1/\sqrt{T})$ dependence, and provide a bound on the number of samples required, scaling as $\tilde{\Omega}\left( (p+1)^{L+1}\right)$. These parameters uniquely characterize the input-output map of a BLDS via nonlinear input features, hence, can be used to recover the state-space matrices.
Our results hold under a novel notion of stability that generalizes the classic notion of stability for LDS to the BLDS.

\medskip 

There are several interesting future directions. First, can the exponential dependence on $L$ be avoided? We believe this can be done by carefully designing the inputs such that the number of Markov-like parameters do not grow exponentially in $L$. Second, can our results be extended to account for marginally-stable BLDS? This requires stabilization of a partially observed BLDS with unknown state-space matrices which itself is an interesting future direction. Other possible directions include exploring the benefit of regularization (e.g., Hankel nuclear norm regularization) for BLDS identification, exploring gradient-based methods for learning partially observed BLDS, and adaptive control of BLDS.

\section*{Acknowledgements}
S.D. was partly supported by NSF CCF 2312774, NSF OAC-2311521, NSF IIS-2442137, a PCCW Affinito-Stewart Award, a Gift to the LinkedIn-Cornell Bowers CIS Strategic Partnership, and an AI2050 Early Career Fellowship program at Schmidt Sciences. M.F. was supported in part by awards NSF TRIPODS II 2023166, NSF CCF 2007036, NSF CCF 2212261, NSF CCF 2312775, and by an Amazon-UW Hub gift award. Y.J. was partially supported by the Knut and Alice Wallenberg Foundation Postdoctoral Scholarship Program at MIT-KAW 2022.0366.

\newpage 
\bibliographystyle{alpha}
\bibliography{Bibfiles}

\newcommand{\etalchar}[1]{$^{#1}$}
\begin{thebibliography}{BHPBL12}

\bibitem[AYPS11]{abbasi2011improved}
Yasin Abbasi-Yadkori, D{\'a}vid P{\'a}l, and Csaba Szepesv{\'a}ri.
\newblock Improved algorithms for linear stochastic bandits.
\newblock {\em Advances in neural information processing systems}, 24, 2011.

\bibitem[BCB14]{bahdanau2014neural}
Dzmitry Bahdanau, Kyunghyun Cho, and Yoshua Bengio.
\newblock Neural machine translation by jointly learning to align and
  translate.
\newblock {\em arXiv preprint arXiv:1409.0473}, 2014.

\bibitem[BFV21]{bruder2021advantages}
Daniel Bruder, Xun Fu, and Ram Vasudevan.
\newblock Advantages of bilinear koopman realizations for the modeling and
  control of systems with unknown dynamics.
\newblock {\em IEEE Robotics and Automation Letters}, 6(3):4369--4376, 2021.

\bibitem[BHPBL12]{berk2012identification}
N~Berk~Hizir, Minh~Q Phan, Raimondo Betti, and Richard~W Longman.
\newblock Identification of discrete-time bilinear systems through equivalent
  linear models.
\newblock {\em Nonlinear Dynamics}, 69(4):2065--2078, 2012.

\bibitem[BK22]{brunton2022data}
Steven~L Brunton and J~Nathan Kutz.
\newblock {\em Data-driven science and engineering: Machine learning, dynamical
  systems, and control}.
\newblock Cambridge University Press, 2022.

\bibitem[BLMY23]{bakshi2023new}
Ainesh Bakshi, Allen Liu, Ankur Moitra, and Morris Yau.
\newblock A new approach to learning linear dynamical systems.
\newblock In {\em Proceedings of the 55th Annual ACM Symposium on Theory of
  Computing}, pages 335--348, 2023.

\bibitem[BPK16]{brunton2016discovering}
Steven~L Brunton, Joshua~L Proctor, and J~Nathan Kutz.
\newblock Discovering governing equations from data by sparse identification of
  nonlinear dynamical systems.
\newblock {\em Proceedings of the national academy of sciences},
  113(15):3932--3937, 2016.

\bibitem[BR19]{bahmani2019convex}
Sohail Bahmani and Justin Romberg.
\newblock Convex programming for estimation in nonlinear recurrent models.
\newblock {\em arXiv preprint arXiv:1908.09915}, 2019.

\bibitem[CSAI24]{chatzikiriakos2024end}
Nicolas Chatzikiriakos, Robin Str{\"a}sser, Frank Allg{\"o}wer, and Andrea
  Iannelli.
\newblock End-to-end guarantees for indirect data-driven control of bilinear
  systems with finite stochastic data.
\newblock {\em arXiv preprint arXiv:2409.18010}, 2024.

\bibitem[DM22]{djehiche2022efficient}
Boualem Djehiche and Othmane Mazhar.
\newblock Efficient learning of hidden state lti state space models of unknown
  order.
\newblock {\em arXiv preprint arXiv:2202.01625}, 2022.

\bibitem[DMM{\etalchar{+}}18]{dean2018regret}
Sarah Dean, Horia Mania, Nikolai Matni, Benjamin Recht, and Stephen Tu.
\newblock Regret bounds for robust adaptive control of the linear quadratic
  regulator.
\newblock In {\em Advances in Neural Information Processing Systems}, pages
  4188--4197, 2018.

\bibitem[DMM{\etalchar{+}}19]{dean2019sample}
Sarah Dean, Horia Mania, Nikolai Matni, Benjamin Recht, and Stephen Tu.
\newblock On the sample complexity of the linear quadratic regulator.
\newblock {\em FOCM}, pages 1--47, 2019.

\bibitem[DST{\etalchar{+}}22]{du2022data}
Zhe Du, Yahya Sattar, Davoud~Ataee Tarzanagh, Laura Balzano, Necmiye Ozay, and
  Samet Oymak.
\newblock Data-driven control of markov jump systems: Sample complexity and
  regret bounds.
\newblock In {\em 2022 American Control Conference (ACC)}, pages 4901--4908.
  IEEE, 2022.

\bibitem[FMS19]{fattahi2019learning}
Salar Fattahi, Nikolai Matni, and Somayeh Sojoudi.
\newblock Learning sparse dynamical systems from a single sample trajectory.
\newblock In {\em 2019 IEEE 58th Conference on Decision and Control (CDC)},
  pages 2682--2689. IEEE, 2019.

\bibitem[FPST13]{fazel2013hankel}
M.~Fazel, T.~K. Pong, D.~Sun, and P.~Tseng.
\newblock Hankel matrix rank minimization with applications to system
  identification and realization.
\newblock {\em SIAM Journal on Matrix Analysis and Applications},
  34(3):946--977, 2013.
\newblock citations: 538.

\bibitem[FTM18]{faradonbeh2018finite}
Mohamad Kazem~Shirani Faradonbeh, Ambuj Tewari, and George Michailidis.
\newblock Finite time identification in unstable linear systems.
\newblock {\em Automatica}, 96:342--353, 2018.

\bibitem[GP17]{goswami2017global}
Debdipta Goswami and Derek~A Paley.
\newblock Global bilinearization and controllability of control-affine
  nonlinear systems: A koopman spectral approach.
\newblock In {\em 2017 IEEE 56th Annual Conference on Decision and Control
  (CDC)}, pages 6107--6112. IEEE, 2017.

\bibitem[HK66]{ho1966effective}
BL~Ho and Rudolf~E K{\'a}lm{\'a}n.
\newblock Effective construction of linear state-variable models from
  input/output functions.
\newblock {\em at-Automatisierungstechnik}, 14(1-12):545--548, 1966.

\bibitem[HMR18]{hardt2018gradient}
Moritz Hardt, Tengyu Ma, and Benjamin Recht.
\newblock Gradient descent learns linear dynamical systems.
\newblock {\em The Journal of Machine Learning Research}, 19(1):1025--1068,
  2018.

\bibitem[HSZ17]{hazan2017learning}
Elad Hazan, Karan Singh, and Cyril Zhang.
\newblock Learning linear dynamical systems via spectral filtering.
\newblock {\em Advances in Neural Information Processing Systems}, 30, 2017.

\bibitem[JKNN21]{jain2021near}
Prateek Jain, Suhas~S Kowshik, Dheeraj Nagaraj, and Praneeth Netrapalli.
\newblock Near-optimal offline and streaming algorithms for learning non-linear
  dynamical systems.
\newblock {\em Advances in Neural Information Processing Systems}, 34, 2021.

\bibitem[JP20]{jedra2020finite}
Yassir Jedra and Alexandre Proutiere.
\newblock Finite-time identification of stable linear systems optimality of the
  least-squares estimator.
\newblock In {\em 2020 59th IEEE Conference on Decision and Control (CDC)},
  pages 996--1001. IEEE, 2020.

\bibitem[Jua05]{juang2005continuous}
Jer-Nan Juang.
\newblock Continuous-time bilinear system identification.
\newblock {\em Nonlinear Dynamics}, 39(1):79--94, 2005.

\bibitem[KC85]{kubrusly1985mean}
C~Kubrusly and O~Costa.
\newblock Mean square stability conditions for discrete stochastic bilinear
  systems.
\newblock {\em IEEE Transactions on Automatic Control}, 30(11):1082--1087,
  1985.

\bibitem[KM15]{koltchinskii2015bounding}
Vladimir Koltchinskii and Shahar Mendelson.
\newblock Bounding the smallest singular value of a random matrix without
  concentration.
\newblock {\em International Mathematics Research Notices},
  2015(23):12991--13008, 2015.

\bibitem[KMKB11]{kombrink2011recurrent}
Stefan Kombrink, Tomas Mikolov, Martin Karafi{\'a}t, and Luk{\'a}s Burget.
\newblock Recurrent neural network based language modeling in meeting
  recognition.
\newblock In {\em Interspeech}, volume~11, pages 2877--2880, 2011.

\bibitem[KRB21]{koren2021advances}
Yehuda Koren, Steffen Rendle, and Robert Bell.
\newblock Advances in collaborative filtering.
\newblock {\em Recommender systems handbook}, pages 91--142, 2021.

\bibitem[KS91]{kowalski1991nonlinear}
Krzysztof Kowalski and W-H Steeb.
\newblock {\em Nonlinear dynamical systems and Carleman linearization}.
\newblock World Scientific, 1991.

\bibitem[KSD24]{kazemian2024random}
Kimia Kazemian, Yahya Sattar, and Sarah Dean.
\newblock Random features approximation for control-affine systems.
\newblock {\em arXiv preprint arXiv:2406.06514}, 2024.

\bibitem[LAHA20]{lale2020logarithmic}
Sahin Lale, Kamyar Azizzadenesheli, Babak Hassibi, and Anima Anandkumar.
\newblock Logarithmic regret bound in partially observable linear dynamical
  systems.
\newblock {\em Advances in Neural Information Processing Systems},
  33:20876--20888, 2020.

\bibitem[Lo75]{Lo1975bilinear}
James Ting-Ho Lo.
\newblock Global bilinearization of systems with control appearing linearly.
\newblock {\em SIAM Journal on Control}, 13(4):879--885, 1975.

\bibitem[MFS{\etalchar{+}}20]{mhammedi2020learning}
Zakaria Mhammedi, Dylan~J Foster, Max Simchowitz, Dipendra Misra, Wen Sun,
  Akshay Krishnamurthy, Alexander Rakhlin, and John Langford.
\newblock Learning the linear quadratic regulator from nonlinear observations.
\newblock {\em Advances in Neural Information Processing Systems},
  33:14532--14543, 2020.

\bibitem[MGDL24]{musavi2024identification}
Negin Musavi, Ziyao Guo, Geir Dullerud, and Yingying Li.
\newblock Identification of analytic nonlinear dynamical systems with
  non-asymptotic guarantees.
\newblock {\em arXiv preprint arXiv:2411.00656}, 2024.

\bibitem[MJR22]{mania2022active}
Horia Mania, Michael~I Jordan, and Benjamin Recht.
\newblock Active learning for nonlinear system identification with guarantees.
\newblock {\em Journal of Machine Learning Research}, 23(32):1--30, 2022.

\bibitem[MKM{\etalchar{+}}23]{monfared2023stabilization}
Morteza~Nazari Monfared, Yu~Kawano, Juan~E Machado, Daniele Astolfi, and
  Michele Cucuzzella.
\newblock Stabilization for a class of bilinear systems: A unified approach.
\newblock {\em IEEE Control Systems Letters}, 7:2791--2796, 2023.

\bibitem[Moh73]{bilinearbook}
Ronald~R. Mohler.
\newblock {\em Bilinear Control Processes: With Applications to Engineering,
  Ecology, and Medicine}.
\newblock Elsevier, 1973.

\bibitem[MTR19]{mania2019certainty}
Horia Mania, Stephen Tu, and Benjamin Recht.
\newblock Certainty equivalence is efficient for linear quadratic control.
\newblock In {\em NeurIPS}, 2019.

\bibitem[NVKM20]{nakkiran2020optimal}
Preetum Nakkiran, Prayaag Venkat, Sham Kakade, and Tengyu Ma.
\newblock Optimal regularization can mitigate double descent.
\newblock {\em arXiv preprint arXiv:2003.01897}, 2020.

\bibitem[OO21]{oymak2021revisiting}
Samet Oymak and Necmiye Ozay.
\newblock Revisiting ho--kalman-based system identification: Robustness and
  finite-sample analysis.
\newblock {\em IEEE Transactions on Automatic Control}, 67(4):1914--1928, 2021.

\bibitem[Oym19]{oymak2019stochastic}
Samet Oymak.
\newblock Stochastic gradient descent learns state equations with nonlinear
  activations.
\newblock In {\em Conference on Learning Theory}, pages 2551--2579, 2019.

\bibitem[PY10]{pardalos2010optimization}
Panos~M Pardalos and Vitaliy~A Yatsenko.
\newblock {\em Optimization and control of bilinear systems: theory,
  algorithms, and applications}, volume~11.
\newblock Springer Science \& Business Media, 2010.

\bibitem[SACM24]{sayedana2024strong}
Borna Sayedana, Mohammad Afshari, Peter~E Caines, and Aditya Mahajan.
\newblock Strong consistency and rate of convergence of switched least squares
  system identification for autonomous markov jump linear systems.
\newblock {\em IEEE Transactions on Automatic Control}, 2024.

\bibitem[SBR19]{simchowitz2019learning}
Max Simchowitz, Ross Boczar, and Benjamin Recht.
\newblock Learning linear dynamical systems with semi-parametric least squares.
\newblock In {\em Conference on Learning Theory}, pages 2714--2802. PMLR, 2019.

\bibitem[SDT{\etalchar{+}}21]{sattar2021identification}
Yahya Sattar, Zhe Du, Davoud~Ataee Tarzanagh, Laura Balzano, Necmiye Ozay, and
  Samet Oymak.
\newblock Identification and adaptive control of markov jump systems: Sample
  complexity and regret bounds.
\newblock {\em arXiv preprint arXiv:2111.07018}, 2021.

\bibitem[SJD24]{sattar2024learning}
Yahya Sattar, Yassir Jedra, and Sarah Dean.
\newblock Learning linear dynamics from bilinear observations.
\newblock {\em arXiv preprint arXiv:2409.16499}, 2024.

\bibitem[SMT{\etalchar{+}}18]{simchowitz2018learning}
Max Simchowitz, Horia Mania, Stephen Tu, Michael~I Jordan, and Benjamin Recht.
\newblock Learning without mixing: Towards a sharp analysis of linear system
  identification.
\newblock In {\em Conference On Learning Theory}, pages 439--473. PMLR, 2018.

\bibitem[SO22]{sattar2020non}
Yahya Sattar and Samet Oymak.
\newblock Non-asymptotic and accurate learning of nonlinear dynamical systems.
\newblock {\em Journal of Machine Learning Research}, 23(140):1--49, 2022.

\bibitem[SOF20]{sun2020finite}
Yue Sun, Samet Oymak, and Maryam Fazel.
\newblock Finite sample system identification: Optimal rates and the role of
  regularization.
\newblock In {\em Learning for dynamics and control}, pages 16--25. PMLR, 2020.

\bibitem[SOF22]{sun2022finite}
Yue Sun, Samet Oymak, and Maryam Fazel.
\newblock Finite sample identification of low-order lti systems via nuclear
  norm regularization.
\newblock {\em IEEE Open Journal of Control Systems}, 1:237--254, 2022.

\bibitem[SOO22]{sattar2022finite}
Yahya Sattar, Samet Oymak, and Necmiye Ozay.
\newblock Finite sample identification of bilinear dynamical systems.
\newblock In {\em 2022 IEEE 61st Conference on Decision and Control (CDC)},
  pages 6705--6711. IEEE, 2022.

\bibitem[SR19]{sarkar2019near}
Tuhin Sarkar and Alexander Rakhlin.
\newblock Near optimal finite time identification of arbitrary linear dynamical
  systems.
\newblock In {\em ICML}, pages 5610--5618. PMLR, 2019.

\bibitem[SRD19a]{sarkar2019nonparametric}
Tuhin Sarkar, Alexander Rakhlin, and Munther Dahleh.
\newblock Nonparametric system identification of stochastic switched linear
  systems.
\newblock In {\em 2019 IEEE 58th Conference on Decision and Control (CDC)},
  pages 3623--3628. IEEE, 2019.

\bibitem[SRD19b]{sarkar2019data}
Tuhin Sarkar, Alexander Rakhlin, and Munther Dahleh.
\newblock Nonparametric system identification of stochastic switched linear
  systems.
\newblock In {\em 2019 IEEE 58th Conference on Decision and Control (CDC)},
  pages 3623--3628, 2019.

\bibitem[SRD21]{sarkar2019finite}
Tuhin Sarkar, Alexander Rakhlin, and Munther~A Dahleh.
\newblock Finite time lti system identification.
\newblock {\em Journal of Machine Learning Research}, 22:1--61, 2021.

\bibitem[SSA80]{svoronos1980bilinear}
Spyros Svoronos, George Stephanopoulos, and Rutherford Aris.
\newblock Bilinear approximation of general non-linear dynamic systems with
  linear inputs.
\newblock {\em International Journal of Control}, 31(1):109--126, 1980.

\bibitem[Str18]{strogatz2018nonlinear}
Steven~H Strogatz.
\newblock {\em Nonlinear dynamics and chaos: with applications to physics,
  biology, chemistry, and engineering}.
\newblock CRC press, 2018.

\bibitem[SWM09]{sontag2009input}
Eduardo~D Sontag, Yuan Wang, and Alexandre Megretski.
\newblock Input classes for identifiability of bilinear systems.
\newblock {\em IEEE Transactions on Automatic Control}, 54(2):195--207, 2009.

\bibitem[TBPR17]{tu2017non}
Stephen Tu, Ross Boczar, Andrew Packard, and Benjamin Recht.
\newblock Non-asymptotic analysis of robust control from coarse-grained
  identification.
\newblock {\em arXiv preprint arXiv:1707.04791}, 2017.

\bibitem[TDD{\etalchar{+}}21]{taylor2021towards}
Andrew~J Taylor, Victor~D Dorobantu, Sarah Dean, Benjamin Recht, Yisong Yue,
  and Aaron~D Ames.
\newblock Towards robust data-driven control synthesis for nonlinear systems
  with actuation uncertainty.
\newblock In {\em 2021 60th IEEE Conference on Decision and Control (CDC)},
  pages 6469--6476. IEEE, 2021.

\bibitem[TP19]{tsiamis2019finite}
Anastasios Tsiamis and George~J Pappas.
\newblock Finite sample analysis of stochastic system identification.
\newblock In {\em 2019 IEEE 58th Conference on Decision and Control (CDC)},
  pages 3648--3654. IEEE, 2019.

\bibitem[Ver10]{vershynin2010introduction}
Roman Vershynin.
\newblock Introduction to the non-asymptotic analysis of random matrices.
\newblock {\em arXiv preprint arXiv:1011.3027}, 2010.

\bibitem[Wai19]{wainwright2019high}
Martin~J Wainwright.
\newblock {\em High-dimensional statistics: A non-asymptotic viewpoint},
  volume~48.
\newblock Cambridge university press, 2019.

\bibitem[WJ20]{wagenmaker2020active}
Andrew Wagenmaker and Kevin Jamieson.
\newblock Active learning for identification of linear dynamical systems.
\newblock In {\em Conference on Learning Theory}, pages 3487--3582. PMLR, 2020.

\bibitem[Yu94]{yu1994rates}
Bin Yu.
\newblock Rates of convergence for empirical processes of stationary mixing
  sequences.
\newblock {\em The Annals of Probability}, pages 94--116, 1994.

\bibitem[ZSM22]{ziemann2022single}
Ingvar~M Ziemann, Henrik Sandberg, and Nikolai Matni.
\newblock Single trajectory nonparametric learning of nonlinear dynamics.
\newblock In {\em Conference on Learning Theory}, pages 3333--3364. PMLR, 2022.

\bibitem[ZTL{\etalchar{+}}23]{ziemann2023tutorial}
Ingvar Ziemann, Anastasios Tsiamis, Bruce Lee, Yassir Jedra, Nikolai Matni, and
  George~J Pappas.
\newblock A tutorial on the non-asymptotic theory of system identification.
\newblock In {\em 2023 62nd IEEE Conference on Decision and Control (CDC)},
  pages 8921--8939. IEEE, 2023.

\end{thebibliography}

\newpage 

\appendix
\section{Stability of Bilinear Dynamical Systems} \label{app:stability}

In this appendix, we present some results that concerns the stability of bilinear system in the sense of Definition \ref{def:stability}. In Lemma \ref{lem:stability notion}, we present a condition under which a bilinear system satisfies uniform stability. In Lemma \ref{lem:upper bound stability}, we present a bound on $\Vert \vF\Vert_\op$ which follows under our stability assumptions.

\begin{lemma}\label{lem:stability notion}
    Let $\cU$ be a bounded and non-empty subset of $\RR^{p}$ such that $\rho(\cU \circ \vA) < 1$, then for all $ \rho \in (\rho(\cU \circ \vA), 1)$, there exists $\kappa \ge 1$ such that the  system is $(\cU , \kappa, \rho)$-uniformly stable.
\end{lemma}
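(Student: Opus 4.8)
The plan is to deduce the lemma from the standard fact that, for a \emph{bounded} set of matrices $\cM$ with joint spectral radius $\rho(\cM)$, the transient factor $\phi(\cM,\rho)$ is finite for every $\rho>\rho(\cM)$; once this is in hand the lemma follows immediately by taking $\kappa:=\phi(\cU\circ\vA,\rho)\vee 1$.

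First I would record that $\cM:=\cU\circ\vA=\{\Ab_0+\sum_{k=1}^p(\ub)_k\Ab_k:\ub\in\cU\}$ is bounded in operator norm: since $\cU$ is bounded there is $R<\infty$ with $\tn{\ub}\le R$ for all $\ub\in\cU$, and then by the triangle and Cauchy--Schwarz inequalities $\Vert\ub\circ\vA\Vert_\op\le\Vert\Ab_0\Vert_\op+R\,(\sum_{k=1}^p\Vert\Ab_k\Vert_\op^2)^{1/2}=:b<\infty$. Writing $\bar\rho_\ell:=\sup_{\vM_1,\dots,\vM_\ell\in\cM}\Vert\vM_1\cdots\vM_\ell\Vert_\op$, submultiplicativity of $\Vert\cdot\Vert_\op$ gives $\bar\rho_{\ell+\ell'}\le\bar\rho_\ell\,\bar\rho_{\ell'}$, so $\ell\mapsto\log\bar\rho_\ell$ is subadditive and bounded above by $\ell\log b$; Fekete's subadditivity lemma then shows $\bar\rho_\ell^{1/\ell}$ converges with $\lim_\ell\bar\rho_\ell^{1/\ell}=\inf_{\ell\ge1}\bar\rho_\ell^{1/\ell}=\rho(\cM)$. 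This is the only place where boundedness of $\cU$ enters, and it is precisely what turns the $\limsup$ in the definition of $\rho(\cM)$ into a genuine limit equal to the infimum.

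Next, fix $\rho\in(\rho(\cM),1)$. Since $\bar\rho_\ell^{1/\ell}\to\rho(\cM)<\rho$, there is an index $\ell_0$ with $\bar\rho_\ell\le\rho^\ell$ for all $\ell\ge\ell_0$, hence $\bar\rho_\ell/\rho^\ell\le 1$ for such $\ell$; for the finitely many indices $1\le\ell<\ell_0$ one has the crude bound $\bar\rho_\ell/\rho^\ell\le(b/\rho)^\ell<\infty$. Therefore $\phi(\cM,\rho)=\sup_{\ell\ge1}\bar\rho_\ell/\rho^\ell\le\max\{1,\,\max_{1\le\ell<\ell_0}(b/\rho)^\ell\}<\infty$. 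Setting $\kappa:=\phi(\cU\circ\vA,\rho)\vee 1\ge 1$, condition (i) of Definition~\ref{def:stability} holds because $\rho(\cU\circ\vA)=\rho(\cM)<\rho<1$ by the choice of $\rho$, and condition (ii) holds by construction; hence the system is $(\cU,\kappa,\rho)$-uniformly-stable, as claimed.

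The only genuine subtlety — the main obstacle — is the passage in the second paragraph from the $\limsup$ definition of $\rho(\cM)$ to $\inf_\ell\bar\rho_\ell^{1/\ell}$ via Fekete's lemma, which crucially requires $\cU$ (equivalently $\cM$) to be bounded so that each $\bar\rho_\ell$ is finite; without this $\phi(\cM,\rho)$ need not be finite at all. I would also be careful with the quantifier order: the statement only asks for $\kappa$ to exist for each fixed $\rho$, so letting $\kappa$ (and $\ell_0$, $b$) depend on $\rho,\cU,\vA$ is legitimate. All remaining steps are elementary norm estimates.
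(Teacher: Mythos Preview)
Your proof is correct and follows essentially the same approach as the paper: both arguments find an index $\ell_0$ beyond which $\bar\rho_\ell\le\rho^\ell$ (using convergence of $\bar\rho_\ell^{1/\ell}$ to $\rho(\cM)<\rho$), handle the finitely many small indices via the boundedness of $\cU\circ\vA$, and set $\kappa$ to be the resulting finite supremum (or $1$, whichever is larger). Your version is in fact slightly more careful: you explicitly invoke Fekete's subadditivity lemma to justify that the limit defining $\rho(\cM)$ exists and equals $\inf_{\ell\ge1}\bar\rho_\ell^{1/\ell}$, whereas the paper simply asserts ``since by assumption the limit exists'' without elaboration.
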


\begin{proof}
    Let $\rho \in (\rho(\cU \circ \vA), 1)$. We recall that the joint spectral radius of $\cU \circ \vA$ is defined as follows: 
    \begin{align}
        \rho(\cU \circ \vA ) = \lim_{k \to \infty} \sup_{\vM_1, \dots, \vM_k \in \cU \circ \vA} \Vert \vM_1 \vM_2 \cdots \vM_k \Vert_\op^{1/k}. 
    \end{align}
    Since by assumption the limit exists, we have by definition that 
    \begin{align*}
         \forall \epsilon > 0, \ \ \exists k_0 \ge 1, \ \  \forall k \ge k_0, \quad  \left\vert \sup_{\vM_1, \dots, \vM_k \in \cU \circ \vA }\Vert \vM_1 \vM_2 \cdots \vM_k \Vert_\op^{1/k} - \rho(\cU \circ \vA) \right\vert < \epsilon \rho(\cU \circ \vA).  
ˇ    \end{align*}
    Thus choosing $\epsilon > 0$ such that $\rho \ge  (1+\epsilon) \rho(\cU \circ \vA)$, we can find $k_0$ such that \begin{align*}
        \forall k \ge k_0, \qquad \sup_{\vM_1, \dots, \vM_k \in \cU \circ \vA }\Vert \vM_1 \vM_2 \cdots \vM_k \Vert_\op <  \rho^k.
    \end{align*}
    Now, we can further define 
    \begin{align}
        \kappa = \max\left\lbrace 1, \sup_{1 \le k \le k_0}\sup_{ \vM_1, \dots, \vM_k \in \cU \circ \vA }\frac{\Vert \vM_1 \vM_2 \cdots \vM_k \Vert_\op^{1/k}}{\rho}  \right\rbrace.
    \end{align}
    We note that $\kappa$ is well defined because $\cU \circ \vA$ is a bounded set of matrices since $\cU$ is bounded. Indeed, for all $\vM \in \cU \circ \vA$, we have $\Vert \vM\Vert_\op \le \max\lbrace 1, \sup_{\vu \in \cU }\Vert \vu \Vert_{\ell_\infty}\rbrace (\Vert \vA_0 \Vert_\op  + \dots + \Vert \vA_p \Vert_\op )$. This concludes the proof.
\end{proof}

\begin{lemma}\label{lem:upper bound stability}
Suppose Assumption \ref{ass:stability} holds and 
let $(\vu_t)_{t \ge 0}$ be a sequence of inputs taking values in $\sqrt{p} \cdot\cS^{p-1}$, and . We have 
\begin{align}\label{eq:stability first bound}
    \forall t \ge 0, \qquad \left\Vert \prod_{\ell = 0}^t (\vu_\ell \circ \vA )\right\Vert_\op \le \kappa \rho^{t+1}.
\end{align}
Consequently, we have \begin{align}\label{eq:stability second bound}
    \left\Vert \vF  \right\Vert_\op \le   1 + \frac{\kappa \Vert \vC \Vert_\op  }{1- \rho}
\end{align}
\end{lemma}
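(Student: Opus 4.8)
\textbf{Proof plan for Lemma \ref{lem:upper bound stability}.}

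The plan is to derive \eqref{eq:stability first bound} directly from the definition of $\phi(\cU \circ \vA, \rho)$ applied with $\cU = \sqrt{p}\cdot \cS^{p-1}$, and then to use \eqref{eq:stability first bound} to control $\Vert \vF \Vert_\op$ column-block by column-block. First I would observe that, since each $\vu_\ell \in \sqrt{p}\cdot\cS^{p-1}$, every matrix $\vu_\ell \circ \vA = \vA_0 + \sum_{k=1}^p (\vu_\ell)_k \vA_k$ lies in the set $\sqrt{p}\cdot\cS^{p-1} \circ \vA$. Hence the product $\prod_{\ell=0}^t (\vu_\ell \circ \vA)$ is a product of $t+1$ matrices drawn from this set, and Assumption \ref{ass:stability} together with the definition
\[
    \phi(\cU \circ \vA, \rho) = \sup_{k \ge 1,\ \vM_1, \dots, \vM_k \in \cU \circ \vA} \frac{\Vert \vM_1 \vM_2 \cdots \vM_k \Vert_\op}{\rho^k}
\]
gives $\Vert \prod_{\ell=0}^t (\vu_\ell \circ \vA)\Vert_\op \le \kappa \, \rho^{t+1}$, which is exactly \eqref{eq:stability first bound}. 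This step is essentially immediate; no real obstacle here beyond matching the indexing ($k = t+1$ factors).

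Next, for the bound on $\Vert \vF \Vert_\op$ I would recall from \eqref{eqn:util_wtil_vec} that $\vF$ is the block row matrix
\[
    \vF = \begin{bmatrix} \Iden_m & \Cb & \Cb(\vu_{t-1}\circ\vA) & \cdots & \Cb\prod_{\ell=1}^{L-1}(\vu_{t-\ell}\circ\vA)\end{bmatrix}.
\]
Since for a block row matrix $\Vert [\vX_0 \ \vX_1 \ \cdots \ \vX_{L}]\Vert_\op \le \sum_{j=0}^{L} \Vert \vX_j \Vert_\op$ (or, with a slightly better constant, $\le \big(\sum_j \Vert\vX_j\Vert_\op^2\big)^{1/2}$; either suffices), I would bound $\Vert \Iden_m \Vert_\op = 1$, $\Vert \Cb \Vert_\op$, and for $j \ge 1$, $\Vert \Cb\prod_{\ell=1}^{j}(\vu_{t-\ell}\circ\vA)\Vert_\op \le \Vert\Cb\Vert_\op \, \kappa \rho^{j}$ using \eqref{eq:stability first bound} (after re-indexing the product $\prod_{\ell=1}^{j}(\vu_{t-\ell}\circ\vA)$, which is a product of $j$ matrices from $\sqrt p\cdot\cS^{p-1}\circ\vA$, so its norm is at most $\kappa\rho^{j}$). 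Summing,
\[
    \Vert \vF \Vert_\op \le 1 + \Vert\Cb\Vert_\op\sum_{j=0}^{L-1} \kappa\rho^{j} \le 1 + \frac{\kappa \Vert\Cb\Vert_\op}{1-\rho},
\]
where the last step uses the geometric series $\sum_{j\ge 0}\rho^j = 1/(1-\rho)$ and $\kappa \ge 1$ (so that the $j=0$ term $\Vert\Cb\Vert_\op$ is dominated by $\kappa\Vert\Cb\Vert_\op$). This yields \eqref{eq:stability second bound}.

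The only mild subtlety — and the step I would be most careful about — is the re-indexing: in \eqref{eq:stability first bound} the product runs $\prod_{\ell=0}^{t}$ with $t+1$ factors, whereas in $\vF$ the relevant products are $\prod_{\ell=1}^{j}(\vu_{t-\ell}\circ\vA)$ with $j$ factors; I should just note that the bound $\phi(\sqrt p\cdot\cS^{p-1}\circ\vA,\rho)\le\kappa$ applies to \emph{any} finite product of matrices from the set, so a product of $j$ such matrices has operator norm at most $\kappa\rho^{j}$, and the empty product ($j=0$) has norm $1 \le \kappa$. There is no real obstacle; this lemma is a routine consequence of the stability definition, and the proof is a short two-part computation.
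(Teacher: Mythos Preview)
Your proposal is correct and follows essentially the same approach as the paper: the first bound is read off directly from the definition of $\phi(\sqrt{p}\cdot\cS^{p-1}\circ\vA,\rho)\le\kappa$ in Assumption~\ref{ass:stability}, and the second bound is obtained by bounding $\Vert\vF\Vert_\op$ by the sum of the operator norms of its column blocks and invoking the geometric series. Your handling of the indexing is, if anything, slightly more careful than the paper's own proof.
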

\begin{proof}
The first inequality \eqref{eq:stability first bound} holds immediately thanks to stability. The second inequality \eqref{eq:stability second bound} is an immediate consequence of \eqref{eq:stability first bound}. Indeed, we have 
\begin{align*}
        \Vert \vF \Vert_\op \le 1 + \Vert \vC\Vert_\op \sum_{\ell = 1}^{L-1} \left\Vert \prod_{i = 1}^{\ell -1} (\vu_{t- i} \circ A) \right\Vert_\op \le 1 + \Vert \vC\Vert_\op \sum_{\ell = 1}^{L-1} \kappa \rho^{\ell - 1} \le 1 + \frac{\kappa \Vert \vC \Vert_\op}{1- \rho}  
\end{align*} 

\end{proof}

\section{Proofs for Persistence of Excitation}\label{app:persistence excitation}

In this appendix, we will present our results on persistence of excitation under the following assumption unless specified otherwise.

\begin{definition}
    A $p$-dimensional random vector $\vu$ is \emph{isotropic} if for all $\vx \in \RR^p$, $\EE[(\vu^\top \vx)^2] = \Vert \vx \Vert_{\ell_2}^2$.
    It has \emph{zero third moment marginals} if for all $\vx \in \RR^p$, $\EE[(\vu_t^\top \vx)^3] = 0$.
\end{definition}

\begin{assumptionp}{\ref{ass:input distribution}}[Distributional properties of the input]
    $\lbrace \vu_t \rbrace_{t\ge 0}$ is a sequence of independent zero-mean, isotropic, and $(4, 2, \gamma)$-hypercontractive for some $\gamma > 1$, $p$-dimensional random vectors with zero third moment marginals.
\end{assumptionp}
There are many choice of input distribution that satisfy Assumption \ref{ass:input distribution}. Notably, 
if the inputs are sampled independently according to $\cN(0, I_p)$ or $\mathrm{Unif}(\sqrt{p} \cdot \cS^{p-1})$, then Assumption \ref{ass:input distribution} holds with $\gamma = 3$ or $\gamma = \frac{3}{1+2/p}$ respectively. Before we present the proof of our main result on persistence of excitation, we present some intermediate results in the next two subsections.

\subsection{Hypercontractivity to Bounded Moments (Input)}
\begin{lemma}\label{lemma:hyper_to_moment}
     Let $\vQ, \vR \in \RR^{p \times d}$, and $\gamma>0$. Let $\vu$ be an isotropic and $(4,2,\gamma)$-hypercontractive random vector taking values in $\RR^{p}$. We have that 
    $$
    \EE[(\vu^\top \vQ \vQ^\top \vu)(\vu^\top \vR \vR^\top \vu)] \le \gamma  \Vert \vQ \Vert^2_F \Vert \vR \Vert_F^2 
    $$
\end{lemma}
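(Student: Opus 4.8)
The plan is to reduce the claim to a collection of rank-one (scalar) moment estimates via a column decomposition, and then invoke hypercontractivity term by term. First I would write $\vQ = [\vq_1 \ \cdots \ \vq_d]$ and $\vR = [\vr_1 \ \cdots \ \vr_d]$ in terms of their columns, so that $\vQ\vQ^\top = \sum_{i=1}^d \vq_i \vq_i^\top$ and hence $\vu^\top \vQ\vQ^\top \vu = \sum_{i=1}^d (\vu^\top \vq_i)^2$, and similarly $\vu^\top \vR\vR^\top \vu = \sum_{j=1}^d (\vu^\top \vr_j)^2$. Expanding the product and using linearity of expectation then gives
\begin{align*}
\EE\!\left[(\vu^\top \vQ\vQ^\top \vu)(\vu^\top \vR\vR^\top \vu)\right] = \sum_{i=1}^d \sum_{j=1}^d \EE\!\left[(\vu^\top \vq_i)^2 (\vu^\top \vr_j)^2\right].
\end{align*}

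Next I would bound each summand. By the Cauchy--Schwarz inequality applied to the scalar random variables $(\vu^\top \vq_i)^2$ and $(\vu^\top \vr_j)^2$,
\begin{align*}
\EE\!\left[(\vu^\top \vq_i)^2 (\vu^\top \vr_j)^2\right] \le \sqrt{\EE[(\vu^\top \vq_i)^4]\,\EE[(\vu^\top \vr_j)^4]}.
\end{align*}
The $(4,2,\gamma)$-hypercontractivity of $\vu$ gives $\EE[(\vu^\top \vq_i)^4] \le \gamma\, \EE[(\vu^\top \vq_i)^2]^2$, and isotropy gives $\EE[(\vu^\top \vq_i)^2] = \tn{\vq_i}^2$; combining these, $\EE[(\vu^\top \vq_i)^4] \le \gamma \tn{\vq_i}^4$ and likewise $\EE[(\vu^\top \vr_j)^4] \le \gamma \tn{\vr_j}^4$. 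Hence each summand is at most $\gamma \tn{\vq_i}^2 \tn{\vr_j}^2$.

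Finally I would resum over $i$ and $j$:
\begin{align*}
\sum_{i=1}^d \sum_{j=1}^d \gamma \tn{\vq_i}^2 \tn{\vr_j}^2 = \gamma \left(\sum_{i=1}^d \tn{\vq_i}^2\right)\!\left(\sum_{j=1}^d \tn{\vr_j}^2\right) = \gamma \Vert \vQ \Vert_F^2 \Vert \vR \Vert_F^2,
\end{align*}
since the sum of squared column norms of a matrix equals its squared Frobenius norm. This yields the claimed inequality. The argument is essentially routine; the only point requiring care is making sure Cauchy--Schwarz and hypercontractivity are applied to the correct scalar quantities $(\vu^\top \vq_i)^2$, $(\vu^\top \vr_j)^2$ — after which the column decomposition makes everything collapse cleanly into Frobenius norms — and noting that nothing in the argument actually uses that $\vQ$ and $\vR$ have the same number of columns.
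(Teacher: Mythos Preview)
Your proposal is correct and follows essentially the same argument as the paper: column decomposition of $\vQ\vQ^\top$ and $\vR\vR^\top$, expansion into a double sum, Cauchy--Schwarz on each summand, hypercontractivity plus isotropy to bound the fourth moments, and resummation into Frobenius norms. The only addition is your closing remark that the column counts of $\vQ$ and $\vR$ need not match, which is a valid but inessential observation.
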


\begin{proof}
    Note that if $\vu$ is isotropic and $(4,2, \gamma)$-hypercontractive, then for any $\vQ, \vR \in \RR^{p \times d}$, we have 
\begin{align*}
    \EE[(\vu^\top \vQ \vQ^\top \vu)(\vu^\top \vR \vR^\top \vu)] & = \EE\left[\left( \sum_{i=1}^d (\vu^\top \vq_i)^2 \right) \left( \sum_{i=1}^d (\vu^\top \vr_i)^2 \right)  \right], \\
    & = \EE\left[ \sum_{1 \le i,j \le d} (\vu^\top \vq_i)^2(\vu^\top \vr_j)^2  \right], \\
    & \leqsym{a} \sum_{1 \le i,j \le d}\sqrt{\EE\left[ (\vu^\top \vq_i)^4   \right]} \sqrt{\EE\left[  (\vu^\top \vr_j)^4  \right]}, \\
    & \leqsym{b} \sum_{1 \le i,j \le d} \sqrt{\gamma} \EE\left[ (\vu^\top \vq_i)^2   \right] \sqrt{\gamma}\EE\left[  (\vu^\top \vr_j)^2  \right], \\
    & \le \gamma \sum_{1 \le i,j \le d}  \Vert \vq_i \Vert^2_{\ell_2}  \Vert \vr_j \Vert_{\ell_2}^2,  \\
    & \le \gamma  \Vert \vQ \Vert^2_F \Vert \vR \Vert_F^2,  
\end{align*}
where we obtain (a) by using Cauchy-Schwarz inequality, and (b) from the $(4,2, \gamma)$-hypercontractivity assumption. This completes the proof.  
\end{proof}

\subsection{Hypercontractivity to Bounded Moments (Covariates)}
\begin{lemma}\label{lemma:hyper_to_moment_cov}
     Suppose the sequence of inputs $\{\vu_t\}_{t \geq 0}$ satisfy Assumption~\ref{ass:input distribution}. Let $\vutil_t$ be the covariates (input features) as defined in \eqref{eqn:util_wtil_vec}. We have that
     \begin{align*}
         \E[(\vutil_t^\T \vv)^2] = 1 \quad \text{and} \quad \E[(\vutil_t^\T \vv)^4] \leq  L(3 \vee \gamma)^{L+1}, 
     \end{align*}
     for all $t = L, L+1, \dots, T$ and all $\vv \in \Sc^{d_{\util}-1}$
\end{lemma}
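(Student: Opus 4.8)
The plan is to exploit the block structure of $\vutil_t$ together with a careful peeling of its Kronecker factors. Throughout, write $\vv = ((\vv^{(0)})^\T,\dots,(\vv^{(L)})^\T)^\T$ in agreement with the blocks of $\vutil_t$, so that $\vutil_t^\T\vv = \sum_{\ell=0}^{L} S_\ell$ with $S_0 = \vu_t^\T\vv^{(0)}$, $S_1 = \vu_{t-1}^\T\vv^{(1)}$, and $S_\ell = (\ubb_{t-1}\otimes\cdots\otimes\ubb_{t-\ell+1}\otimes\vu_{t-\ell})^\T\vv^{(\ell)}$ for $\ell\ge 2$; since the inputs are i.i.d.\ the law of $\vutil_t$ is the same for all $t\ge L$, so I may fix $t$. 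For the second moment I would first show every cross term vanishes: if $\ell<\ell'$ then $\vu_{t-\ell'}$ enters $S_{\ell'}$ linearly and is independent of all the (zero-mean) randomness in $S_\ell$ and in the remaining factors of $S_{\ell'}$, so conditioning on everything but $\vu_{t-\ell'}$ gives $\E[S_\ell S_{\ell'}]=0$. For the diagonal terms, independence of the inputs with $\E[\vu_s\vu_s^\T]=\Iden_p$ and $\E[\ubb_s\ubb_s^\T]=\Iden_{p+1}$ (isotropy and zero mean of $\vu_s$) give $\E[(\ubb_{t-1}\otimes\cdots\otimes\vu_{t-\ell})(\cdot)^\T]=\Iden$, hence $\E[S_\ell^2]=\|\vv^{(\ell)}\|_{\ell_2}^2$; summing, $\E[(\vutil_t^\T\vv)^2]=\sum_\ell\|\vv^{(\ell)}\|_{\ell_2}^2=1$.

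For the fourth moment I would assemble two auxiliary facts. First, for a test vector $\vx=(x_0,\bar\vx^\T)^\T\in\RR^{p+1}$, $\ubb_s=(1,\vu_s^\T)^\T$ is isotropic and $(4,2,3\vee\gamma)$-hypercontractive: expanding $\E[(x_0+\vu_s^\T\bar\vx)^4]$, the term $4x_0\,\E[(\vu_s^\T\bar\vx)^3]$ vanishes by the zero-third-moment assumption, leaving $x_0^4+6x_0^2\|\bar\vx\|_{\ell_2}^2+\gamma\|\bar\vx\|_{\ell_2}^4\le (3\vee\gamma)(x_0^2+\|\bar\vx\|_{\ell_2}^2)^2$. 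Second, a multilinear moment lemma: if $\vg_1,\dots,\vg_r$ are independent, isotropic, $(4,2,\gamma_i)$-hypercontractive and $W$ is an $r$-mode tensor, then $\E[W(\vg_1,\dots,\vg_r)^4]\le(\prod_i\gamma_i)\|W\|_F^4$, proved by induction on $r$ (peel off $\vg_1$, getting a factor $\gamma_1$ times $\E[\|M(\vg_2,\dots,\vg_r)\|_{\ell_2}^4]$, then bound $\|M\|_{\ell_2}^4=(\sum_k M_k^2)^2$ by Cauchy--Schwarz and the induction hypothesis applied to each component $M_k$). Applied to $S_\ell$, an $\ell$-fold multilinear form in $\ell-1$ copies of $\ubb$-type vectors and one copy of $\vu_{t-\ell}$ (all $(4,2,3\vee\gamma)$-hypercontractive, using $\gamma\le 3\vee\gamma$ for the $\vu$ factor), this gives $\E[S_\ell^4]\le(3\vee\gamma)^{\ell\vee 1}\|\vv^{(\ell)}\|_{\ell_2}^4$.

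The heart of the argument is a telescoping recursion on the partial sums $T_m:=\sum_{\ell=0}^m S_\ell$. Writing $S_m=\vu_{t-m}^\T\vc_m$ for $m\ge 1$, with $\vc_m$ measurable w.r.t.\ $\sigma(\vu_{t-1},\dots,\vu_{t-m+1})$ and $\vu_{t-m}$ independent of $(\vu_t,\dots,\vu_{t-m+1})$, the conditional expectation over $\vu_{t-m}$ kills the odd cross moments: $\E[T_{m-1}^3 S_m]=0$ (zero mean) and $\E[T_{m-1}S_m^3]=0$ (zero third moment), hence
\[
\E[T_m^4] = \E[T_{m-1}^4] + 6\,\E[T_{m-1}^2 S_m^2] + \E[S_m^4].
\]
For the middle term, $\E[T_{m-1}^2 S_m^2]=\E[T_{m-1}^2\|\vc_m\|_{\ell_2}^2]\le\sqrt{\E[T_{m-1}^4]}\,\sqrt{\E[\|\vc_m\|_{\ell_2}^4]}$, and the multilinear lemma applied componentwise to $\vc_m$ (an $(m-1)$-fold multilinear form in $\ubb$-type vectors) gives $\E[\|\vc_m\|_{\ell_2}^4]\le(3\vee\gamma)^{(m-1)\vee 0}\|\vv^{(m)}\|_{\ell_2}^4$. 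Setting $\alpha_m:=\sqrt{\E[T_m^4]}$ and $c_m:=(3\vee\gamma)^{((m-1)\vee 0)/2}\|\vv^{(m)}\|_{\ell_2}^2$, the identity becomes $\alpha_m^2\le\alpha_{m-1}^2+6\alpha_{m-1}c_m+(3\vee\gamma)c_m^2\le(\alpha_{m-1}+\beta c_m)^2$ with $\beta:=3\vee\sqrt{3\vee\gamma}$, so $\alpha_L\le\beta\sum_{m=0}^L c_m\le\beta(3\vee\gamma)^{(L-1)/2}$ since $\sum_m\|\vv^{(m)}\|_{\ell_2}^2=1$. Squaring and checking $\beta^2\le L(3\vee\gamma)^2$ (separately in the regimes $3\vee\gamma\le 9$ and $3\vee\gamma>9$) yields $\E[(\vutil_t^\T\vv)^4]=\alpha_L^2\le L(3\vee\gamma)^{L+1}$.

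The main obstacle is getting the exponential factor to come out as $(3\vee\gamma)^{L+1}$ rather than $(L+1)^{O(1)}(3\vee\gamma)^L$: a naive power-mean bound over the $L+1$ blocks is far too lossy, and one genuinely needs the vanishing of the odd cross moments between consecutive partial sums --- which is precisely where the zero-third-moment assumption re-enters --- so that each step of the recursion only adds $\beta c_m$. The remaining steps (the multilinear fourth-moment lemma, and the verification that $\ubb_s$ inherits isotropy and hypercontractivity) are more routine.
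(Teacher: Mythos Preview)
Your proposal is correct. The second-moment argument is identical in spirit to the paper's, but your fourth-moment argument is a genuinely different route.

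The paper expands $(\sum_{\ell=0}^L S_\ell)^4$ fully via the multinomial theorem and then classifies the resulting monomials into five types (all indices distinct; one repeated index; two repeated indices; a cube times a first power; a fourth power), bounding each class separately. Several of these classes require their own inductive sub-lemmas on the auxiliary Kronecker vectors $\ubb_{t-1}\otimes\cdots\otimes\ubb_{t-k}$, and the final step recombines the estimates into $\sum_\ell\|\vv^{(\ell)}\|^4 + \sum_{\ell\neq \ell'}\|\vv^{(\ell)}\|^2\|\vv^{(\ell')}\|^2 + 2L \le 1+2L \le L(3\vee\gamma)$.

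Your approach instead telescopes on the partial sums $T_m=\sum_{\ell\le m} S_\ell$, using that $\vu_{t-m}$ is independent of $T_{m-1}$ and of the coefficient vector $\vc_m$, so both odd cross moments $\E[T_{m-1}^3 S_m]$ and $\E[T_{m-1}S_m^3]$ vanish in one stroke (by zero mean and zero third moment of $\vu_{t-m}$). This collapses the recursion to the single inequality $\alpha_m\le\alpha_{m-1}+\beta c_m$, and your auxiliary multilinear fourth-moment lemma packages all the inductive content the paper spreads across its ``Quadratic~II'' and ``Quartic'' cases. Your route is cleaner and in fact yields the tighter bound $\beta^2(3\vee\gamma)^{L-1}$ (dropping the factor $L$ when $\gamma\le 9$); you then simply upper-bound this by the stated $L(3\vee\gamma)^{L+1}$. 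What the paper's approach buys is perhaps explicitness about exactly which moment condition kills which term, but your method makes the structural role of the zero-third-moment assumption equally transparent while avoiding the case analysis.
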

\begin{proof}
    To begin, recall the definition of $\vutil_t$ from \eqref{eqn:util_wtil_vec}. For notational convenience, we define another random features of inputs $\vutiltil_t$ as follows,
\begin{align}
	\vutil_t := \begin{bmatrix} \vu_{t} \\ \vu_{t-1} \\ \ubb_{t-1} \otimes \vu_{t-2} \\ \ubb_{t-1} \otimes \ubb_{t-2} \otimes \vu_{t-3} \\ \vdots \\ \ubb_{t-1} \otimes \ubb_{t-2} \otimes \cdots \otimes \vu_{t-L} \end{bmatrix},  \quad 
    \vutiltil_t := \begin{bmatrix} \ubb_{t-1} \\ \ubb_{t-1} \otimes \ubb_{t-2} \\ \ubb_{t-1} \otimes \ubb_{t-2} \otimes \ubb_{t-3} \\ \vdots \\ \ubb_{t-1} \otimes \ubb_{t-2} \otimes \cdots \otimes \ubb_{t-L} \end{bmatrix}, \label{eqn:util_vec}
\end{align}
where $\ubb_t = [1~\vu_t^\T]^\T$. Note that, $\vutil_t \in \R^{d_{\util}}$, and $\vutiltil_t \in \R^{d_{\utiltil}}$,  
\begin{align}
	d_{\util} &= p+\sum_{i = 0}^{L-1} p(p+1)^i = p + p\frac{(p+1)^L-1}{(p+1)-1} = (p+1)^L + p -1, \nn \\
 \text{and} \quad d_{\utiltil} &= \sum_{i = 0}^{L-1} (p+1)(p+1)^i = (p+1)\frac{(p+1)^L-1}{(p+1)-1} = \frac{(p+1)^{L+1}-(p+1)}{p}. \nn
\end{align}
We first show that, under Assumption~\ref{ass:input distribution}, $\vutil_t$ is isotropic as follows.

\medskip
\noindent {\bf Isotropic Covariates:} To begin, note that $\E[\vutil_t] = 0$ due to Assumption~\ref{ass:input distribution}. 
Next, we will show that $\vSigma[\vutil_t] = \E[\vutil_t\vutil_t^\T] =  \Iden_{d_{\util}}$ as follows: Let $[\vutil_t]_0 = \vu_t$, and let $[\vutil_t]_i$ be the $i$-th partition of $\vutil_t$ with $p(p+1)^{i-1}$ entries for $i = 1,\dots, L$. 
Then, we have that $\E\left[[\vutil_t]_0[\vutil_t]_0^\T\right] = \E\left[[\vutil_t]_1[\vutil_t]_1^\T\right] = \Iden_p$. Similarly, for $i = 2, \dots,L$, we have
\begin{align}
	\E\left[[\vutil_t]_i[\vutil_t]_i^\T\right] &= \E\left[(\ubb_{t-1} \otimes \cdots \otimes \vu_{t-i})(\ubb_{t-1} \otimes \cdots \otimes \vu_{t-i})^\T\right], \nn \\
    &= \E\left[\ubb_{t-1} \ubb_{t-1}^\T \otimes \cdots \otimes \vu_{t-i}\vu_{t-i}^\T\right], \nn \\
	&= \Iden_{p+1} \otimes \cdots \otimes \Iden_{p} = \Iden_{p(p+1)^{i-1}}.  \label{eqn:mean_diag}
\end{align}
Hence, we show that $\E\left[[\vutil_t]_i[\vutil_t]_i^\T\right] = \Iden_{p(p+1)^{(i-1) \vee 0}}$ for all $i = 0, \dots,L$. Next, we will show that $\E\left[[\vutil_t]_i[\vutil_t]_j^\T\right] = 0$, for all $i,j = 0, \dots,L$ when $i \neq j$. Because of symmetry, it suffices to consider $i < j$. In this case, we have
\begin{align}
	\E\left[[\vutil_t]_i[\vutil_t]_j^\T\right] &= \E\left[(\ubb_{t-1} \otimes \cdots \otimes \vu_{t-i})(\ubb_{t-1} \otimes \cdots \otimes \vu_{t-j})^\T\right], \nn \\
	&= \E\left[(\ubb_{t-1} \otimes \cdots \otimes \vu_{t-i})(\ubb_{t-1} \otimes \cdots \otimes \ubb_{t-i})^\T \right]\otimes \E\left[(\ubb_{t-i-1} \otimes \cdots \otimes \vu_{t-j} )^\T\right], \nn \\ 
&= 0, \label{eqn:mean_off_diag}
\end{align}
for all $2\leq i<j$. Similarly, we also have $\E\left[[\vutil_t]_1[\vutil_t]_j^\T\right] = \E\left[\vu_{t-1}(\ubb_{t-1} \otimes \cdots \otimes \vu_{t-j})^\T\right] = 0$, for all $j = 2, \dots,L$. Lastly, $[\vutil_t]_0$ is independent of $[\vutil_t]_j$, for all $j = 1, \dots,L$, due to Assumption~\ref{ass:input distribution}, hence, $\E\left[[\vutil_t]_0[\vutil_t]_j^\T\right] = 0$.
Combining \eqref{eqn:mean_diag} and \eqref{eqn:mean_off_diag}, we get
\begin{align}
	\E [\vutil_t \vutil_t^\T] =  \Iden_{d_{\util}} \quad \implies \quad  \E[(\vutil_t^\T \vv)^2] = 1,
\end{align}
for all $\vv \in \Sc^{d_{\util}-1}$. This gives the first statement of Lemma~\ref{lemma:hyper_to_moment_cov}. Next, we show that $\vutil_t$ also have bounded fourth moment marginals as follows:

\medskip
\noindent {\bf Fourth Moment Marginals:} Let $\vv \in \Scal^{d_{\vutil} - 1}$, and consider the random variable $(\vv^\T \vutil_t)^2$. 
In the following we will upper bound $\E [(\vv^\T \vutil_t)^4]$.  To begin,  let $\vv_i$ denote the $i$-th partition of $\vv$ with $p(p+1)^{(i-1) \vee 0}$ entries, for $i = 0,\dots, L$. Then, $\vv$ can be represented as,
\begin{align}
	\vv := \begin{bmatrix}
		\vv_0^\T & \vv_1^\T  & \vv_2^\T & \hdots & \vv_{L}^\T
	\end{bmatrix}^\T.
\end{align}
Hence, we have
\begin{align}
	\E \left[ \left( \vv^\T \vutil_t \right)^4\right] &= \E \left[ \left(\sum_{i=0}^L \vv_i^\T [\vutil_t]_i  \right)^4\right], \nn \\
    &= \E \left[ \left(\vv_0^\T \vu_{t} + \vv_1^\T \vu_{t-1} + \sum_{i=2}^L \vv_i^\T (\ubb_{t-1}  \otimes \cdots \otimes \vu_{t-i})  \right)^4\right]. \label{eqn:4th_moment_expansion_full}
\end{align}
In the following, we will upper each term appearing on the right-hand-side~(RHS) of \eqref{eqn:4th_moment_expansion_full} to get an upper bound on the fourth moment marginals of $\vutil_t$.

\medskip
$\bullet$ {\bf Linear Terms:}
For $i \neq j \neq k \neq \ell$, consider the following expectation: We utilize $\E[\vu_{t-\max\{i,j,k,\ell\}}] =0$, to show that
\begin{align}
	&\E\left[(\vv_i^\T [\vutil_t]_i)(\vv_j^\T [\vutil_t]_j)(\vv_k^\T [\vutil_t]_k)(\vv_\ell^\T [\vutil_t]_\ell)\right] = 0. \end{align}

\medskip
$\bullet$ {\bf Quadratic Terms I:} For $i \neq j \neq k$, consider the following expectation: When $i < \max\{j,k\}$, we can utilize $\E[ \vu_{t-\max\{j,k\}} ]=0$, to show that
\begin{align}
	\E\left[(\vv_i^\T [\vutil_t]_i)^2 (\vv_j^\T [\vutil_t]_j)(\vv_k^\T [\vutil_t]_k)\right] &= 0. \end{align}
However, when $i > \max\{j,k\}$, the above expectation is not zero. We can upper bound these terms using Cauchy-Schwarz inequality as follows,
\begin{align}
    \E\left[(\vv_i^\T [\vutil_t]_i)^2 (\vv_j^\T [\vutil_t]_j)(\vv_k^\T [\vutil_t]_k)\right] 
    & \leq \sqrt{\E\left[(\vv_i^\T [\vutil_t]_i)^4 \right] \E\left[(\vv_j^\T [\vutil_t]_j)^2(\vv_k^\T [\vutil_t]_k)^2\right]}.
\label{eqn:quadratic_one_split}
\end{align}
In the remaining of the proof, we will upper bound the two terms on the RHS of \eqref{eqn:quadratic_one_split} individually, along-with the remaining terms on the RHS of \eqref{eqn:4th_moment_expansion_full}.

\medskip
$\bullet$ {\bf Quadratic Terms II:} For $i \neq j$, consider the expectation $\E\left[(\vv_i^\T [\vutil_t]_i)^2 (\vv_j^\T [\vutil_t]_j)^2\right]$. In order to upper bound this term, we first upper bound an intermediate term as follows: 
Let $[\vutiltil_t]_k$ denote the $k$-th partition of $\vutiltil_t$ defined in~\eqref{eqn:util_vec}, for $k=1, \dots,L$. Let $\vq_k, \vq'_k$ be two arbitrary vectors in $\vR^{(p+1)^k}$. We will use the mathematical induction to prove the following intermediate result under Assumption~\ref{ass:input distribution},
\begin{align}
    \E\left[(\vq_k^\T [\vutiltil_t]_k)^2(\vq_k'^\T [\vutiltil_t]_k)^2\right]   \leq  (3 \vee \gamma)^k \tn{\vq_k}^2\tn{\vq_k'}^2, \quad \text{for all} \quad k = 1, \dots,L. \label{eq:forth_cross_tiltil}
\end{align}
We begin the proof, by showing that, $k=1$ obeys the induction as follows,
\begin{align}
	&\E\left[(\vq_1^\T [\vutiltil_t]_1)^2(\vq_1'^\T [\vutiltil_t]_1)^2\right] \nn \\
    &\qquad = \E \left[\left(\vq_1^\T \ubb_{t-1} \right)^2\left(\vq_1'^\T \ubb_{t-1} \right)^2 \right], \nn \\
    &\qquad\eqsym{i} \E \left[\left(q_{11}+ \vq_{12}^\T \vu_{t-1} \right)^2\left(q_{11}' + \vq_{12}'^\T \vu_{t-1} \right)^2 \right], \nn \\
    &\qquad= \E \left[\left(q_{11}^2+ (\vq_{12}^\T \vu_{t-1})^2 + 2q_{11}\vq_{12}^\T \vu_{t-1} \right)\left(q_{11}'^2 + (\vq_{12}'^\T \vu_{t-1})^2 + 2q_{11}'\vq_{12}'^\T \vu_{t-1} \right) \right], \nn \\
    &\qquad\leqsym{ii} q_{11}^2q_{11}'^2 + q_{11}^2 \tn{\vq_{12}'}^2 + q_{11}'^2 \tn{\vq_{12}}^2 + \gamma \tn{\vq_{12}}^2\tn{\vq_{12}'}^2 + 4 q_{11} q_{11}' \tn{\vq_{12}}\tn{\vq_{12}'}, \nn \\
    &\qquad\leqsym{ii} q_{11}^2q_{11}'^2 + 3q_{11}^2 \tn{\vq_{12}'}^2 + 3q_{11}'^2 \tn{\vq_{12}}^2 + \gamma \tn{\vq_{12}}^2\tn{\vq_{12}'}^2, \nn \\
    &\qquad\leq (3 \vee \gamma)\left(q_{11}^2 + \tn{\vq_{12}}^2 \right)\left(q_{11}'^2 + \tn{\vq_{12}}'^2 \right) = (3 \vee \gamma)\tn{\vq_1}^2\tn{\vq_1'}^2, \label{eqn:induction_step1_cross}
\end{align}
where we obtain (i) from setting $\vq_1 = [q_{11}~~\vq_{12}^\T]^\T$, $\vq_1' = [q_{11}'~~\vq_{12}'^\T]^\T$, (ii) from applying Lemma~\ref{lemma:hyper_to_moment} along-with Cauchy–Schwarz inequality, and (iii) is obtained by using the identity $2ab \leq a^2+b^2$ for $a, b \in \R$.
Suppose we have $\E\left[\left(\vq_{k-1}^\T [\vutiltil_t]_{k-1}\right)^2  \left(\vq_{k-1}'^\T [\vutiltil_t]_{k-1}\right)^2 \right] \leq  (3 \vee \gamma)^{k-1} \tn{\vq_{k-1}}^2\tn{\vq_{k-1}'}^2 $. Then, we apply the induction as follows,
\begin{align}
    \E\left[\left(\vq_{k}^\T [\vutiltil_t]_{k}\right)^2  \left(\vq_{k}'^\T [\vutiltil_t]_{k}\right)^2 \right] &= \E \left[\left(\vq_k^\T ([\vutiltil_t]_{k-1} \otimes \ubb_{t-k}) \right)^2 \left(\vq_k'^\T ([\vutiltil_t]_{k-1} \otimes \ubb_{t-k}) \right)^2 \right], \nn \\
    &=  \E \left[\left( \ubb_{t-k}^\T \vQ_k [\vutiltil_t]_{k-1} \right)^2 \left( \ubb_{t-k}^\T \vQ_k' [\vutiltil_t]_{k-1} \right)^2 \right] , \nn \\
	&=  \E \left[\E \left[\left( \ubb_{t-k}^\T \vQ_k [\vutiltil_t]_{k-1} \right)^2 \left( \ubb_{t-k}^\T \vQ_k' [\vutiltil_t]_{k-1} \right)^2  \bgl \vu_{t-k} \right] \right], \nn \\
	& \leqsym{i} (3 \vee \gamma)^{k-1}\E \left[ \tn{\vQ_k^\T \ubb_{t-k}}^2 \tn{\vQ_k'^\T \ubb_{t-k}}^2 \right], \nn \\
&\leqsym{ii} (3 \vee \gamma)^k \tf{\vQ_k}^2 \tf{\vQ_k'}^2, \nn \\
    &= (3 \vee \gamma)^k \tn{\vq_k}^2 \tn{\vq_k'}^2,
\end{align}
where $\vQ_k = \mat(\vq_k) \in \R^{(p+1) \times (p+1)^{k-1}}$, $\vQ_k' = \mat(\vq_k') \in \R^{(p+1) \times (p+1)^{k-1}}$, we obtain (i) from the induction hypothesis, and (ii) follows from a similar line of reasoning as used to derive an upper bound in \eqref{eqn:induction_step1_cross}. This completes the proof of our intermediate result in \eqref{eq:forth_cross_tiltil}. Next, we use \eqref{eq:forth_cross_tiltil} to derive an upper bound on $\E\left[(\vv_i^\T [\vutil_t]_i)^2 (\vv_j^\T [\vutil_t]_j)^2\right]$ as follows: Due to symmetry, it is sufficient to consider $j>i$. We begin by deriving the upper bound for $j> i \geq 2$ as follows,

\begin{align}
	&\E\left[(\vv_i^\T [\vutil_t]_i)^2 (\vv_j^\T [\vutil_t]_j)^2\right] \nn \\
    &\qquad = \E \left[\left(\vv_i^\T (\ubb_{t-1} \otimes \cdots \otimes \vu_{t-i})\right)^2 \left(\vv_j^\T (\ubb_{t-1} \otimes \cdots \otimes \vu_{t-j})\right)^2 \right], \nn \\
	&\qquad= \E \left[\E \left[\left(\vv_i^\T (\ubb_{t-1} \otimes \cdots \otimes \vu_{t-i})\right)^2 \left(\vv_j^\T (\ubb_{t-1} \otimes \cdots \otimes \vu_{t-j})\right)^2 \bgl \vu_{t-i}, \vu_{t-i-1}, \dots, \vu_{t-j} \right]\right], \nn \\
	&\qquad\eqsym{i}  \E \big[ \E \big[\left(\vu_{t-i}^\T\vV_i (\ubb_{t-1} \otimes \cdots \otimes \ubb_{t-i+1})\right)^2 \left((\ubb_{t-i}^\T \otimes \cdots \otimes \vu_{t-j}^\T)\vV_j (\ubb_{t-1} \otimes \cdots \otimes \ubb_{t-i+1})\right)^2 \nn \\
    & \qquad\qquad \bgl \vu_{t-i}, \vu_{t-i-1}, \dots, \vu_{t-j} \big]\big], \nn \\
    &\qquad=  \E \left[ \E \left[\left(\vu_{t-i}^\T\vV_i [\vutiltil_t]_{i-1}\right)^2 \left((\ubb_{t-i}^\T \otimes \cdots \otimes \vu_{t-j}^\T)\vV_j [\vutiltil_t]_{i-1}\right)^2 \bgl \vu_{t-i}, \vu_{t-i-1}, \dots, \vu_{t-j} \right]\right], \nn \\
	& \qquad \leqsym{ii} (3 \vee \gamma)^{i-1} \E \left[ \tn{\vV_i^\T \vu_{t-i}}^2  \tn{\vV_j^\T (\ubb_{t-i} \otimes \cdots \otimes \vu_{t-j})}^2  \right], \label{eqn:cross_moment_after_intermediate}
\end{align}
where we obtain (i) from tower rule, and setting $\vV_i = \mat(\vv_i) \in \R^{p \times (p+1)^{i-1}}$, $\vV_j = \mat(\vv_j) \in \R^{p(p+1)^{j-i} \times (p+1)^{i-1}}$, and (ii) is obtained by using the intermediate result from \eqref{eq:forth_cross_tiltil}. To proceed, observe that
\begin{align}
    \ubb_{t-i} \otimes \cdots \otimes \vu_{t-j} &= \begin{bmatrix}
        \ubb_{t-i-1} \otimes \cdots \otimes \vu_{t-j} \\
        \vu_{t-i} \otimes \ubb_{t-i-1} \otimes \cdots \otimes \vu_{t-j}
    \end{bmatrix}, \quad \vV_j  = \begin{bmatrix}
        \vV_{j1} \\
        \vV_{j2}
    \end{bmatrix},
\end{align}
where $\vV_{j1} \in \R^{p(p+1)^{j-i-1} \times (p+1)^{i-1}}$, and $\vV_{j2} \in \R^{p^2(p+1)^{j-i-1} \times (p+1)^{i-1}}$. Combining this with \eqref{eqn:cross_moment_after_intermediate}, we have
\begin{align}
	\E\left[(\vv_i^\T [\vutil_t]_i)^2 (\vv_j^\T [\vutil_t]_j)^2\right] &\leq (3 \vee \gamma)^{i-1} \E \big[ \tn{\vV_i^\T \vu_{t-i}}^2  \|\vV_{j1}^\T (\ubb_{t-i-1} \otimes \cdots \otimes \vu_{t-j}) \nn \\ 
    &+ \vV_{j2}^\T (\vu_{t-i} \otimes \ubb_{t-i-1} \otimes \cdots \otimes \vu_{t-j})\|_{\ell_2}^2  \big], \nn \\
    &\leq (3 \vee \gamma)^{i-1} \E \big[ \tn{\vV_i^\T \vu_{t-i}}^2  \big(\tn{\vV_{j1}^\T (\ubb_{t-i-1} \otimes \cdots \otimes \vu_{t-j})}^2 \nn \\
    &+ \tn{\vV_{j2}^\T (\vu_{t-i} \otimes \ubb_{t-i-1} \otimes \cdots \otimes \vu_{t-j})}^2 \nn \\ 
    &+ 2(\ubb_{t-i-1} \otimes \cdots \otimes \vu_{t-j})^\T \vV_{j1}\vV_{j2}^\T (\vu_{t-i} \otimes \ubb_{t-i-1} \otimes \cdots \otimes \vu_{t-j}) \big) \big], \label{eqn:quadraticII_split}
\end{align}
In the following, we will upper bound each term in \eqref{eqn:quadraticII_split} separately to get an upper bound on $\E\left[(\vv_i^\T [\vutil_t]_i)^2 (\vv_j^\T [\vutil_t]_j)^2\right]$. To begin, we have
\begin{align}
    &\E \big[ \tn{\vV_i^\T \vu_{t-i}}^2 \tn{\vV_{j1}^\T (\ubb_{t-i-1} \otimes \cdots \otimes \vu_{t-j})}^2\big] \nn \\ 
    &\qquad= \E \big[ \vu_{t-i}^\T\vV_i\vV_i^\T \vu_{t-i}\big] \E \big[(\ubb_{t-i-1} \otimes \cdots \otimes \vu_{t-j})^\T\vV_{j1}\vV_{j1}^\T (\ubb_{t-i-1} \otimes \cdots \otimes \vu_{t-j}) \big], \nn \\
    &\qquad= \E \big[ \tr \big(\vV_i\vV_i^\T \vu_{t-i}\vu_{t-i}^\T\big)\big] \E \big[ \tr \big(\vV_{j1}\vV_{j1}^\T (\ubb_{t-i-1} \otimes \cdots \otimes \vu_{t-j})(\ubb_{t-i-1} \otimes \cdots \otimes \vu_{t-j})^\T\big) \big], \nn \\
    &\qquad \leq \tf{\vV_i}^2\tf{\vV_{j1}}^2. \label{eqn:quadraticII_split_1solved}
\end{align}
Similarly, the second term in \eqref{eqn:quadraticII_split} can be upper bounded as follows,

\begin{align}
    &\E \big[ \tn{\vV_i^\T \vu_{t-i}}^2 \tn{\vV_{j2}^\T (\vu_{t-i} \otimes \ubb_{t-i-1} \otimes \cdots \otimes \vu_{t-j})}^2\big] \nn \\
    &\qquad \eqsym{a} \sum_{k = 1}^{(p+1)^{i-1}} \E \big[ \tn{\vV_i^\T \vu_{t-i}}^2 \big(\vv_{j2k}^\T (\vu_{t-i} \otimes \ubb_{t-i-1} \otimes \cdots \otimes \vu_{t-j})\big)^2\big], \nn \\
    &\qquad \eqsym{b}  \sum_{k = 1}^{(p+1)^{i-1}} \E \big[ \tn{\vV_i^\T \vu_{t-i}}^2 \big((\ubb_{t-i-1} \otimes \cdots \otimes \vu_{t-j})^\T\vV_{j2k}\vu_{t-i}\big)^2\big], \nn \\
    &\qquad =  \sum_{k = 1}^{(p+1)^{i-1}} \E \big[ \E \big[ \vu_{t-i}^\T\vV_i\vV_i^\T \vu_{t-i} \big((\ubb_{t-i-1} \otimes \cdots \otimes \vu_{t-j})^\T\vV_{j2k}\vu_{t-i}\big)^2 \bgl \vu_{t-i-1},\dots,\vu_{t-j} \big] \big], \nn \\
    &\qquad \leqsym{c} \sum_{k = 1}^{(p+1)^{i-1}}  \gamma \tf{\vV_i}^2\E \big[ \tr\big( (\ubb_{t-i-1} \otimes \cdots \otimes \vu_{t-j})^\T\vV_{j2k} \vV_{j2k}^\T(\ubb_{t-i-1} \otimes \cdots \otimes \vu_{t-j}) \big) \big], \nn \\
    &\qquad =  \gamma \tf{\vV_i}^2 \sum_{k = 1}^{(p+1)^{i-1}} \tf{\vV_{j2k}}^2, \nn \\
    &\qquad = \gamma \tf{\vV_i}^2 \tf{\vV_{j2}}^2, \label{eqn:quadraticII_split_2solved}
\end{align}
where we obtain (a) by defining $\vv_{j2k}^\T$ to be the $k$-th row of $\vV_{j2}^\T$, (b) from setting $\vV_{j2k} = \mat(\vv_{j2k}) \in \R^{p(p+1)^{j-i-1} \times p}$, and (c) follows from the application of Lemma~\ref{lemma:hyper_to_moment}. Lastly, note that the third term in \eqref{eqn:quadraticII_split} is zero as follows,
\begin{align}
    &\E \left[ \tn{\vV_i^\T \vu_{t-i}}^2 (\vu_{t-i} \otimes \ubb_{t-i-1} \otimes \cdots \otimes \vu_{t-j})^\T\vV_{j2} \vV_{j1}^\T(\ubb_{t-i-1} \otimes \cdots \otimes \vu_{t-j}) \right] \nn \\
    &\qquad= \E \left[ \tn{\vV_i^\T \vu_{t-i}}^2 (\ubb_{t-i-1} \otimes \cdots \otimes \vu_{t-j})^\T \mat\left(\vV_{j2} \vV_{j1}^\T(\ubb_{t-i-1} \otimes \cdots \otimes \vu_{t-j})\right) \vu_{t-i} \right], \nn \\
    &\qquad= \E \big[\E \big[ \tn{\vV_i^\T \vu_{t-i}}^2 (\ubb_{t-i-1} \otimes \cdots \otimes \vu_{t-j})^\T \mat\left(\vV_{j2} \vV_{j1}^\T(\ubb_{t-i-1} \otimes \cdots \otimes \vu_{t-j})\right) \vu_{t-i}\nn \\
    & \qquad \qquad \bgl \vu_{t-i-1}, \dots, \vu_{t-j} \big]\big], \nn \\
    &\qquad= 0. \label{eqn:quadraticII_split_3solved}
\end{align}
Finally, combining \eqref{eqn:quadraticII_split_1solved}, \eqref{eqn:quadraticII_split_2solved}, and \eqref{eqn:quadraticII_split_3solved} into \eqref{eqn:quadraticII_split} we get the following upper bound,
\begin{align}
    \E\left[(\vv_i^\T [\vutil_t]_i)^2 (\vv_j^\T [\vutil_t]_j)^2\right] &\leq (3 \vee \gamma)^{i-1} \left(\tf{\vV_i}^2\tf{\vV_{j1}}^2 +  \gamma \tf{\vV_i}^2 \tf{\vV_{j2}}^2\right), \nn \\
    & \leq (3 \vee \gamma)^i \tn{\vv_i}^2\tn{\vv_j}^2, \quad \text{for all} \quad j> i \geq 2.
\end{align}
For $j>i=0$, it is easy to see that $\E\left[(\vv_0^\T [\vutil_t]_0)^2 (\vv_j^\T [\vutil_t]_j)^2\right] = \E\left[(\vv_0^\T \vu_t)^2\right] \E\left[(\vv_j^\T [\vutil_t]_j)^2\right] = \tn{\vv_0}^2\tn{\vv_j}^2$. Finally, for $j>i=1$, we get the following upper bounds,
\begin{align}
    \E\left[(\vv_1^\T [\vutil_t]_1)^2 (\vv_j^\T [\vutil_t]_j)^2\right] &= \E\left[(\vv_1^\T \vu_{t-1})^2(\vv_j^\T (\ubb_{t-1} \otimes \cdots \otimes \vu_{t-j}))^2\right], \nn \\
    &= \E \left[ \E\left[(\vv_1^\T \vu_{t-1})^2((\ubb_{t-2} \otimes \cdots \otimes \vu_{t-j})^\T \vV_j \ubb_{t-1})^2 \bgl \vu_{t-1}\right] \right], \nn \\
    &= \E \left[(\vv_1^\T \vu_{t-1})^2 \tn{\vV_j \ubb_{t-1}}^2\right], \nn \\
    &\eqsym{i} \E \left[(\vv_1^\T \vu_{t-1})^2 \tn{ \vv_{j1}+ \vV_{j2} \vu_{t-1}}^2\right], \nn \\
    &\leqsym{ii} \tn{\vv_1}^2\tn{\vv_{j1}}^2 + \gamma \tn{\vv_1}^2 \tf{\vV_{j2}}^2 \leq \gamma \tn{\vv_1}^2\tn{\vv_j}^2,
\end{align}
where we obtain (i) from setting $\vV_{j} {=} \mat(\vv_j) {:=} [\vv_{j1}~~\vV_{j2}] \in \R^{p(p+1)^{j-2} \times (p+1)}$, and (ii) follows from Lemma~\ref{lemma:hyper_to_moment}. Hence, for all $i \neq j$, we have
\begin{align}
	\E\left[(\vv_i^\T [\vutil_t]_i)^2 (\vv_j^\T [\vutil_t]_j)^2\right] \leq (3 \vee \gamma)^{\min\{i,j\}} \tn{\vv_i}^2 \tn{\vv_j}^2, \quad \text{for all} \quad i \neq j. 
\end{align}

$\bullet$ {\bf Cubic Terms:} For $i \neq j$, consider the following expectation: When $i < j$, we can utilize $\E [ \vu_{t-j}]  =0$, and when $i > j$, we can use $\E[(\vb^\T \vu_{t-i})^3] = 0$, to show that
\begin{align}
	\E\left[(\vv_i^\T [\vutil_t]_i)^3 (\vv_j^\T [\vutil_t]_j)\right] = 0. \end{align}

$\bullet$ {\bf Quartic Terms:} Consider the expectation $\E\left[(\vv_i^\T [\vutil_t]_i)^4\right]$. In order to upper bound this term, we first upper bound an intermediate term as follows: Recall the definition of $\vutiltil_t$ from \eqref{eqn:util_vec}, and let $[\vutiltil_t]_k$ denote its $k$-th partition with $(p+1)^k$ entries, for $k = 1,\dots, L$. For any $\vq_k \in \vR^{(p+1)^k}$, we will use the mathematical induction to prove that 
\begin{align}
    \E\left[(\vq_k^\T [\vutiltil_t]_k)^4\right] 
\leq  (3 \vee \gamma)^k \tn{\vq_k}^4, \quad \text{for all} \quad k = 1, \dots,L. \label{eq:forth_ind_tiltil}
\end{align}
We begin the proof, by showing that, $k=1$ obeys the induction as follows,
\begin{align}
	\E\left[(\vq_1^\T [\vutiltil_t]_1)^4\right] = \E \left[\left(q_{11}  + \vq_{12}^\T \vu_{t-1} \right)^4 \right]  &= q_{11}^4 + \E\left[(\vq_{12}^\T \vu_{t-1})^4\right] + 6 q_{11}^2 \E\left[(\vq_{12}^\T \vu_{t-1})^2\right], \nn \\\
    &\leqsym{a} q_{11}^4 + \gamma \tn{\vq_{12}}^4 + 6 q_{11}^2 \tn{\vq_{12}}^2, \nn \\
    &\leq (3 \vee \gamma) \left( q_{11}^4 + \tn{\vq_{12}}^4 + 2 q_{11}^2 \tn{\vq_{12}}^2\right), \nn \\
    &= (3 \vee \gamma) \left( q_{11}^2 + \tn{\vq_{12}}^2\right)^2, \nn \\
    &= (3 \vee \gamma) \tn{\vq_1}^4, \label{eqn:utiltil_fourth_induc_step1}
\end{align}
where (a) follows from Lemma~\ref{lemma:hyper_to_moment}. 
Suppose we have $\E\left[(\vq_{k-1}^\T [\vutiltil_t]_{k-1})^4\right] \leq  (3 \vee \gamma)^{k-1} \tn{\vq_{k-1}}^4$ for any $\vq_{k-1} \in \R^{(p+1)^{k-1}}$. Then, we apply the induction as follows,
\begin{align}
	\E\left[(\vq_k^\T [\vutiltil_t]_k)^4\right] 
     &= \E \left[ \left(\vq_{k}^\T ([\vutiltil_t]_{k-1} \otimes \ubb_{t-k})\right)^4\right], \nn \\
     &\eqsym{i} \E \left[\E \left[ \left( \ubb_{t-k}^\T\vQ_{k}[\vutiltil_t]_{k-1} \right)^4 \bgl \vu_{t-k}\right] \right], \nn \\
     &\leqsym{ii} (3 \vee \gamma)^{k-1}  \E \left[\tn{\vQ_k^\T\ubb_{t-k}}^4\right], \nn \\
     &\leqsym{iii} 
(3 \vee \gamma)^k  \tn{\vq_k}^4, \label{eqn:forth_ind_tiltil_proved}
\end{align}
 where we obtain (i) from setting $\vQ_k = \mat(\vq_k) \in \R^{(p+1) \times (p+1)^{k-1}}$, (ii) from the induction hypothesis, and (iii) is obtained from Lemma~\ref{lemma:hyper_to_moment} as follows: Let $\vq_{ki}^\T$ denote the $i$-th row of $\vQ_k^\T$. Then we have,
 
\begin{align}
    \E \left[\tn{\vQ_k^\T\ubb_{t-k}}^4\right] &= \E \left[ \left( \sum_{i=1}^{(p+1)^{k-1}} (\vq_{ki}^\T \ubb_{t-k})^2 \right)^2\right], \nn \\
    &= \E \left[  \sum_{i=1}^{(p+1)^{k-1}} (\vq_{ki}^\T \ubb_{t-k})^4 + \sum_{i=1}^{(p+1)^{k-1}} \sum_{\substack{j=1 \\ j \neq i}}^{(p+1)^{k-1}} (\vq_{ki}^\T \ubb_{t-k})^2 (\vq_{kj}^\T \ubb_{t-k})^2 \right], \nn \\
    &\leqsym{a} (3 \vee \gamma) \left( \sum_{i=1}^{(p+1)^{k-1}} \tn{\vq_{ki}}^4 + \sum_{i=1}^{(p+1)^{k-1}} \sum_{\substack{j=1 \\ j \neq i}}^{(p+1)^{k-1}} \tn{\vq_{ki}}^2\tn{\vq_{kj}}^2 \right), \nn \\
    &= (3 \vee \gamma) \left( \sum_{i=1}^{(p+1)^{k-1}} \tn{\vq_{ki}}^2\right)^2, \nn \\
    &= (3 \vee \gamma) \tf{\vQ_k}^4,
\end{align}
where we obtain (a) from \eqref{eqn:induction_step1_cross} and \eqref{eqn:utiltil_fourth_induc_step1}. Hence, we proved by induction that, for any $\vq_k \in \vR^{(p+1)^k}$, we have $\E\left[(\vq_k^\T [\vutiltil_t]_k)^4\right] \leq  (3 \vee \gamma)^k \tn{\vq_k}^4$, for all $k = 1, \dots,L$. 
Next, recalling the definition of $\vutil_t$ from \eqref{eqn:util_vec}, we use the intermediate result in \eqref{eq:forth_ind_tiltil} to get an upper bound on $\E\left[(\vv_k^\T [\vutil_t]_k)^4\right]$ as follows: 
For $k = 2, \dots, L$, we have,
\begin{align}
    \E\left[(\vv_k^\T [\vutil_t]_k)^4\right] &= \E \left[ \left(\vv_{k}^\T ([\vutiltil_t]_{k-1} \otimes \vu_{t-k})\right)^4\right], \nn \\
    &\eqsym{i} \E \left[\E \left[ \left( \vu_{t-k}^\T\vV_{k}[\vutiltil_t]_{k-1} \right)^4 \bgl \vu_{t-k}\right] \right], \nn \\
     &\leqsym{ii} (3 \vee \gamma)^{k-1}  \E \left[\tn{\vV_k^\T\vu_{t-k}}^4\right], \nn \\
     &\leqsym{iii} (3 \vee \gamma)^k  \tf{\vV_k}^4 = (3 \vee \gamma)^k  \tn{\vv_k}^4,
\end{align}
where we obtain (i) from setting $\vV_k = \mat(\vv_k) \in \R^{p \times (p+1)^{k-1}}$, (ii) is obtained from using \eqref{eq:forth_ind_tiltil}, and (iii) follows from Lemma~\ref{lemma:hyper_to_moment}. For $k=0,1$, we have, $\E\left[(\vv_0^\T [\vutil_t]_0)^4\right] = \E\left[(\vv_0^\T \vu_{t})^4\right] \leq \gamma \tn{\vv_0}^4$ and $\E\left[(\vv_1^\T [\vutil_t]_1)^4\right] = \E\left[(\vv_1^\T \vu_{t-1})^4\right] \leq \gamma \tn{\vv_1}^4$ using Lemma~\ref{lemma:hyper_to_moment}. Hence, we showed that,
\begin{align}
	\E\left[(\vv_k^\T [\vutil_t]_k)^4\right] 
\leq  (3 \vee \gamma)^{\max\{1,k\}} \tn{\vv_k}^4, \quad \text{for all} \quad k = 0, \dots,L. \label{eqn:forth_ind_til}
\end{align}

$\bullet$ {\bf Finalizing the Proof:} Putting it all together, for any $\vv \in \Sc^{d_{\util}-1}$ and $\vutil_t$ in \eqref{eqn:util_vec}, we have
\begin{align}
    \E \left[ \left( \vv^\T \vutil_t \right)^4\right] &= \E \left[ \left(\sum_{i=0}^L \vv_i^\T [\vutil_t]_i  \right)^4\right],  \nn \\
     &=  \sum_{i=0}^L \E \left[ \left(\vv_i^\T [\vutil_t]_i  \right)^4\right] + 3 \sum_{i=0}^L \sum_{\substack{j=0 \\ j \neq i}}^L \E \left[ \left(\vv_i^\T [\vutil_t]_i  \right)^2 \left(\vv_j^\T [\vutil_t]_j   \right)^2 \right], \nn \\
     &+ 6 \sum_{i=0}^L \sum_{\substack{j=0 \\ j \neq i}}^L\sum_{\substack{k=0 \\ k \neq i \\ k \neq j}}^L \E \left[ \left(\vv_i^\T [\vutil_t]_i  \right)^2 \left(\vv_j^\T [\vutil_t]_j \right) \left(\vv_k^\T [\vutil_t]_k \right) \right], \nn \\
    &\leq \sum_{i=0}^L (3 \vee \gamma)^{\max\{1,i\}} \tn{\vv_i}^4 + 3 \sum_{i=0}^L \sum_{\substack{j=0 \\ j \neq i}}^L (3 \vee \gamma)^{\min\{i,j\}} \tn{\vv_i}^2 \tn{\vv_j}^2 \nn \\
    &+ 6 \sum_{i=0}^L \sum_{\substack{j=0 \\ j \neq i}}^L\sum_{\substack{k=0 \\ k \neq i \\ k \neq j}}^L \sqrt{\E \left[ \left(\vv_i^\T [\vutil_t]_i  \right)^4\right] \E \left[\left(\vv_j^\T [\vutil_t]_j \right)^2 \left(\vv_k^\T [\vutil_t]_k \right)^2 \right]}, \nn \\
    &\leq (3 \vee \gamma)^L \bigg(\sum_{i=0}^L \tn{\vv_i}^4 + \sum_{i=0}^L \sum_{\substack{j=0 \\ j \neq i}}^L  \tn{\vv_i}^2 \tn{\vv_j}^2 \nn \\
    &+ 2 \sum_{i=0}^L \sum_{\substack{j=0 \\ j \neq i}}^L\sum_{\substack{k=0 \\ k \neq i \\ k \neq j}}^L \tn{\vv_i}^2 \tn{\vv_j}\tn{\vv_k}\bigg), \nn \\
    &\leq (3 \vee \gamma)^L \bigg(\sum_{i=0}^L \tn{\vv_i}^4 + \sum_{i=0}^L \sum_{\substack{j=0 \\ j \neq i}}^L  \tn{\vv_i}^2 \tn{\vv_j}^2 +  \sum_{\substack{j=0 \\ j \neq k}}^L\sum_{\substack{k=0 \\ k \neq j}}^L \left(\tn{\vv_j}^2 +\tn{\vv_k}^2\right)\bigg), \nn \\
    &\leq (3 \vee \gamma)^L  \left(1 + 2L\right) \leq  L(3 \vee \gamma)^{L+1}.
\end{align}
This completes the proof of Lemma~\ref{lemma:hyper_to_moment_cov}.
\end{proof}

\subsection{Proof of Theorem~\ref{thm:persistence}}
We are now ready to state the proof of our main result on persistence of excitation stated by Theorem~\ref{thm:persistence}. The proof follows a similar line of reasoning as that of Proposition 6.5 of \cite{sattar2024learning}. For the sake of completeness, we present the entire (modified) proof here.
\begin{proof}
To begin, recall the definition of $\vutil_t$ from \eqref{eqn:util_vec}, and let $\vUtil$ has rows $\{\vutil_t^\T\}_{t=L}^T$. From the proof of Lemma~\ref{lemma:hyper_to_moment_cov}, we have 
\begin{align}
	\E[\vUtil^\T \vUtil] &= \sum_{t = L}^{T} \E[\vutil_t \vutil_t^\T] = \sum_{t = L}^{T} \Iden_{d_{\util}} = (T-L+1) \Iden_{d_{\util}}.
\end{align}
Next, letting $\vv \in \Scal^{d_{\util}-1}$, we consider the quantity $\vv^\T \vUtil^\T \vUtil \vv = \sum_{t = L}^{T} (\vv^\T\vutil_t)^2$, which can be viewed as a summation of the random process $\{(\vv^\T\vutil_t)^2\}_{t = L}^{T}$. In the following, we will derive a one-sided concentration bound for this random process.

\medskip
$\bullet$ {\bf Step 1) Blocking:} We begin by using blocking technique to get independent samples as follows,
\begin{align}
	\sum_{t = L}^{T} (\vv^\T\vutil_t)^2 = \sum_{k = 0}^{L} \sum_{\tau = 1}^{(T-L+1)/(L+1)} (\vv^\T\vutil_{\tau (L+1) + k-1})^2,
\end{align}
where we make the simplifying assumption that $T-L+1$ can be divided by $L+1$. (\emph{Note that, this goes without loss of generality, and we assume it for the sake of clarity. It can be easily avoided by noting that 
$$
\sum_{t=L}^{T} \vutil_t \vutil_t^\T \succeq \sum_{t=L}^{ (L+1)\lfloor \frac{T-L+1}{L+1} \rfloor + L-1} \vutil_t \vutil_t^\T, 
$$
where $\lfloor\cdot \rfloor$ denotes the floor operator. We can then analyze everything with $T_0 = (L+1)\lfloor \frac{T-L+1}{L+1} \rfloor + L-1$, and note that $ T-L \le  T_0 \le T$.})

\medskip
$\bullet$ {\bf Step 2) Bernstein's inequality for non-negative random variables:} From Lemma~\ref{lemma:hyper_to_moment_cov}, we have $\E[(\vv^\T\vutil_t)^2] = 1$ and $\E[(\vv^\T\vutil_t)^4] \leq L(3 \vee \gamma)^{L+1}$ for any $\vv \in \Sc^{d_{\util}}$. Hence, we can use one-sided Bernstein's inequality for non-negative random variables~\citep{wainwright2019high} to obtain a lower bound on the smallest eigenvalue of $\sum_{t=L}^{T} \vutil_t \vutil_t^\T$. Specifically, we have,

\begin{align}
	\P \bigg(\sum_{\tau = 1}^{(T-L+1)/(L+1)} \big((\vv^\T\vutil_{\tau (L+1) + k-1})^2 - \E[(\vv^\T\vutil_{\tau (L+1) + k-1})^2]\big)  &\leq - \frac{T-L+1}{(L+1)}\eta\bigg) \nn \\ 
    &\leq \exp\bigg(-\frac{(T-L+1)\eta^2}{2(L+1)\Xi}\bigg), \nn \\
	\implies \P \bigg(\sum_{\tau = 1}^{(T-L+1)/(L+1)} (\vv^\T\vutil_{\tau (L+1) + k -1})^2  \leq \frac{T-L+1}{L+1} (1-\eta)\bigg) &\leq \exp\bigg(-\frac{(T-L+1)\eta^2}{2(L+1)\Xi}\bigg), \nn
\end{align} 
where we set $\Xi :=  L(3 \vee \gamma)^{L+1}$ for notational convenience. Union bounding over $L+1$ such events, we get the following,
\begin{align}
	\P \bigg(\sum_{k = 0}^{L}\sum_{\tau = 1}^{(T-L+1)/(L+1)} (\vv^\T\vutil_{\tau (L+1) + k -1})^2  &\leq (T-L+1) (1-\eta)\bigg) \nn \\
    &\leq (L+1)\exp\bigg(-\frac{(T-L+1)\eta^2}{2(L+1)\Xi}\bigg). \label{eqn:one_sided_bernstein}
\end{align}
This can be alternately represented by letting
\begin{align}
	(L+1)\exp\bigg(-\frac{(T-L+1)\eta^2}{2(L+1)\Xi}\bigg) &= \delta, \nn \\
	\iff \frac{(T-L+1)\eta^2}{2(L+1)\Xi} &= \log((L+1)/\delta), \nn \\
	\impliedby \eta &= \sqrt{\frac{L+1}{T-L+1}2 \Xi\log((L+1)/\delta)}. 
\end{align}
Hence, we have
\begin{align}
	\P \bigg(\sum_{t = L}^{T} (\vv^\T\vutil_t)^2  \leq (T-L+1) -\sqrt{2\Xi(L+1)(T-L+1)\log((L+1)/\delta)}\bigg) &\leq \delta.
\end{align}

$\bullet$ {\bf Step 3) Covering with ${\delta}/{(8d_{\util})}$-net:} Next, we use a covering argument as follows: Let $\Ncal_\epsilon := \{\vv_1, \vv_2, \dots, \vv_{|\Ncal_\epsilon|} \} \subset \Scal^{d_{\util}-1}$ be the $\epsilon$-net of $\Scal^{d_{\util}-1}$ such that for any $\vv \in \Scal^{d_{\util}-1}$, there exists $\vv_i \in \Ncal_\epsilon$ such that $\tn{\vv - \vv_i} \leq \epsilon$. From Lemma 5.2 of \cite{vershynin2010introduction}, we have $|\Ncal_\epsilon| \leq (1 + 2/ \epsilon)^{d_{\util}}$. 

Let us choose $\vv \in \Scal^{d_{\util}-1}$ for which $\lambda_{\min} (\vUtil^\T \vUtil) = \vv^\T \vUtil^\T \vUtil \vv$, and choose $\vv_i \in \Ncal_\epsilon$ which approximates $\vv$ as $\tn{\vv - \vv_i} \leq \epsilon$. By triangle inequality, we have
\begin{align}
	|\vv^\T \vUtil^\T \vUtil \vv  - \vv_i^\T \vUtil^\T \vUtil \vv_i| &= |\vv^\T \vUtil^\T \vUtil ( \vv - \vv_i) + (\vv - \vv_i)^\T \vUtil^\T \vUtil \vv_i|, \nn \\
	& \leq \norm{\vUtil^\T \vUtil}_\op \tn{\vv} \tn{\vv - \vv_i} + \norm{\vUtil^\T \vUtil}_\op \tn{\vv_i} \tn{\vv - \vv_i}, \nn \\
	& \leq 2 \epsilon \norm{\vUtil^\T \vUtil}_\op, \nn \\
    \implies  \vv^\T \vUtil^\T \vUtil \vv &\geq  \vv_i^\T \vUtil^\T \vUtil \vv_i - 2 \epsilon \norm{\vUtil^\T \vUtil}_\op, \nn \\
     \implies \lambda_{\min}(\vUtil^\T \vUtil ) &\geq \inf_{\vv_i \in \Ncal_\epsilon} \vv_i^\T \vUtil^\T \vUtil \vv_i - 2 \epsilon \norm{\vUtil^\T \vUtil}_\op.
\end{align}
Hence, in order to lower bound $\lambda_{\min}(\vUtil^\T \vUtil )$, we also need an upper bound on $\norm{\vUtil^\T \vUtil}_\op$. This can be done as follows: First, we have 
\begin{align}
	\E[\norm{\vUtil^\T \vUtil}_\op] = \E[\lambda_{\max}(\vUtil^\T \vUtil)] &\leq \E[\tr(\vUtil^\T\vUtil)]= \tr\big(\E[\vUtil^\T \vUtil]\big), \nn \\
    & = (T-L+1)\tr\big(\Iden_{d_{\util}}\big) = d_{\util}(T-L+1).
\end{align}
Hence, using Markov's inequality, we get
\begin{align}
	\P\bigg( \norm{\vUtil^\T \vUtil}_\op > \frac{d_{\util}(T-L+1)}{\delta}\bigg) \leq  \frac{\E[ \norm{\vUtil^\T \vUtil}_\op ]}{d_{\util}(T-L+1)} \delta \leq \delta.
\end{align}
Let  $\Ecal := \{\norm{\vUtil^\T \vUtil}_\op  \leq \frac{2d_{\util}(T-L+1)}{\delta}\}$ denote the event that $\norm{\vUtil^\T \vUtil}_\op$ is upper bounded by a specified threshold. Then, it is straightforward to see that $\P(\Ecal) \geq 1 - \delta/2$. This further implies,
\begin{align}
	\P\bigg(\lambda_{\min}(\vUtil^\T\vUtil) &< (T-L+1)(1/2-\eta)\bigg) \nn \\
    &\leq \P\bigg(\big\{\lambda_{\min}(\vUtil^\T\vUtil) < (T-L+1)(1/2-\eta)\big\} \bigcap \Ecal \bigg) + \P (\Ecal^c), \nn \\
	&\leq \P\bigg(\big\{\inf_{\vv_i \in \Ncal_\epsilon} \vv_i^\T \vUtil^\T \vUtil \vv_i - 2 \epsilon \norm{\vUtil^\T \vUtil}_\op < (T-L+1)(1/2-\eta)\big\} \bigcap \Ecal \bigg) + \delta/2, \nn \\
	&\leq \P\bigg(\inf_{\vv_i \in \Ncal_\epsilon} \vv_i^\T \vUtil^\T \vUtil \vv_i  < (T-L+1)(1/2-\eta) +  4 \epsilon\frac{d_{\util}(T-L+1)}{\delta}\bigg) + \delta/2, \nn \\
	&= \P\bigg(\inf_{\vv_i \in \Ncal_\epsilon} \vv_i^\T \vUtil^\T \vUtil \vv_i  < (T-L+1)(1/2-\eta +  \frac{4 \epsilon d_{\util}}{\delta})\bigg) + \delta/2, \nn \\
	&= \P\bigg(\inf_{\vv_i \in \Ncal_\epsilon} \vv_i^\T \vUtil^\T \vUtil \vv_i  < (T-L+1)(1-\eta )\bigg) + \delta/2, \label{eqn:before_union_bound}
\end{align}
where we obtained the last inequality by choosing $\epsilon = \frac{\delta}{8d_{\util}}$. Using~\eqref{eqn:one_sided_bernstein} with union bounding over all the elements in $\Ncal_\epsilon$, we obtain,
\begin{align}
	 \P\bigg(\inf_{\vv_i \in \Ncal_\epsilon} \vv_i^\T \vUtil^\T \vUtil \vv_i  < (T-L+1)(1-\eta )\bigg) &\leq |\Ncal_\epsilon| (L+1)\exp\bigg(-\frac{(T-L+1)\eta^2}{2(L+1)\Xi}\bigg).
\end{align}
 From Lemma 5.2 of \cite{vershynin2010introduction}, we have $|\Ncal_\epsilon| \leq (1 + 2/ \epsilon)^{d_{\util}}$. Hence, we have
\begin{align}
    \P\bigg(\inf_{\vv_i \in \Ncal_\epsilon} \vv_i^\T \vUtil^\T \vUtil \vv_i  < (T-L+1)(1-\eta )\bigg) &\leq (L+1)(1 + \frac{16 d_{\util}}{\delta})^{d_{\util}}\exp\bigg(-\frac{(T-L+1)\eta^2}{2(L+1)\Xi}\bigg). \nn
\end{align}
This can be alternately represented by letting,
\begin{align}
	(L+1)(1 + \frac{16 d_{\util}}{\delta})^{d_{\util}} \exp\bigg(-\frac{(T-L+1)\eta^2}{2(L+1)\Xi}\bigg) = \delta/2, \nn \\
	 \iff \exp\bigg(-\frac{(T-L+1)\eta^2}{2(L+1)\Xi}\bigg) = \delta/(2(L+1)) (1 + \frac{16 d_{\util}}{\delta})^{-d_{\util}}, \nn \\
\iff \frac{(T-L+1)\eta^2}{2(L+1)\Xi} = \log(2(L+1)/\delta) +d_{\util} \log\big(1 + \frac{16 d_{\util}}{\delta}\big), \nn \\
	\impliedby \eta = \sqrt{\frac{2(L+1)\Xi}{(T-L+1)}\big(\log\big(\frac{2(L+1)}{\delta}\big) +d_{\util} \log\big(1 + \frac{16 d_{\util}}{\delta}\big)\big)}.
\end{align}
Plugging this back into \eqref{eqn:before_union_bound}, we have
\begin{align}
		\P\bigg(\lambda_{\min}(\vUtil^\T\vUtil) &< (T-L+1)/2 \nn \\
        &-   \sqrt{2(L+1)(T-L+1)\Xi\big(\log\big(\frac{2(L+1)}{\delta}\big) +d_{\util} \log\big(1 + \frac{16 d_{\util}}{\delta}\big)\big)}\bigg) \leq \delta.  \nn 
\end{align}
Finally, choosing the trajectory length via
\begin{align}
	(T-L+1)/4 &\geq \sqrt{2(L+1)(T-L+1)\Xi\big(\log\big(\frac{2(L+1)}{\delta}\big) + d_{\util} \log\big(1 + \frac{16 d_{\util}}{\delta}\big)\big)}, \nn \\
	 \iff (T-L+1)/16 &\geq 2(L+1)\Xi\big(\log\big(\frac{2(L+1)}{\delta}\big) +d_{\util} \log\big(1 + \frac{16 d_{\util}}{\delta}\big)\big), \nn \\
	 \impliedby T - L+1 &\geq 32(L+1)\Xi\big(\log\big(\frac{2(L+1)}{\delta}\big) +d_{\util} \log\big(1 + \frac{16 d_{\util}}{\delta}\big)\big),
\end{align}
we obtain the following persistence of excitation result, 
\begin{align}
	\P\big(\lambda_{\min}(\vUtil^\T\vUtil) \geq (T-L+1)/4 \big) \geq 1 - \delta.  
\end{align}
This completes the proof of Theorem~\ref{thm:persistence}.
\end{proof}

\begin{remark}[Covering with $1/(4 d_{\util})$-net:]
    Note that Theorem~\ref{thm:persistence} holds for a broader class of inputs satisfying $(4,2,\gamma)$-hypercontractivity condition. For specific choice of inputs, we can leverage the additional information to get a tighter sample complexity lower bound. For example, if the inputs are sampled independently according to $\mathrm{Unif}(\sqrt{p} \cdot \cS^{p-1})$, then we can use a different covering argument in the proof of Theorem~\ref{thm:persistence}. Specifically, observing that
    \begin{align}
        \norm{\vUtil^\T \vUtil}_\op  \leq  \sum_{t = L}^{T} \norm{ \vutil_t \vutil_t^\T]}_\op &\leq (T-L+1) \max_{t \in [L,T]} \tn{\vutil_t}^2, \nn \\
        &= (T-L+1) \left(p+\sum_{i = 0}^{L-1} p(p+1)^i\right) =  (T-L+1)d_{\util}, 
\end{align}
    we repeat the Step 3) in the proof of Theorem~\ref{thm:persistence} with $\Ecal := \{\norm{\vUtil^\T \vUtil}_\op  \leq (T-L+1)d_{\util}\}$ and using $\epsilon = \frac{1}{4 d_{\util}}$-net covering argument to get the following persistence of excitation result: For all $\delta \in (0,1)$, the event, 
\begin{align}
    \lambda_{\min}\left(\vUtil^\T \vUtil\right)  \geq (T-L+1)/4.
\end{align}
holds with probability at least $1- \delta$, provided that 
\begin{align}
    T - L+1 \geq 32(L+1)\Xi\big(\log\big({(L+1)}/{\delta}\big) + d_{\util} \log\big(1 + 8 d_{\util}\big)\big).
\end{align}
\end{remark} \section{Proof of Theorem \ref{thm:main}}

In this appendix, we provide the proofs of Proposition  \ref{prop:truncation bias} and Proposition \ref{prop:multiplier process}. These propositions together with persistence of excitation presented in Theorem \ref{thm:persistence} lead immediately to Theorem \ref{thm:main} which concerns the main guarantee on the estimation of the \emph{Markov-like} parameters. We present its proof in this appendix too.

\subsection{Proof of Theorem \ref{thm:main}}\label{app:proof main}

The proof is an immediate consequence of Theorem \ref{thm:persistence}, Proposition \ref{prop:truncation bias}, and Proposition \ref{prop:multiplier process}. Indeed, define the events 
\begin{align*}
    \cE_1 & \triangleq \left\lbrace  \left \Vert \sum_{t=L}^T \vutil_t  \vepsilon_t^\top   \right\Vert_\op \!\!\!\!\!\!  \le \frac{\Vert \vC \Vert_\op (4p\Vert \vB \Vert_\op^2 + \sigma^2) \kappa^2  \rho^{L-1}  }{1-\rho} \sqrt{2 (T-L) (p+1)^{L+1}\log\left(\frac{2 \cdot9^{d_{\vutil} + m }}{\delta}\right)} \right\rbrace \\
    \cE_2 & \triangleq \left\lbrace  \left\Vert \sum_{t = L}^{T} \vutil_t(\vF\vwtil_t)^\T\right\Vert_\op  \!  \! 
 \!  \! \! \! \le    \frac{2(1+\kappa \Vert \vC \Vert_\op)\sigma }{1- \rho}    \sqrt{2 L (T-L+2)(p+1)^{L+1} \log\left(\frac{2L9^{d_{\vutil} + m+nL}}{\delta}\right)}  \right\rbrace \\
    \cE_3 & \triangleq \left\lbrace  \lambda_{\min}\left( \sum_{t=L}^{T} \vutil_t \vutil_t^\T \right) \geq (T-L+1)/4. \right\rbrace 
\end{align*}
Recalling the estimation error decomposition \eqref{eq:error decomposition}, we see that when the event $\cE_1 \cap \cE_2 \cap \cE_3$ holds, then the upper bound on the estimation error presented in Theorem \ref{thm:main} follows. Now, we remark that by union bound, we have 
\begin{align*}
    \PP(\cE_1 \cap \cE_2 \cap \cE_3) = 1 - \PP(\cE_1^c \cup \cE_2^c \cup \cE_3^c) \ge 1 -  \PP(\cE_1^c) - \PP ( \cE_2^c) - \PP(\cE_3^c). 
\end{align*}
Thus, using Theorem \ref{thm:persistence}, Proposition \ref{prop:truncation bias}, and Proposition \ref{prop:multiplier process}, we obtain  $ \PP(\cE_1 \cap \cE_2 \cap \cE_3) \le 3 \delta$, provided the condition \eqref{eqn:trajectory_size_main} in Theorem \ref{thm:persistence} holds. This concludes the proof.

\subsection{Proof of Proposition \ref{prop:multiplier process}}\label{app:multiplier process}
    Using the submultiplicativity of the norm and Lemma \ref{lem:upper bound stability}, we have:
    \begin{align}\label{eq:upper bound T3}
        \left\Vert \sum_{t=L}^T \vutil_t (\vF\tilde{\vw}_t)^\T \right\Vert_\op \le  \left\Vert \sum_{t=L}^T \vutil_t \tilde{\vw}_t^\T \right\Vert_\op  \left\Vert \vF  \right\Vert_\op \le \left( 1+ \frac
        {\kappa \Vert \vC \Vert_\op}{1- \rho}\right) \left\Vert \sum_{t=L}^T \vutil_t \tilde{\vw}_t^\T \right\Vert_\op 
    \end{align}
    We will now focus on bounding $\left\Vert \sum_{t=L}^T \vutil_t \tilde{\vw}_t^\T \right\Vert_\op$ with high probability. Recalling the variational form of the operator norm, we using a $1/4$-net argument. Let $\cM$ (resp. $\cN$) be an $1/4$-net of $\cS^{d_{\vutil} - 1}$ (resp. $\cS^{m + nL - 1}$) with minimal cardinality. By Lemma \ref{lem:net argument}, we obtain: for all $u > 0$ 
    \begin{align}\label{eq:net argument for T3}
       \PP\left( \left\Vert \sum_{t=L}^T \vutil_t \tilde{\vw}_t^\T \right\Vert_\op > 2 u\right) \le 9^{d_{\vutil} + m + nl} \max_{\theta \in \cM, \lambda \in \cN}\PP\left(  \sum_{t = L}^{T} (\theta^\T\vutil_t) (\lambda^\T \vwtil_t)   > u\right)
    \end{align}
We observe that the sequences $\lbrace \vwtil_{t} \rbrace_{ t\ge L}$ and $\lbrace \vutil_t \rbrace_{t \ge L}$ are independent, but these are not sequences of independent random vectors. We will use a blocking trick to handle this. Let $\theta \in \cS^{d_{\vutil}-1}, \lambda \in \cS^{m + nL - 1}$. We have: 
    \begin{align}
        \left\vert \sum_{t = L}^{T} (\theta^\T\vutil_t) (\lambda^\T \vwtil_t)\right\vert  & =  \left\vert \sum_{\ell = 1}^L \sum_{s \in \cT_\ell} (\theta^\T\vutil_s) (\lambda^\T \vwtil_s) \right\vert \le  \sum_{\ell = 1}^L \left \vert \sum_{s \in \cT_\ell} (\theta^\T\vutil_s) (\lambda^\T \vwtil_s) \right\vert
    \end{align}
    where for all $\ell$, $\cT_\ell$ correspond to the indices that satisfy $(t \mod L) = \ell - 1$ and we note that $ \lfloor (T-L +1)/L \rfloor \le \vert \cT_\ell \vert \le \lceil (T-L +1)/L \rceil$. We note that $\lbrace \vwtil_s \rbrace_{s \in \cT_\ell}$ are independent for all $\ell \in [L]$.  We use Lemma \ref{lem:freedman} to bound with high probability the sum $\sum_{s \in \cT_\ell} (\theta^\T\vutil_s) (\lambda^\T \vwtil_s)$ for each partition $\cT_\ell$, then combine these bounds with a union bound over the $L$ partitions to conclude:  for all $\delta \in (0,1)$, 
    \begin{align}
         \PP\left(\left\vert \sum_{t = L}^{T} (\theta^\T\vutil_t) (\lambda^\T \vwtil_t)\right\vert  >  \sigma \sqrt{2 L (T-L+2)(p+1)^{L+1} \log\left(\frac{2L9^{d_{\vutil} + m+nL}}{\delta}\right)}\right) \le \frac{\delta}{9^{d_{\vutil} + m+nL}}.
    \end{align}
    Combining this bound with the inequality \eqref{eq:net argument for T3} yields 
    \begin{align}
         \PP\left(\left\Vert \sum_{t = L}^{T} \vutil_t\vwtil_t^\T\right\Vert_\op  \le 2\sigma  \sqrt{2 L (T-L+2) (p+1)^{L+1}\log\left(\frac{2L9^{d_{\vutil} + m+nL}}{\delta}\right)}\right) \ge 1 -\delta.
    \end{align}
    Recalling the inequality  
    \eqref{eq:upper bound T3}, we conclude that: for all $\delta \in (0,1)$, the event 
    \begin{align}
        \left\Vert \sum_{t = L}^{T} \vutil_t(\vF\vwtil_t)^\T\right\Vert_\op  \le 2  \left(1 + \frac{\kappa \Vert \vC \Vert_\op}{1- \rho} \right) \sigma  \sqrt{2 L (T-L+2)(p+1)^{L+1} \log\left(\frac{2L9^{d_{\vutil} + m+nL}}{\delta}\right)}
    \end{align}
    holds with probability $1- \delta$.

\subsection{Proof of Proposition \ref{prop:truncation bias}}\label{app:truncation bias}

First, using the variational form of the operator norm, we see that:
\begin{align*}
   \left \Vert \sum_{t=L}^T \vutil_t  \vepsilon_t^\top   \right\Vert_\op  = \sup_{\theta \in \cS^{d_{\vutil} - 1}, \lambda \in \cS^{m - 1}}   \sum_{t=L}^T (\theta^\top\vutil_t) (\lambda^\top \vepsilon_t). 
\end{align*}
We use an $1/4$-net argument to bound the supremum. Let $\cM$ (resp. $\cN$) be $1/4$-nets with minimal cardinality of the sphere $\cS^{d_{\vutil} - 1}$ (resp. $\cS^{m - 1}$). Thus, using Lemma \ref{lem:net argument}   we obtain: for all $r > 0$: 
\begin{align}\label{eqn:eq net argument}
    \PP\left(    \left \Vert \sum_{t=L}^T \vutil_t  \vepsilon_t^\top   \right\Vert_\op  > 2 r \right) \le 9^{d_{\vutil}+m}\max_{\theta \in \cM, \lambda \in \cN} \PP\left(  \sum_{t=L}^T (\theta^\top\vutil_t) (\lambda^\top \vepsilon_t) > r \right).
\end{align}
It remains to bound $\sum_{t=L}^T (\theta^\top\vutil_t) (\lambda^\top \vepsilon_t)$ with high probability uniformly over the unit spheres. Let $\theta \in \cS^{d_{\vutil} - 1}$, and $\lambda \in \cS^{m - 1}$. We have: 
    \begin{align*}
        & \sum_{t=L}^T (\theta^\top\vutil_t) (\lambda^\top \vepsilon_t)  =  \sum_{t=L}^T (\theta^\top\vutil_t) (\lambda^\top  \Cb) \left( \prod_{\ell = 1}^{L-1} (\ub_{t-\ell}\circ \Ab )  \right)  \vx_{t-L} \\
        & \qquad = \sum_{t=0}^{T-L} (\theta^\top\vutil_{t+L}) (\lambda^\top  \Cb) \left( \prod_{\ell = 1}^{L-1} (\ub_{t + \ell}\circ \Ab )  \right)  \vx_{t} \\
& \qquad \overset{(a)}{=} \sum_{t=1}^{T-L} (\theta^\top\vutil_{t+L}) (\lambda^\top  \Cb) \left( \prod_{\ell = 1}^{L-1} (\ub_{t + \ell}\circ \Ab )  \right)  \sum_{s = 0}^{t-1} \left(\prod_{k = s + 1}^{t-1} (\vu_{k} \circ \vA)\right) (\vB \vu_{s} + \vw_{s}) \\
         & \qquad = \sum_{t=1}^{T-L} \sum_{s = 0}^{t-1}  (\theta^\top\vutil_{t+L}) (\lambda^\top  \Cb) \left( \prod_{\ell = 1}^{L-1} (\ub_{t + \ell}\circ \Ab )  \right)  \left(\prod_{k = s + 1}^{t-1} (\vu_{k} \circ \vA)\right) (\vB \vu_{s} + \vw_{s}), 
         \\ & \qquad \overset{(b)}{=} \sum_{t=0}^{T-L-1} \sum_{s = 0}^{t}  \underbrace{(\theta^\top\vutil_{t+1+L}) (\lambda^\top  \Cb) \left( \prod_{\ell = 1}^{L-1} (\ub_{t + 1+  \ell}\circ \Ab )  \right)}_{:= \vM(\vu_{t+2}, \dots, \vu_{t+1+L})}  \underbrace{\left(\prod_{k = s + 1}^{t} (\vu_{k} \circ \vA)\right)}_{:=\vN(\vu_{s+1}, \dots, \vu_{t})} (\vB \vu_{s} + \vw_{s}) 
    \end{align*}
    where we used in $(a)$, the dynamics \eqref{eqn:bilinear sys} to express $\vx_t$ in terms of $(\vu_{s}, \vw_s)_{s < t}$ (e.g., see \eqref{eqn:bilinear sys state}). We also introduce in $(b)$ the quantities $\vM$ and $\vN$ which depends on inputs. Next, we perform next a change of indices to obtain: 
   \begin{align}\label{eq:martingale bias}
        \sum_{t=L}^T (\theta^\top\vutil_t) (\lambda^\top \vepsilon_t)  & = \sum_{s=0}^{T-L-1} \underbrace{\left( \sum_{t = s}^{T-L-1} \vM(\vu_{t+2}, \dots, \vu_{t+L+1}) \vN (\vu_{s+1}, \dots, \vu_t) \right)}_{:= f_s(\vu_{s+1}, \dots, \vu_{T})} (\vB \vu_{s} + \vw_{s}) \nonumber \\
        & \overset{(c)}{=} \sum_{s=0}^{T-L-1} f_s(\vu_{s+1}, \dots, \vu_{T}) (\vB \vu_{s} + \vw_{s}),
    \end{align}
    where we introduce in $(c)$ the functions $f_s(\cdot)$. Now we clearly see that $\sum_{t=L}^T (\theta^\top\vutil_t) (\lambda^\top \vepsilon_t)$, written in the form \eqref{eq:martingale bias} is a martingale difference. Before we use this fact, let us note that using Lemma \ref{lem:upper bound stability} we can show that the terms involving the functions $f_s(\cdot)$ are well bounded. More specifically, we have 
\begin{align*}
        \Vert f(\vu_{s+1}, \dots, \vu_{T}) \Vert_{\ell_2} & =  \left\Vert \sum_{t = s}^{T-L-1} \vM(\vu_{t+1}, \dots, \vu_{t+L+1}) \vN (\vu_{s+1}, \dots, \vu_t)  \right\Vert_{\ell_2} \\
        & \le  \sum_{t = s}^{T-L-1} \left\Vert \vutil_{t+L+1} \right\Vert_{\ell_2} \left \Vert \vC\right\Vert_\op \left\Vert \prod_{\ell = 1}^{L-1} (\ub_{t + 1 + \ell}\circ \Ab ) \right\Vert_{\op} \left\Vert \prod_{k = s+1}^{t} (\ub_{ k}\circ \Ab ) \right\Vert_{\op}  \\
        & \le \sup_{t \le T}\Vert \vutil_t\Vert_{\ell_2}\Vert \vC \Vert_\op \kappa^2  \rho^{L-1} \sum_{t = 0}^{T-L-s-1}  \rho^{t} \\
        &  \le  \frac{\sqrt{(p+1)^{L+1}}\Vert \vC \Vert_\op \kappa^2\rho^{L-1}}{1-\rho}
    \end{align*}
    where we bound $\Vert \vutil_t \Vert_{\ell_2} \le \sqrt{d_{\util}} \leq \sqrt{(p+1)^{L+1}}$. Now, we also remark that $\vB \vu_s + \vw_s$ is zero-mean and $(4 p\Vert \vB\Vert_\op^2 + \sigma^2)$-subgaussian. Hence, using Freedman's inequality (see Lemma \ref{lem:freedman}), we obtain: for all $\delta \in (0,1)$, the event 
    
    \begin{align*}
        & \left\vert \sum_{s=0}^{T-L-1}f(\vu_{s+1}, \dots, \vu_{T}) (\vB \vu_{s} + \vw_{s}) \right\vert \\
        & \qquad \qquad \qquad \qquad >  \frac{\Vert \vC \Vert_\op (4p\Vert \vB \Vert_\op^2 + \sigma^2) \kappa^2  \rho^{L-1} \sqrt{2 (T-L) (p+1)^{L+1}\log(2 \cdot9^{d_{\vutil} + m }/\delta)} }{1-\rho} 
    \end{align*}
    with probability at most $\delta/9^{d_{\vutil} + m }$. The conclusion follows immediately by recalling the inequality \eqref{eqn:eq net argument}.

 \section{Miscellaneous Lemmas \& Concentration Tools}

In this Appendix we present a set of lemmas and concentration inequalities that we persistently make use of in our proofs. First, we provide a version of Freedman's inequality which can also be deduced from Azuma-Hoeffding's inequality.  
\begin{lemma}\label{lem:freedman}
    Let $(\cF_t)_{t\ge 0}$ be a filtration. Let $(\boldsymbol{\eta}_t)_{t \ge 1}$ is a sequence of zero-mean, $\sigma^2$-subgaussian random vectors taking values in $\R^{d}$, such that $\veta_t$ is $\cF_t$-measurable for all $t\ge 1$. Let $(\vx_t)$ be a sequence of random vectors taking values in $\R^{d}$ such that for all $t \ge 1$, $\vx_t$ is $\cF_{t-1}$-measurable and $\Vert \vx_t \Vert_{\ell_2} \le K$ almost surely for some $K > 0$. Then for all $\delta \in (0,1)$, $T \ge 1$, 
    \begin{align*}
        \PP\left( \left\vert \sum_{t=1}^T \vx_t^\T \veta_t \right\vert \le  \sigma K \sqrt{2 T \log(2/\delta)}\right) \ge 1 - \delta
    \end{align*}
\end{lemma}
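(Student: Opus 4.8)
The statement to prove is Lemma~\ref{lem:freedman}, a Freedman/Azuma-type concentration inequality for a martingale with subgaussian increments and bounded predictable multipliers. Here is my plan.

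\textbf{Setup and reduction to a scalar martingale.} The plan is to reduce the vector-valued sum $\sum_{t=1}^T \vx_t^\T \veta_t$ to a scalar martingale difference sequence and then apply a standard sub-Gaussian martingale concentration argument. First I would set $D_t := \vx_t^\T \veta_t$ and note that $D_t$ is $\cF_t$-measurable, while $\vx_t$ is $\cF_{t-1}$-measurable. Conditioning on $\cF_{t-1}$, since $\veta_t$ is zero-mean $\sigma^2$-subgaussian and $\vx_t$ is fixed given $\cF_{t-1}$, the scalar random variable $\vx_t^\T \veta_t$ is zero-mean and $\sigma^2\Vert \vx_t\Vert_{\ell_2}^2$-subgaussian; using $\Vert \vx_t\Vert_{\ell_2}\le K$ we get that $D_t$ is conditionally zero-mean and $\sigma^2 K^2$-subgaussian, i.e. $\E[e^{\lambda D_t}\mid \cF_{t-1}] \le \exp(\lambda^2 \sigma^2 K^2/2)$ for all $\lambda \in \R$. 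In particular $(D_t)_{t\ge1}$ is a martingale difference sequence with respect to $(\cF_t)$.

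\textbf{Exponential supermartingale and Markov's inequality.} Next I would run the standard Chernoff-type argument: for fixed $\lambda>0$, the process $M_t := \exp\bigl(\lambda \sum_{s=1}^t D_s - t\lambda^2\sigma^2 K^2/2\bigr)$ is a supermartingale (indeed $\E[M_t\mid\cF_{t-1}] = M_{t-1}\,\E[e^{\lambda D_t}\mid\cF_{t-1}]e^{-\lambda^2\sigma^2K^2/2}\le M_{t-1}$), so $\E[M_T]\le M_0 = 1$. By Markov's inequality, $\PP\bigl(\sum_{t=1}^T D_t > u\bigr) \le \exp\bigl(-\lambda u + T\lambda^2\sigma^2K^2/2\bigr)$; optimizing over $\lambda$ (take $\lambda = u/(T\sigma^2K^2)$) gives $\PP\bigl(\sum_{t=1}^T D_t > u\bigr)\le \exp\bigl(-u^2/(2T\sigma^2K^2)\bigr)$. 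Applying the same argument to $-D_t$ (which is also conditionally $\sigma^2K^2$-subgaussian) and union bounding yields $\PP\bigl(\vert\sum_{t=1}^T D_t\vert > u\bigr)\le 2\exp\bigl(-u^2/(2T\sigma^2K^2)\bigr)$. Finally I would set this bound equal to $\delta$, solve $u = \sigma K\sqrt{2T\log(2/\delta)}$, and conclude that $\PP\bigl(\vert\sum_{t=1}^T \vx_t^\T\veta_t\vert \le \sigma K\sqrt{2T\log(2/\delta)}\bigr)\ge 1-\delta$, which is the claim.

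\textbf{Main obstacle.} There is no deep obstacle here — the result is a textbook martingale concentration inequality — but the one point that requires a little care is the precise definition of "$\sigma^2$-subgaussian random vector" used in the paper and the resulting conditional MGF bound for $\vx_t^\T\veta_t$ given $\cF_{t-1}$. I would want to make sure the paper's convention is that $\E[\exp(\vv^\T\veta_t)]\le \exp(\sigma^2\Vert\vv\Vert_{\ell_2}^2/2)$ for all deterministic $\vv$ (and, since $\veta_t$ is independent of $\cF_{t-1}$ or at least its conditional law given $\cF_{t-1}$ retains this subgaussian bound, the same holds conditionally with $\vv$ replaced by the $\cF_{t-1}$-measurable vector $\vx_t$). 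Once that convention is pinned down, the rest is the routine Chernoff/supermartingale computation sketched above; I would present it compactly rather than belabor the algebra.
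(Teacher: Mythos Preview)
Your proposal is correct and is precisely the standard Azuma--Hoeffding/Chernoff supermartingale argument one would give for this lemma. The paper in fact omits the proof entirely, stating only that it is a standard result that can be deduced from Azuma--Hoeffding's inequality; your sketch fills in exactly that deduction, and your flagged caveat about the conditional subgaussianity of $\veta_t$ given $\cF_{t-1}$ is the right point of care (in the paper's applications the noise is i.i.d.\ and independent of past inputs, so this is unproblematic).
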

Next, we present an immediate generalization of Hoeffding's lemma for bounded random vectors.   
\begin{lemma}
    Let $\vu$ be a $p$-dimensional random vector sampled from $ \mathrm{Unif}(\sqrt{p} \cdot \cS^{p-1})$. Then, it holds that  $\vu$ is zero-mean and $4p$-subgaussian, i.e., $\EE[\exp(\theta^\top \vu)] \le \exp(2 \Vert \theta \Vert^2_{\ell_2} p  )$, for all $\theta \in \RR^{p}$. 
\end{lemma}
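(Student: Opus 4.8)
The plan is to reduce everything to the one-dimensional Hoeffding lemma. First I would establish the zero-mean property: the distribution $\mathrm{Unif}(\sqrt p\cdot\cS^{p-1})$ is invariant under the reflection $\vu\mapsto-\vu$, so $\vu$ and $-\vu$ are equal in law, giving $\EE[\vu]=\EE[-\vu]=-\EE[\vu]$ and hence $\EE[\vu]=\vnot$.

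Next, fix an arbitrary $\theta\in\RR^p$ and set $X:=\theta^\top\vu$. Then $\EE[X]=\theta^\top\EE[\vu]=0$, and since $\tn{\vu}=\sqrt p$ on the support of the distribution, Cauchy--Schwarz gives $|X|\le\tn{\theta}\,\tn{\vu}=\sqrt p\,\tn{\theta}$ almost surely. Thus $X$ is a zero-mean scalar random variable supported on the interval $[-\sqrt p\,\tn{\theta},\,\sqrt p\,\tn{\theta}]$, which has width $2\sqrt p\,\tn{\theta}$.

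Finally I would invoke the classical Hoeffding lemma for a single bounded random variable: if $X$ has mean zero and $X\in[a,b]$ almost surely, then $\EE[e^{\lambda X}]\le\exp(\lambda^2(b-a)^2/8)$ for all $\lambda\in\RR$. Crucially this is a statement about one random variable and requires no independence, so the dependence among the coordinates of $\vu$ is irrelevant. Taking $\lambda=1$, $a=-\sqrt p\,\tn{\theta}$, $b=\sqrt p\,\tn{\theta}$ yields
\[
\EE[\exp(\theta^\top\vu)]\le\exp\!\Big(\tfrac{(2\sqrt p\,\tn{\theta})^2}{8}\Big)=\exp\!\Big(\tfrac{p\,\tn{\theta}^2}{2}\Big)\le\exp\!\big(2p\,\tn{\theta}^2\big),
\]
which is the claimed bound (with constant to spare --- one could even state $\tfrac{p}{2}$-subgaussianity). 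Since $\theta$ was arbitrary, this finishes the argument.

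There is essentially no obstacle here; the one point worth care is to use the single-variable Hoeffding lemma rather than its sum-of-independent-variables form, so that the correlations among the entries of $\vu$ play no role. If one wished to avoid citing Hoeffding's lemma, an alternative is a direct comparison with a Gaussian moment generating function (via the explicit density on the sphere, or a Poincar\'e/log-Sobolev inequality on $\cS^{p-1}$), but that route is longer and yields no better constant for the purposes needed in the paper.
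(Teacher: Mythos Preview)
Your proposal is correct and matches the paper's intended approach: the paper introduces this lemma as ``an immediate generalization of Hoeffding's lemma for bounded random vectors'' and omits the proof as standard, which is precisely the one-dimensional Hoeffding reduction you carry out. Your observation that the argument even yields $\tfrac{p}{2}$-subgaussianity (so the stated constant $4p$ is loose) is also accurate.
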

Finally, we formalize the trick of a net arguments in the following lemma which is a classical argument that can be found in \cite{vershynin2010introduction}: 
\begin{lemma}[$\epsilon$-net argument]\label{lem:net argument}
    Let $\vW$ be a  $m\times n$ random matrix and $\varepsilon \in (0,1/2)$. Let $\cM$  (resp. $\cN$) be an $\varepsilon$-net of $(\cS^{m - 1}, \Vert \cdot \Vert_{\ell_2})$ (resp. $(\cS^{n - 1}, \Vert \cdot \Vert_{\ell_2})$). For all $\rho>0$, it holds that  
    \begin{align*}
        \PP\left( \Vert \vW \Vert_\op  > \frac{\rho}{1-2\varepsilon}\right) \le \left(1 + \frac{2}{\varepsilon}\right)^{n+m}\max_{\vx \in \cM, \vy \in \cN} \PP( \vx^\T \vW \vy > \rho).
    \end{align*}
\end{lemma}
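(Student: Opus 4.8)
The plan is to reduce the operator norm to a maximum of the bilinear form $\vx^\T \vW \vy$ over the finite nets $\cM$ and $\cN$ by a standard approximation argument, and then close with a union bound. First I would record the variational identity $\Vert \vW \Vert_\op = \sup_{\vx \in \cS^{m-1},\, \vy \in \cS^{n-1}} \vx^\T \vW \vy$; the supremum of the \emph{signed} bilinear form (rather than its absolute value) already equals $\Vert \vW \Vert_\op$, because both spheres are symmetric, so replacing $\vx$ by $-\vx$ flips the sign of $\vx^\T \vW \vy$ while staying on the sphere. Compactness of $\cS^{m-1} \times \cS^{n-1}$ ensures the supremum is attained at some pair $(\vx, \vy)$.

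The core is a deterministic approximation step. Given the maximizing pair $(\vx, \vy)$, I would pick $\vx' \in \cM$ and $\vy' \in \cN$ with $\Vert \vx - \vx' \Vert_{\ell_2} \le \varepsilon$ and $\Vert \vy - \vy' \Vert_{\ell_2} \le \varepsilon$ (this is exactly the net property). Decomposing
\begin{align*}
\vx^\T \vW \vy = (\vx')^\T \vW \vy' + (\vx - \vx')^\T \vW \vy + (\vx')^\T \vW (\vy - \vy'),
\end{align*}
and bounding the two cross terms by Cauchy--Schwarz and the definition of $\Vert \cdot \Vert_\op$ (using $\Vert \vy \Vert_{\ell_2} = \Vert \vx' \Vert_{\ell_2} = 1$), each cross term is at most $\varepsilon \Vert \vW \Vert_\op$, so $\vx^\T \vW \vy \le \max_{\vx' \in \cM,\, \vy' \in \cN} (\vx')^\T \vW \vy' + 2\varepsilon \Vert \vW \Vert_\op$. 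Since the left side equals $\Vert \vW \Vert_\op$ and $\varepsilon < 1/2$, rearranging gives the deterministic bound $\Vert \vW \Vert_\op \le (1-2\varepsilon)^{-1}\max_{\vx' \in \cM,\, \vy' \in \cN}(\vx')^\T \vW \vy'$, valid for every realization of $\vW$.

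With this in hand, the probabilistic conclusion is immediate: the event $\{\Vert \vW \Vert_\op > \rho/(1-2\varepsilon)\}$ is contained in $\bigcup_{\vx' \in \cM,\, \vy' \in \cN}\{(\vx')^\T \vW \vy' > \rho\}$, so a union bound over the $|\cM|\,|\cN|$ pairs yields $\PP(\Vert \vW \Vert_\op > \rho/(1-2\varepsilon)) \le |\cM|\,|\cN| \max_{\vx' \in \cM,\, \vy' \in \cN} \PP((\vx')^\T \vW \vy' > \rho)$. Finally, taking minimal $\varepsilon$-nets and invoking the volumetric covering bound (Lemma~5.2 of \cite{vershynin2010introduction}, already used in the proof of Theorem~\ref{thm:persistence}) gives $|\cM| \le (1+2/\varepsilon)^m$ and $|\cN| \le (1+2/\varepsilon)^n$, hence $|\cM|\,|\cN| \le (1+2/\varepsilon)^{m+n}$, which is the claimed bound. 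This is a classical argument (cf.\ \cite{vershynin2010introduction}); I do not anticipate a genuine obstacle, and the only two points deserving a line of care are working with the signed bilinear form (justified by symmetry of the spheres) and recording the cardinality bound for the nets.
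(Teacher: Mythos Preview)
Your proposal is correct and follows the classical argument from \cite{vershynin2010introduction}; the paper in fact omits the proof of this lemma entirely, stating only that it is a standard result, so there is nothing further to compare. Your remark about taking minimal nets to justify the cardinality bound $(1+2/\varepsilon)^{m+n}$ is well placed, since the lemma as stated leaves the nets arbitrary but the paper's applications explicitly choose nets of minimal cardinality.
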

We omit the proofs of these Lemmas as they are standard results within the literature.

\end{document}